\newtheorem{theorem}{Theorem}
\newtheorem{lemma}{Lemma}
\newtheorem{assumption}{Assumption}
\newtheorem{definition}{Definition}
\title{Estimating the Effects of Continuous-valued Interventions using Generative Adversarial Networks}
\author{Ioana Bica\thanks{Equal contribution.} \\
        Department of Engineering Science\\
        University of Oxford, Oxford, UK\\
        The Alan Turing Institute, London, UK \\
        \texttt{ioana.bica@eng.ox.ac.uk} \\
        \And
        James Jordon$^*$ \\
        Department of Engineering Science\\
        University of Oxford, Oxford, UK\\
        \texttt{james.jordon@wolfson.ox.ac.uk} \\
        \And
        Mihaela van der Schaar \\
        University of Cambridge, Cambridge, UK\\
        University of California, Los Angeles, USA \\
        The Alan Turing Institute, London, UK \\
        \texttt{mv472@cam.ac.uk} \\
}
\begin{document}

\maketitle

\begin{abstract}
 While much attention has been given to the problem of estimating the effect of discrete interventions from observational data, relatively little work has been done in the setting of continuous-valued interventions, such as treatments associated with a dosage parameter. In this paper, we tackle this problem by building on a modification of the generative adversarial networks (GANs) framework. Our model, SCIGAN, is flexible and capable of simultaneously estimating counterfactual outcomes for several different continuous interventions. The key idea is to use a significantly modified GAN model to learn to generate counterfactual outcomes, which can then be used to learn an inference model, using standard supervised methods, capable of estimating these counterfactuals for a new sample.  To address the challenges presented by shifting to continuous interventions, we propose a novel architecture for our discriminator - we build a hierarchical discriminator that leverages the structure of the continuous intervention setting. Moreover, we provide theoretical results to support our use of the GAN framework and of the hierarchical discriminator. In the experiments section, we introduce a new semi-synthetic data simulation for use in the continuous intervention setting and demonstrate improvements over the existing benchmark models.
\end{abstract}

\section{Introduction} \label{sec:intro}
Estimating the personalised effects of interventions is crucial for decision making in many domains such as medicine, education, public policy and advertising. Such domains have a wealth of observational data available. Most of the methods developed in the causal inference literature focus on learning the counterfactual outcomes of discrete interventions, such as binary or categorical treatments\footnote{For ease of exposition, we will sometimes refer to interventions as treatments and to the associated continuous parameter as the dosage throughout the paper.} \cite{bertsimas2017personalized, alaa2017deep, alaa2017bayesian, athey2016recursive, wager2018estimation, yoon2018ganite, alaa2018limits, zhang2020learning, bica2020real}. Unfortunately, in many cases, deciding how to intervene involves not only deciding which intervention to make (e.g. whether to treat cancer with radiotherapy, chemotherapy or surgery) but also deciding on the value of some continuous parameter associated with intervening (e.g. the dosage of radiotherapy to be administered). In medicine there are many examples of treatments that are associated with a continuous dosage parameter (such as vasopressors \cite{dopp2013high}). In the medical setting, using a high dosage for a treatment can lead to toxic effects while using a low dosage can result in no effect on the patient outcome {\cite{wang2017cardiac, zame2020machine}}. In other domains, there are many examples of continuous interventions, such as the duration of an education or job training program, the frequency of an exposure or the price used in an advert. Naturally, being able to estimate the effect of these continuous interventions will aid in the decision making process.

Learning from observational data already presents significant challenges when there is only a single intervention (and thus the decision is binary - whether to intervene or not). As explained in \cite{ite_tutorial}, in an observational dataset, only the factual outcome is present - the ``counterfactual" outcomes are not observed. This problem is exacerbated in the setting of continuous interventions where the number of counterfactuals is no longer even finite. Moreover, the decision to intervene is non-random and instead is assigned according to the features associated with each sample. Due to the continuous nature of the interventions, adjusting for selection bias is significantly more complex than for binary (or even multiple) interventions. Thus, standard methods for adjusting for selection bias for discrete treatments cannot be easily extended to handle bias in the continuous setting.

\begin{wrapfigure}{t}{0.55\textwidth}
    \vspace{-2mm}
    \centering\
    \includegraphics[width=0.55\textwidth]{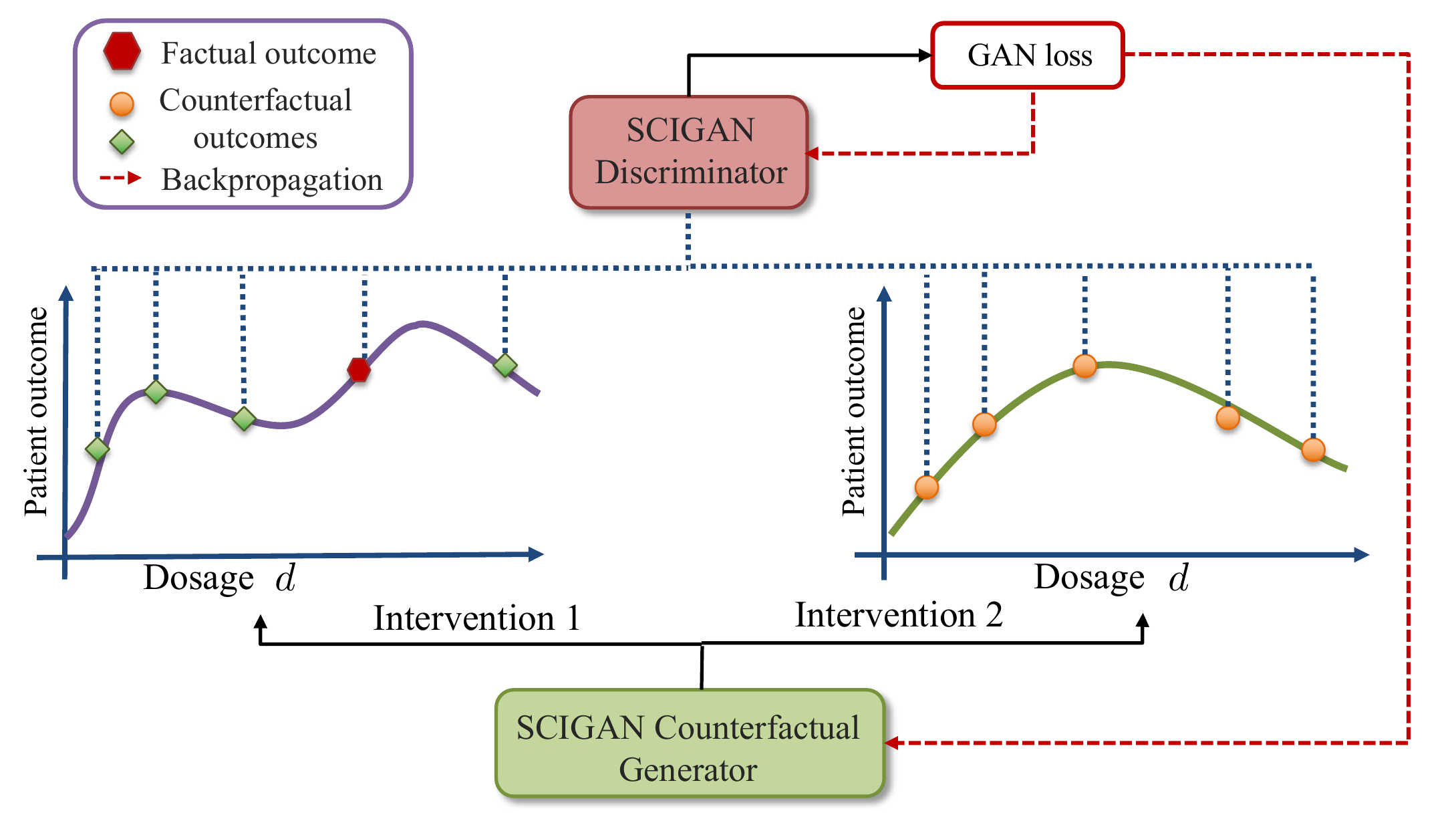}
    \caption{Overview of GAN framework used as part of SCIGAN  for learning the distribution of the counterfactual outcomes.  }
    \label{fig:curves}
    \vspace{-4mm}
\end{wrapfigure}

We propose SCIGAN (eStimating the effects of Continuous Interventions using GANs). We build on the GAN framework of \cite{goodfellow2014generative} to learn the distribution of the unobserved counterfactuals. GANs have already been used in GANITE \cite{yoon2018ganite} to generate the unobserved counterfactual outcomes for discrete interventions. The intuition is that if a counterfactual generator and discriminator are trained adversarially, then the generator can fool the discriminator (i.e. the discriminator will not be able to correctly identify the factual outcome) by generating counterfactuals according to their true distribution. Unfortunately, no theoretical work was provided in \cite{yoon2018ganite} to back up this intuition. A key contribution of this paper is to provide theoretical results that justify using the GAN framework to learn to generate counterfactual outcomes; these results also apply to GANITE.

GANITE itself presents a significant modification to the original GAN framework - rather than the discriminator discriminating between entirely real or entirely fake samples, the discriminator is attempting to identify the real component from a vector containing the real (factual) outcome from the dataset and the fake (counterfactual) outcomes generated by the generator. SCIGAN inherits this key difference from a standard GAN. However, beyond our theoretical contribution, we propose significant changes to the generator and discriminator in order to tackle the more complex problem of estimating outcomes of continuous interventions.

We define a discriminator that acts on a finite set of points from each generated response curve (rather than on entire curves), as shown in Fig. \ref{fig:curves}. We draw on ideas from \cite{zaheer2017deepsets} to ensure that our discriminator acts as a function of a set, rather than of a vector. Moreover, we focus on the setting of interventions {\em each} with an associated continuous parameter. In this setting, we propose a {\em hierarchical} discriminator which breaks down the job of the discriminator into determining the factual intervention and determining the factual parameter using separate networks. We show in the experiments section that this approach significantly improves performance and is more stable than using a single network discriminator. We also model the generator as a multi-task deep network capable of taking a continuous parameter as an input; this gives us the flexibility to learn heterogeneous response curves for the different interventions.

Our contributions in this paper are 4-fold: (1) we propose SCIGAN, a significantly modified GAN framework, capable of estimating outcomes for continuous and many-level-discrete interventions, (2) we provide theoretical justification for both the use of a GAN framework and a hierarchical discriminator, (3) we propose novel architectures for each of our networks, (4) we propose a new semi-synthetic data simulation for use in the continuous intervention setting. We show, using semi-synthetic experiments, that our model outperforms existing benchmarks.

\section{Related work} \label{sec:relw}
Methods for estimating the outcomes of treatments with a continuous dosage from observational data make use of the generalized propensity score (GPS) \cite{imbens2000role, imai2004causal, hirano2004propensity} or build on top of balancing methods for multiple treatments. \cite{schwab2019learning} developed a neural network based method to estimate counterfactuals for multiple treatments and continuous dosages. The proposed Dose Response networks (DRNets) in \cite{schwab2019learning} consist of a three level architecture with shared layers for all treatments, multi-task layers for each treatment and additional multi-task layers for dosage sub-intervals. Specifically, for each treatment $w$, the dosage interval $[a_w, b_w]$ is subdivided into $E$ \textit{equally} sized sub-intervals and a multi-task head is added for each sub-interval. This is an extension of the architecture in \cite{shalit2017estimating}. However, the main advantage of using multi-task heads for dosage intervals would be the added flexibility in the model to learn potentially very different functions over different regions of the dosage interval. DRNets do not determine these intervals dynamically and thus much of this flexbility is lost. Our approach (using GANs to generate counterfactuals) fundamentally differs from DRNets (supervised learning with bias-adjustment) and we demonstrate experimentally that SCIGAN outperforms GPS and DRNets.

For a discussion of works that address treatment-response estimation without a dosage parameter, see Appendix \ref{app:exprelwork}. Note that for such methods we cannot treat the dosage as an input due to the bias associated with its assignment. In Appendix A, we also describe the relationships between our work (causal inference for continuous interventions) and policy optimization with continuous treatments.

\section{Problem formulation} \label{sec:prob}
We consider receiving observations of the form $(\mathbf{x}^i, t_f^i, y_f^i)$ for $i = 1, ..., N$, where, for each $i$, these are independent realizations of the random variables $(\mathbf{X}, T_f, Y_f)$. We refer to $\mathbf{X}$ as the feature vector lying in some feature space $\mathcal{X}$, containing pre-treatment covariates (such as age, weight and lab test results). The treatment random variable, $T_f$, is in fact a pair of values $T_f = (W_f, D_f)$ where $W_f \in \mathcal{W}$ corresponds to the {\em type} of treatment being administered (e.g. chemotherapy or radiotherapy) which lies in the discrete space of $k$ treatments, $\mathcal{W} = \{w_1, ..., w_k\}$, and $D_f$ corresponds to the {\em dosage} of the treatment (e.g. number of cycles, amount of chemotherapy, intensity of radiotherapy), which, for a given treatment $w$ lies in the corresponding treatment's dosage space, $\mathcal{D}_w$ (e.g. the interval $[0, 1]$). We define the set of all treatment-dosage pairs to be $\mathcal{T} = \{(w, d) : w \in \mathcal{W}, d \in \mathcal{D}_w\}$.

Following Rubin's potential outcome framework \cite{rubin1984bayesianly}, we assume that for all treatment-dosage pairs, $(w, d)$, there is a potential outcome $Y(w, d) \in \mathcal{Y}$ (e.g. 1-year survival probability). The {\em observed} outcome is then defined to be $Y_f = Y(W_f, D_f)$. We will refer to the unobserved (potential) outcomes as {\em counterfactuals}.

The goal is to derive {\em unbiased} estimates of the potential outcomes for a given set of input covariates:
\begin{equation}
    \mu(t, \mathbf{x}) = \mathbb{E}[Y(t) | \mathbf{X} = \mathbf{x}]
\end{equation}
for each $t \in \mathcal{T}$, $\mathbf{x} \in \mathcal{X}$. We refer to $\mu(\cdot)$ as the individualised dose-response function. A table summarising our notation is given in Appendix \ref{app:notation}. In order to ensure that this quantity is equal to $\mathbb{E}[Y | \mathbf{X} = \mathbf{x}, T = t]$ and that the dose-response function is identifiable from the observational data, we require the following two assumptions.

\begin{assumption}(Unconfoundedness)
The treatment assignment, $T_f$, and potential outcomes, $Y(w, d)$, are conditionally independent given the covariates $\mathbf{X}$, i.e.
$\{Y(w, d) | w \in \mathcal{W}, d \in \mathcal{D}_w\} \perp \!\!\! \perp T_f | \mathbf{X}$.
\end{assumption}

\begin{assumption}(Overlap)
$\forall \mathbf{x} \in \mathcal{X}$ such that $p(\mathbf{x}) > 0$, we have $1 > p(t|\mathbf{x}) > 0$ for each  $t \in \mathcal{T}$. 
\end{assumption}

\section{SCIGAN} \label{sec:model}
We propose estimating $\mu$ by first training a generator to generate response curves for each sample {\em within} the training dataset. The learned generator can then be used to train an inference network using standard supervised methods. We build on the idea presented in \cite{yoon2018ganite}, using a modified GAN framework to generate potential outcomes conditional on the observed features, treatment and factual outcome. Several changes must be made to both the generator and discriminator architectures and learning paradigms in order to produce a model capable of handling the dose-response setting.

\subsection{Counterfactual Generator}
Our generator, $\mathbf{G} : \mathcal{X} \times \mathcal{T} \times \mathcal{Y} \times \mathcal{Z} \to \mathcal{Y}^{\mathcal{T}}$ takes features, $\mathbf{x} \in \mathcal{X}$, factual outcome, $y_f \in \mathcal{Y}$, received treatment and dosage, $t_f = (w_f, d_f) \in \mathcal{T}$, and some noise, $\mathbf{z} \in \mathcal{Z}$ (typically multivariate uniform or Gaussian), as inputs. The output will be a dose-response curve for each treatment (as shown in Fig. \ref{fig:curves}), so that the output is a function from $\mathcal{T}$ to $\mathcal{Y}$, i.e. $\mathbf{G}(\mathbf{x}, t_f, y_f, \mathbf{z})(\cdot) : \mathcal{T} \to \mathcal{Y}$. We can then write \begin{equation}
    \hat{y}_{cf}(t) = \mathbf{G}(\mathbf{x}, t_f, y_f, \mathbf{z})(t)
\end{equation}
as our generated counterfactual outcome for treatment-dosage pair $t$. We write $\hat{Y}_{cf}(t) = \mathbf{G}(\mathbf{X}, T_f, Y_f, \mathbf{Z})(t)$ (i.e. the random variable induced by $\mathbf{G}$). 

While the job of the counterfactual generator is to generate outcomes for the treatment-dosage pairs which were {\em not} observed, \cite{yoon2018ganite} demonstrated that the performance of the counterfactual generator is improved by adding a supervised loss term that regularises its output for the factual treatment (in our case treatment-dosage pair). We define the supervised loss, $\mathcal{L}_S$, to be
\begin{equation} \label{eq:sup}
    \mathcal{L}_S(\mathbf{G}) = \mathbb{E} \bigg[(Y_f - \mathbf{G}(\mathbf{X}, T_f, Y_f, \mathbf{Z})(T_f))^2\bigg]\,,
\end{equation}
where the expectation is taken over $\mathbf{X}, T_f, Y_f$ and $\mathbf{Z}$.

\vspace{-0.1cm}
\subsection{Counterfactual Discriminator}
\vspace{-0.1cm}

As noted in Section \ref{sec:intro}, our discriminator will act on a random set of points from each of the generated dose-response curves. Similar to \cite{yoon2018ganite}, we define a discriminator, $\mathbf{D}$, that will attempt to pick out the factual treatment-dosage pair from among the (random set of) generated ones. To handle the complexity arising from multiple treatments, we break down our discriminator into two distinct models: a treatment discriminator and a dosage discriminator.

Formally, let $n_w \in \mathbb{Z}^+$ be the number of dosage levels we will compare for treatment $w \in \mathcal{W}$\footnote{In practice we set all $n_w$ to be the same. The default setting is $5$ in the experiments.}.
For each $w \in \mathcal{W}$, let $\tilde{\mathcal{D}}_w = \{D^w_1, ..., D^w_{n_w}\}$ be a random subset\footnote{In practice, when $\mathcal{D}_w = [0, 1]$, each $D^w_j$ is sampled independently and uniformly from $[0, 1]$. Note that for each training iteration, $\tilde{\mathcal{D}}_w$ is resampled (see Section \ref{sec:intro}).} of $\mathcal{D}_w$ of size $n_w$, where for the factual treatment, $W_f$, $\tilde{\mathcal{D}}_{W_f}$ contains $n_{W_f} - 1$ random elements along with $D_f$. We define $\tilde{\mathbf{Y}}_w = (D^w_i, \tilde{Y}^w_i)_{i=1}^{n_w} \in (\mathcal{D}_w \times \mathcal{Y})^{n_w}$ to be the vector of dosage-outcome pairs for treatment $w$ where
\begin{equation} \label{eq:tildey}
    \tilde{Y}^w_i = \begin{cases} Y_f \text{ if } W_f = w \text{ and } D_f = D^w_i \\
    \hat{Y}_{cf}(w, D^w_i) \text{ else}
    \end{cases}
\end{equation}
and $\tilde{\mathbf{Y}} = (\tilde{\mathbf{Y}}_w)_{w \in \mathcal{W}}$. We will write $d^w_j, \tilde{\mathbf{y}}_w$ and $\tilde{\mathbf{y}}$ to denote realisations of $D^w_j, \tilde{\mathbf{Y}}_w$ and $\tilde{\mathbf{Y}}$.

\begin{figure*}[t]
    \centering
    \includegraphics[width=\columnwidth]{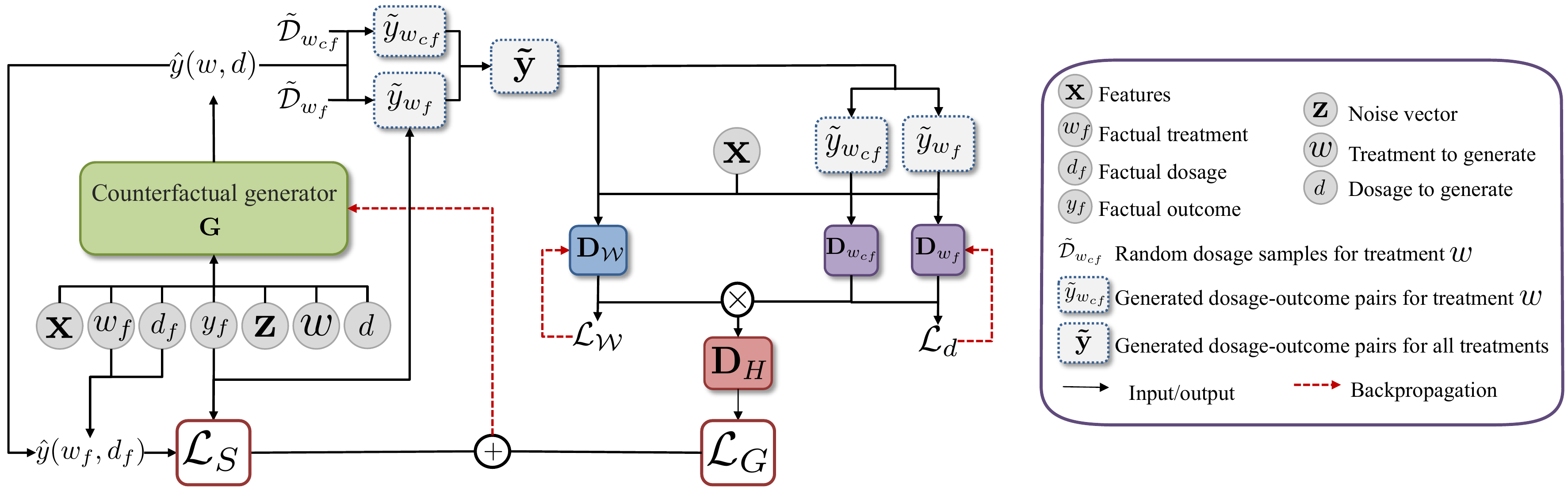}
    \vspace{-0.1cm}
    \caption{Overview of our model for the setting with two treatments ($w_f$/$w_{cf}$ being the factual/counterfactual treatment). The generator is used to generate an output for each dosage level in each $\tilde{\mathcal{D}}_w$, these outcomes together with the factual outcome, $y_f$, are used to create the set of dosage-outcome pairs, $\tilde{\mathbf{y}}$, which is passed to the treatment discriminator. Each dosage discriminator receives only the part of $\tilde{\mathbf{y}}$ corresponding to that treatment, i.e. $\tilde{\mathbf{y}}_w$. These discriminators are combined (Eq. \ref{eq:hierdcomb}) to define $\mathbf{D}_H$ which gives feedback to the generator. }
    \label{fig:comp}
    \vspace{-0.5cm}
\end{figure*}

The treatment discriminator, $\mathbf{D}_\mathcal{W} : \mathbf{X} \times \prod_{w \in \mathcal{W}} (\mathcal{D}_w \times \mathcal{Y})^{n_w} \to [0, 1]^k$, takes the features, $\mathbf{x}$, and generated potential outcomes, $\tilde{\mathbf{y}}$, and outputs a probability for each treatment, $w_1, ..., w_k$. Writing $\mathbf{D}^w_\mathcal{W}$ to denote the output of $\mathbf{D}_\mathcal{W}$ corresponding to treatment $w$, we define the loss, $\mathcal{L}_\mathcal{W}$, to be
\begin{small}
\begin{equation}
    \mathcal{L}_\mathcal{W}(\mathbf{D}_\mathcal{W}; \mathbf{G}) = - \mathbb{E} \Bigg[\sum_{w \in \mathcal{W}} \mathbb{I}_{\{W_f = w\}} \log \mathbf{D}^w_\mathcal{W}(\mathbf{X}, \tilde{\mathbf{Y}}) + \mathbb{I}_{\{W_f \neq w\}} \log(1 - \mathbf{D}^w_\mathcal{W} (\mathbf{X}, \tilde{\mathbf{Y}})) \Bigg]\,,
\end{equation}
\end{small}
where the expectation is taken over $\mathbf{X}, W_f, D_f, \tilde{\mathbf{Y}}$ and $\{\tilde{\mathcal{D}}_w\}_{w \in \mathcal{W}}$. 

Then, for each $w \in \mathcal{W}$, the dosage discriminator, $\mathbf{D}_w: \mathcal{X} \times (\mathcal{D}_w \times \mathcal{Y})^{n_w} \to [0, 1]^{n_w}$, is a map that takes the features, $\mathbf{x}$, and generated potential outcomes, $\tilde{\mathbf{y}}_w$, corresponding to treatment $w$ and outputs a probability for each dosage level, $d^w_1, ..., d^w_{n_w}$, in a given realisation of $\tilde{\mathcal{D}}_w$. Writing $\mathbf{D}_w^j$ to denote the output of $\mathbf{D}_w$ corresponding to dosage level $D^w_j$, we define the loss of each dosage discriminator to be
\begin{small}
\begin{equation}
    \mathcal{L}_{d}(\mathbf{D}_w; \mathbf{G}) = - \mathbb{E} \Bigg[ \mathbb{I}_{\{W_f = w\}} \sum_{j = 1}^{n_w} \mathbb{I}_{\{D_f = D^w_j\}} \log \mathbf{D}_w^j(\mathbf{X}, \tilde{\mathbf{Y}}_w) + \mathbb{I}_{\{D_f \neq D^w_j\}} \log (1 - \mathbf{D}_w^j(\mathbf{X}, \tilde{\mathbf{Y}}_w)) \Bigg] \,,
\end{equation}
\end{small}
where the expectation is taken over $\mathbf{X}, \tilde{\mathcal{D}}_w, \tilde{\mathbf{Y}}_w, W_f$ and $D_f$. The $\mathbb{I}_{\{W_f = w\}}$ term ensures that only samples for which the factual treatment is $w$ are used to train dosage discriminator $\mathbf{D}_w$ (otherwise there would be no factual dosage for that sample).

We define the overall discriminator $\mathbf{D}_H : \mathcal{X} \times \prod_{w \in \mathcal{W}} (\mathcal{D}_w \times Y)^{n_w} \to [0, 1]^{\sum n_w}$ 
by defining its output corresponding to the treatment-dosage pair $(w, d^w_j)$ as
\begin{equation} \label{eq:hierdcomb}
    \mathbf{D}_H^{w, j}(\mathbf{x}, \tilde{\mathbf{y}}) = \mathbf{D}^w_\mathcal{W}(\mathbf{x}, \tilde{\mathbf{y}}) \times \mathbf{D}_w^j(\mathbf{x}, \tilde{\mathbf{y}}_w)\,.
\end{equation}

Instead of the standard GAN minimax game, the generator and discriminators are trained according to the minimax game defined by seeking $\mathbf{G}^*$, $\mathbf{D}_H^*$ that solve:
\begin{align} 
    &\mathbf{G}^* = \arg \min_\mathbf{G} \mathcal{L}(\mathbf{D}^*_H; \mathbf{G}) + \lambda \mathcal{L}_S(\mathbf{G}) \quad \quad \label{eq:hierD_H} &&{\mathbf{D}^{*}_H}^{w, j} = {\mathbf{D}^*_\mathcal{W}}^w \times {\mathbf{D}^*_w}^j \\ 
    &\mathbf{D}^*_\mathcal{W} = \arg \min_{\mathbf{D}_\mathcal{W}} \mathcal{L}_\mathcal{W}(\mathbf{D}_\mathcal{W}; \mathbf{G}^*) \quad \quad && \label{eq:hierD_w} \mathbf{D}^*_w = \arg \min_{\mathbf{D}_w} \mathcal{L}_d(\mathbf{D}_w; \mathbf{G}^*), \forall w \in \mathcal{W}
\end{align}

Fig. \ref{fig:comp} depicts our generator and hierarchical discriminator. Pseudo-code for our algorithm can be found in Appendix \ref{app:pseudo}.

\subsection{Inference Network}
Once we have learned the counterfactual generator, we can use it only to access (generated) dose-response curves for all samples in the dataset. To generate dose-response curves for a new sample we use the counterfactual generator along with the original data to train an inference network, $\mathbf{I} : \mathcal{X} \times \mathcal{T} \to \mathcal{Y}$. Details of the loss and pseudo-code can be found in Appendix \ref{app:infnet}.

\subsection{Theoretical Analysis}

%In this section we provide a theoretical analysis of the objective defined by Eq. (\ref{eq:minimax}) and establish equivalence with the game defined by Eqs. (\ref{eq:hierD_H}-\ref{eq:hierD_w}). Together, the results establish that our hierarchical GAN learns counterfactuals that agree (in marginal distribution) with the true data. All proofs can be found in Appendix \ref{app:proofs}.

We now state our key theorem: the game defined by Eqs. (\ref{eq:hierD_H}-\ref{eq:hierD_w}) results in our hierarchical GAN learning counterfactuals that agree (in marginal distribution) with the true data.

%\begin{lemma} \label{lem:opt_d}
%Fix $\mathbf{G}$ and $\tilde{\mathcal{D}} = \bigcup_w \tilde{\mathcal{D}}_w$. Let $p_{w, d}(\mathbf{y} | \mathbf{x}) = p_r(y_{w, d} | \mathbf{x}) p_\mathbf{G}(\mathbf{y}_{\neg w,d} | \mathbf{x}, y_{w, d})$ denote the induced joint density of outcomes when restricted to dosages in $\tilde{\mathcal{D}}$, where $p_r$ denotes the true density that generated the observed outcome and $p_\mathbf{G}$ denotes the density induced by $\mathbf{G}$ over the remaining dosages in $\tilde{\mathcal{D}}$. Then the optimal discriminator is
%\begin{equation} \label{eq:opt_d}
%\mathbf{D}^*_{w, j}(\mathbf{x}, \mathbf{y}) = \frac{\tilde{p}(w, d_j | \mathbf{x}) p_{w, d_j}(\mathbf{y}|\mathbf{x})}{\sum_{w' \in \mathcal{W}} \sum_{i = 1}^{n_w} \tilde{p}(w', d_i | \mathbf{x})p_{w', d_i}(\mathbf{y} | \mathbf{x})}
%\end{equation}
%where $\tilde{p}$ is the $\tilde{\mathcal{D}}$-restricted propensity given by $\tilde{p}(w, d_j | \mathbf{x}) = p(w | \mathbf{x}) (p(d_j | \mathbf{x}, w) / \sum_{i = 1}^{n_w} p(d_i | \mathbf{x}, w))$.
%\end{lemma}

\begin{theorem} \label{thm:main}
The global minimum of $\mathcal{L}(\mathbf{D}^*_H; \mathbf{G}) + \lambda \mathcal{L}_S(\mathbf{G})$ subject to ${\mathbf{D}^{*}_H}^{w, j} = {\mathbf{D}^*_\mathcal{W}}^w \times {\mathbf{D}^*_w}^j$, $\mathbf{D}^*_\mathcal{W} = \arg \min_{\mathbf{D}_\mathcal{W}} \mathcal{L}_\mathcal{W}(\mathbf{D}_\mathcal{W}; \mathbf{G}^*)$ and $\mathbf{D}^*_w = \arg \min_{\mathbf{D}_w} \mathcal{L}_d(\mathbf{D}_w; \mathbf{G}^*), \forall w \in \mathcal{W}$ is achieved if and only if for all $\tilde{\mathcal{D}}_w$, for all $w, w' \in \mathcal{W}$ and for all $d \in \tilde{\mathcal{D}}_w$, $d' \in \tilde{\mathcal{D}}_{w'}$
\begin{equation} \label{eq:marg_eq}
    p_{w, d}(\mathbf{y} | \mathbf{x}) = p_{w', d'}(\mathbf{y} | \mathbf{x})
\end{equation}
which in turn implies that for any $(w, d) \in \mathcal{T}$ we have that the generated counterfactual for outcome $(w, d)$ for any sample (that was not assigned $(w, d)$) has the same (marginal) distribution (conditional on the features) as the true marginal distribution for that outcome.
\end{theorem}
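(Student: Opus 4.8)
The plan is to follow the Goodfellow et al.~\cite{goodfellow2014generative} optimality argument, adapted to a hierarchical, set-valued discriminator. I would fix a realisation of $\mathbf{x}$ and of the dosage sets $\{\tilde{\mathcal{D}}_w\}_w$ and argue conditionally on these; the ``for all $\tilde{\mathcal{D}}_w$'' quantifier then lets me pass to all $(w,d)\in\mathcal{T}$ at the end. First I would compute the discriminator best responses: for fixed $\mathbf{G}$, $\mathcal{L}_\mathcal{W}$ and each $\mathcal{L}_d$ are sums of one-vs-rest binary cross-entropies, so minimising pointwise over all measurable maps gives $\mathbf{D}_\mathcal{W}^{*w}(\mathbf{x},\tilde{\mathbf{y}}) = P(W_f=w\mid\mathbf{X}=\mathbf{x},\tilde{\mathbf{Y}}=\tilde{\mathbf{y}})$ and $\mathbf{D}_w^{*j}(\mathbf{x},\tilde{\mathbf{y}}_w) = P(D_f=D^w_j\mid\mathbf{X}=\mathbf{x},\tilde{\mathbf{Y}}_w=\tilde{\mathbf{y}}_w,W_f=w)$, which respect the restricted input domains of the hierarchical heads and yield $\mathbf{D}_H^*$ through Eq.~(\ref{eq:hierdcomb}).

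Next I would substitute these into the generator's objective, which separates into a treatment contribution plus one dosage contribution per $w$, and --- exactly as in the standard GAN computation --- the treatment contribution reduces to $-\mathbb{E}_{\mathbf{x},\tilde{\mathbf{Y}}}\big[\sum_w H_b(P(W_f=w\mid\mathbf{x},\tilde{\mathbf{Y}}))\big]$ (up to an additive constant), with $H_b$ the binary entropy, and similarly for the dosage terms with the dosage posteriors. The crucial observation is that $\mathbf{G}$ cannot change the \emph{priors} $P(W_f=w\mid\mathbf{x})$ and $P(D_f=D^w_j\mid\mathbf{x},W_f=w,\tilde{\mathcal{D}}_w)$ (these are propensities of the data-generating process), while by the tower rule each posterior averages to its prior; concavity of $H_b$ then gives $\mathbb{E}[H_b(\text{posterior})]\le H_b(\text{prior})$ for every $\mathbf{G}$, with equality iff the posterior is a.s.\ equal to the prior. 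Hence $\mathbf{G}$ minimises $\mathcal{L}(\mathbf{D}_H^*;\mathbf{G})$ iff $W_f\perp\tilde{\mathbf{Y}}\mid\mathbf{X}$ and $D_f\perp\tilde{\mathbf{Y}}_w\mid\mathbf{X},W_f=w$ for every $w$ --- equivalently, the conditional law of the generated vector given the factual pair does not depend on that pair, which is Eq.~(\ref{eq:marg_eq}). The supervised term does not interfere: $\mathcal{L}_S(\mathbf{G})\ge0$ always, it vanishes whenever $\mathbf{G}(\mathbf{x},t_f,y_f,\mathbf{z})(t_f)=y_f$, and since Eq.~(\ref{eq:tildey}) inserts the \emph{true} $Y_f$ in the factual slot the discriminator never sees $\mathbf{G}$'s factual output, so this can be arranged without disturbing any counterfactual entry; an ``oracle'' generator drawing each counterfactual entry from the corresponding true potential-outcome law attains $\mathcal{L}_S=0$ and all the independences simultaneously, so the global minimum is attained and its attainment is equivalent to Eq.~(\ref{eq:marg_eq}).

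It then remains to derive the marginal-matching corollary. I would fix $(w,d)\in\mathcal{T}$ and a sample whose factual pair is not $(w,d)$, and take dosage-set realisations with $d\in\tilde{\mathcal{D}}_w$. If the sample has factual treatment $w$ and factual dosage $d'\ne d$, then in $\tilde{\mathbf{Y}}_w$ the $d'$-entry is the true $Y_f$ --- distributed, by $Y_f=Y(W_f,D_f)$ and unconfoundedness, as $Y(w,d')\mid\mathbf{X}=\mathbf{x}$ --- while the $d$-entry is the generated $\hat{Y}_{cf}(w,d)$; the independence $D_f\perp\tilde{\mathbf{Y}}_w\mid\mathbf{x},W_f=w$ forces the marginal law of the $d$-entry to be the same whether $D_f=d$ or $D_f=d'$, and in the former case it is exactly the true law of $Y(w,d)\mid\mathbf{X}=\mathbf{x}$; hence $\hat{Y}_{cf}(w,d)$ has that law. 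If the sample's factual treatment is some $w'\ne w$, then $\hat{Y}_{cf}(w,d)$ is the $(w,d)$-entry of $\tilde{\mathbf{Y}}$, and $W_f\perp\tilde{\mathbf{Y}}\mid\mathbf{x}$ transfers the conclusion just obtained for factual treatment $w$ to factual treatment $w'$. Ranging the dosage sets over all finite subsets, and using overlap so that every conditioning event/density is positive, upgrades this to all $(w,d)\in\mathcal{T}$.

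The part I expect to be the main obstacle is making the second step honest for the \emph{hierarchical} discriminator. Because $\mathbf{D}_H$ (a product of a treatment head on $\tilde{\mathbf{y}}$ and a dosage head on only $\tilde{\mathbf{y}}_w$) is strictly less expressive than an unconstrained discriminator over the $\sum_w n_w$ treatment--dosage slots, one has to verify (i) that the generator's objective genuinely decomposes into a treatment sub-game plus per-treatment dosage sub-games whose floors are attained precisely by the two independence conditions above, (ii) that these conditions are mutually compatible, i.e.\ realised by a common $\mathbf{G}$, and (iii) that they are nevertheless enough to give the \emph{marginal} (not joint) matching claimed, with no residual cross-treatment coupling left to control. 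Pinning down the exact reading of $p_{w,d}(\mathbf{y}\mid\mathbf{x})$ so that equality across all $(w,d)$ coincides exactly with ``treatment independence $+$ dosage independence'' is the delicate bookkeeping here.
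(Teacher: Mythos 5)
Your strategy is sound and in several places more careful than the paper's own argument, but it is not the same route, and the step you yourself flag as ``the main obstacle'' is exactly where the paper's proof does something you have not reproduced. On the points of difference that work in your favour: the paper first analyses the \emph{single}-discriminator minimax game (Eq.~(\ref{eq:minimax})) --- Lemma~\ref{lem:opt_d} derives the optimal discriminator as a density ratio (your posterior form is the same object) and Lemma~\ref{lem:single_d} substitutes it back and rewrites the generator's objective as a sum of KL divergences against the mixture $\hat{p}$, concluding equality of all the joints $p_{w,d}(\mathbf{y}\mid\mathbf{x})$. Your Jensen/entropy argument (the posterior averages to the prior, and strict concavity of $H_b$ forces posterior $=$ prior at the optimum) is an equivalent but arguably cleaner way to extract the ``iff'' condition. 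Your explicit handling of $\lambda\mathcal{L}_S$ --- nonnegative, zeroed by the oracle generator, invisible to the discriminator because the factual slot always carries the true $Y_f$ --- is a step the paper skips entirely, and your third paragraph, deriving marginal matching directly from the independence statements, mirrors the closing argument of Lemma~\ref{lem:single_d}.

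The genuine gap is the passage from the hierarchical game to the single-discriminator characterisation. The dosage head $\mathbf{D}_w$ sees only $\tilde{\mathbf{y}}_w$, so the condition its sub-game enforces is $D_f\perp\tilde{\mathbf{Y}}_w\mid\mathbf{X},W_f=w$ --- a statement about the law of the treatment-$w$ block only --- whereas Eq.~(\ref{eq:marg_eq}) asserts equality of the \emph{full} joint densities $p_{w,d}(\mathbf{y}\mid\mathbf{x})$ over all slots. The conjunction of your two independence conditions is strictly weaker than Eq.~(\ref{eq:marg_eq}) in general: neither sub-game controls how the generated block for a treatment $w'\neq w$ co-varies with the factual dosage of treatment $w$, so your asserted equivalence ``the two independences $=$ Eq.~(\ref{eq:marg_eq})'' does not hold as stated. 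The paper closes this with an additional hypothesis (the final theorem of Appendix~\ref{app:proofs}): the generated response curves for different treatments are conditionally independent given the features. Under that assumption $p_{w,d_j}(\mathbf{y}_{\neq w}\mid\mathbf{y}_w,\mathbf{x})$ does not depend on $d_j$, the product ${\mathbf{D}^*_\mathcal{W}}^w\times{\mathbf{D}^*_w}^j$ collapses to the optimal single discriminator of Lemma~\ref{lem:opt_d}, and the single-discriminator analysis applies verbatim. To complete your proof you must either import that assumption or weaken the conclusion of your second step to the two independence statements --- which, as your own final paragraph shows, still suffice for the marginal-matching corollary, but not for Eq.~(\ref{eq:marg_eq}) as literally written.
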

\begin{proof}
The proof and intermediate results can be found in Appendix \ref{app:proofs}.
\end{proof}

%Our final theorem establishes equivalence between the hierarchical discriminator and the single discriminator setup under a mild assumption that can be satisfied by, say, resampling the noise for each treatment or passing independent noise samples to each multi-task head.

%\begin{theorem} \label{thm:hiereq}
%The game defined by equations \ref{eq:hierD_H} - \ref{eq:hierD_w} is equivalent to the one defined by \ref{eq:minimax} if the response curves generated by the generator for different treatments are conditionally independent given the features.
%\end{theorem}

\section{Architecture} \label{sec:arch}
In this section, we describe in detail the novel architectures that we adopt to model each of the functions $\mathbf{G}, \mathbf{D}, \mathbf{D}_\mathcal{W}, \mathbf{D}_{w_1}, ..., \mathbf{D}_{w_k}$ which draws from the ideas in \cite{zaheer2017deepsets}. The inference network, $\mathbf{I}$, has the same architecture as the generator, but does not receive $w_f, d_f, y_f$ or $\mathbf{z}$ as inputs.

\subsection{Generator Architecture}
We adopt a multi-task deep learning model for $\mathbf{G}$ by defining a function $g : \mathcal{X} \times \mathcal{T} \times \mathcal{Y} \times \mathcal{Z} \to \mathcal{H}$ for some latent space $\mathcal{H}$ (typically $\mathbb{R}^l$ for some $l$) and then for each treatment $w \in \mathcal{W}$ we introduce a multitask ''head", $g_w : \mathcal{H} \times \mathcal{D}_w \to \mathcal{Y}$ taking inputs from $\mathcal{H}$ {\em and} a dosage, $d$, to produce an outcome $\hat{y}(w, d) \in \mathcal{Y}$. Given observations, $(\mathbf{x}, t_f, y_f)$, a noise vector $\mathbf{z}$, and a target treatment-dosage pair, $t = (w, d)$, we define
\begin{equation}
    \mathbf{G}(\mathbf{x}, t_f, y_f, \mathbf{z})(t) = g_w(g(\mathbf{x}, t_f, y_f, \mathbf{z}), d)\,.
\end{equation}
Each of $g, g_{w_1}, ..., g_{w_k}$ are fully connected networks. A figure of our generator architecture is given in Figure \ref{fig:arch}(a).

\subsection{Hierarchical Discriminator Architectures} \label{sec:hier_arch}
%{\color{red} As noted in Section \ref{sec:intro}, our discriminators need to act as functions of sets (of randomly selected dosage-outcome pairs). While we could require that our discriminators try to learn this during training, by enforcing them to be functions of sets through their architecture, we reduce the complexity of learning the discriminators (they no longer need to ``rule out" functions which are not functions of sets). This results in better performing discriminators, which in turn improves the performance of the generator.

%In practice, the treatment discriminator receives all of the sets (i.e. one set for each treatment) of dosage-outcome pairs and outputs a probability for each treatment (i.e. there is one output corresponding to each set). To define such a function, we treat each input set as a vector but require that the outputs be invariant to (i.e. should not depend on) the ordering of the set as a vector. Each dosage discriminator receives the set corresponding to a given treatment and is tasked with outputting a probability for each element in the set. In order to define such a function, we consider the input and output as vectors but then require that if we permute the elements of the input vector, the output should be permuted in the same way. Formal definitions can be found in \cite{zaheer2017deepsets} and in Appendix \ref{app:inveqvdef}.}

In order to ensure the discriminators act as functions of sets, we use ideas from \cite{zaheer2017deepsets} to create permutation invariant and permutation equivariant\footnote{Definitions can be found in \cite{zaheer2017deepsets} and in Appendix \ref{app:inveqvdef}.} networks. Zaheer et al. \cite{zaheer2017deepsets} provide several possible building blocks to use to construct invariant and equivariant deep networks. The basic building block we will use for invariant functions will be a layer of the form:
\begin{equation} \label{eq:invar}
    f_{inv}(\mathbf{u}) = \sigma(\mathbf{1}_b \mathbf{1}_m^T (\phi(u_1), ..., \phi(u_m)))\,,
\end{equation}
where $\mathbf{1}_l$ is a vector of $1$s of dimension $l$, $\phi$ is any function $\phi : \mathcal{U} \to \mathbb{R}^q$ for some $q$ (in this paper we use a standard fully connected layer) and $\sigma$ is some non-linearity. The basic building block for equivariant functions is defined in terms of equivariance input, $\mathbf{u}$, and auxiliary input, $\mathbf{v}$, by:
\begin{equation} \label{eq:equivar}
    f_{equi}(\mathbf{u}, \mathbf{v}) = \sigma(\lambda \mathbf{I}_m \mathbf{u} + \gamma (\mathbf{1}_m \mathbf{1}_m^T)\mathbf{u} + (\mathbf{1}_m\Theta^T)\mathbf{v})\,,
\end{equation}
where $\mathbf{I}_m$ is the $m \times m$ identity matrix, $\lambda$ and $\gamma$ are scalar parameters and $\Theta$ is a vector of weights.

\begin{figure}
    \centering
    \subfloat[Generator]{\includegraphics[height=5.0cm]{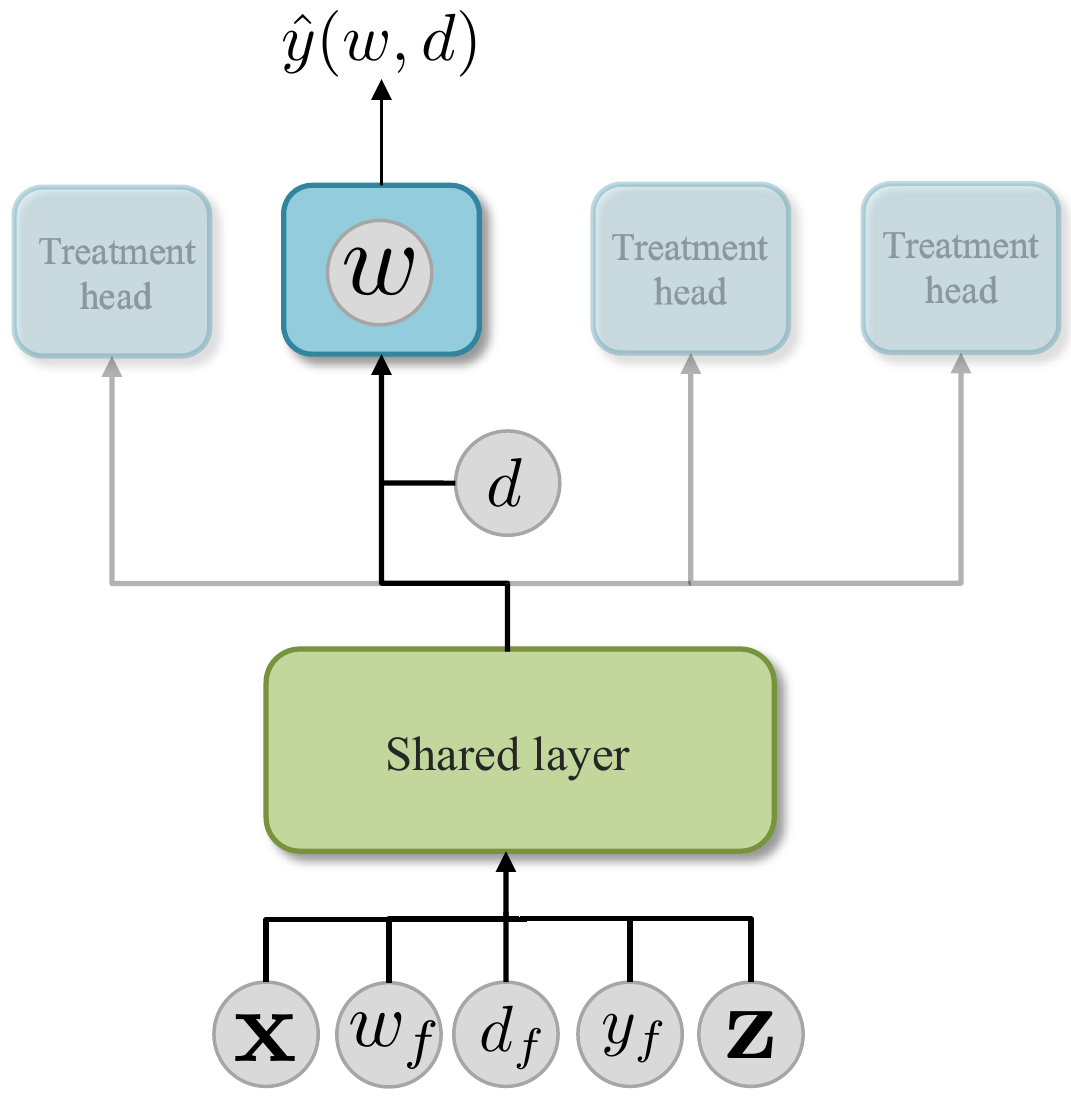}}
    \quad
    \subfloat[Treatment Discriminator]{\includegraphics[height=5.0cm]{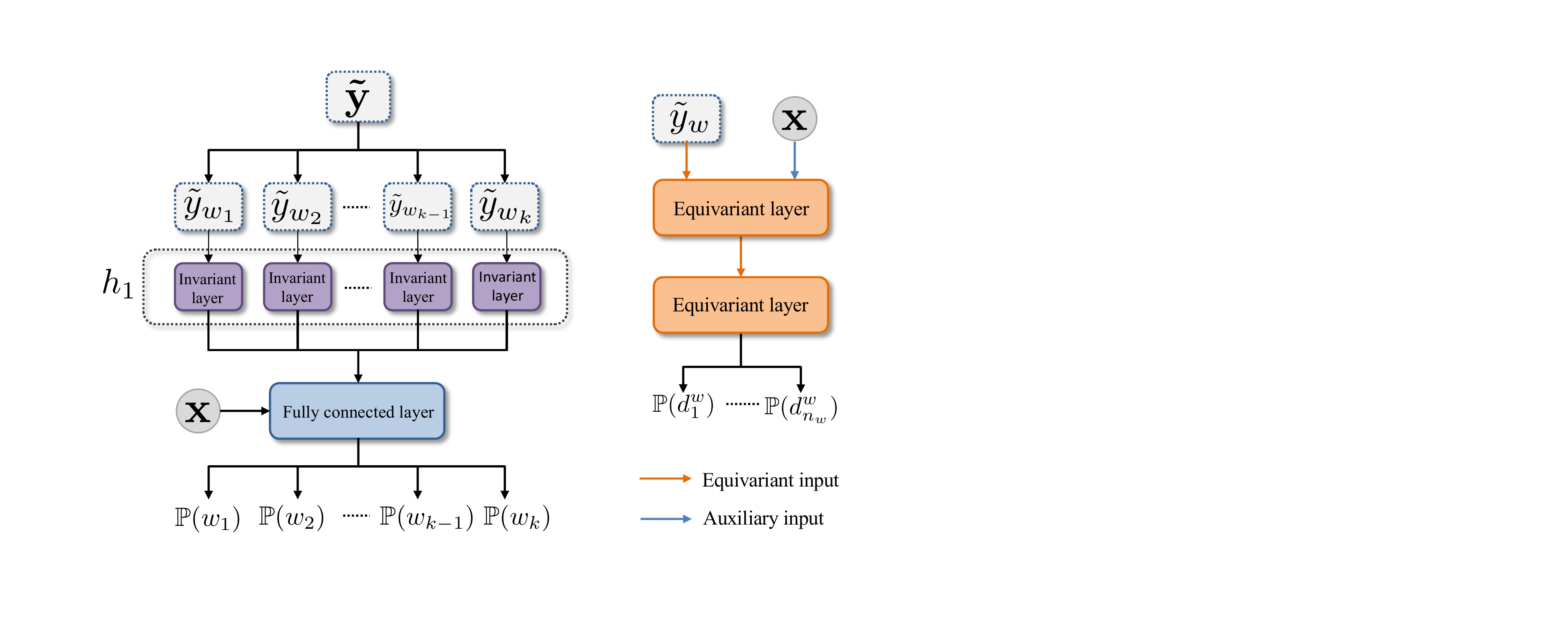} }
    \quad
    \subfloat[Dosage \newline Discriminator]{\includegraphics[height=5.0cm]{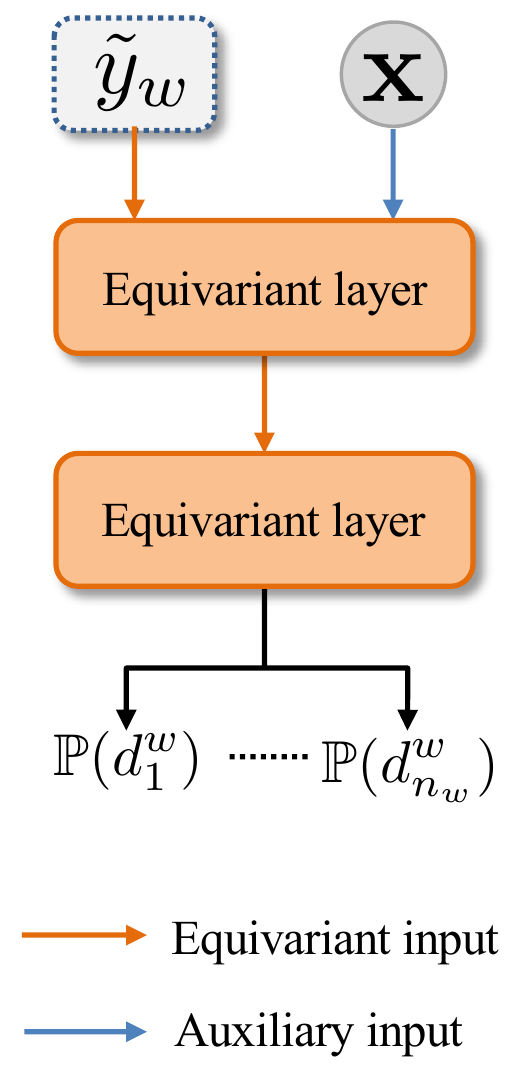}}
    \caption{Architecture of our generator and discriminators.}
    \label{fig:arch}
    \vspace{-3mm}
\end{figure}

%\begin{wrapfigure}{r}{0.51\columnwidth}
%    \centering
%    \vspace{-8mm}
%    \subfloat[Treatment Discriminator]{\includegraphics[height=4.5cm]{figs/section_5_treatment.pdf} }
%    \quad
%    \subfloat[Dosage \\ Discriminator]{\includegraphics[height=4.5cm]{figs/section_5_dosage.pdf}}%
%    \caption{Architecture of our discriminators.}% Invariant layers are of the form given by Eq. \ref{eq:invar}. Equivariant layers are of the form given by Eq. \ref{eq:equifeat}. $\mathbb{P}(\cdot)$ has been used as shorthand for the probabilities that appear in Eq. \ref{eq:probbreak}.}%
%    \label{fig:d_arch}
%    \vspace{-6mm}
%\end{wrapfigure}

In the case of the hierarchical discriminator, we want the treatment discriminator, $\mathbf{D}_\mathcal{W}$, to be permutation invariant with respect to $\tilde{\mathbf{y}}_w$ for each treatment. To achieve this we define $h_1: \prod_{w \in \mathcal{W}} (\mathcal{D}_w \times \mathcal{Y})^{n_w} \to \mathcal{H}_H$ and require that $h_1$ be permutation invariant w.r.t. each of the spaces $(\mathcal{D}_w \times \mathcal{Y})^{n_w}$. We concatenate the output of $h_1$ with the features $\mathbf{x}$ and pass these through a fully connected network $h_2 : \mathcal{X} \times \mathcal{H}_H \to [0, 1]^k$ so that $\mathbf{D}_\mathcal{W}(\mathbf{x}, \tilde{\mathbf{y}}) = h_2(\mathbf{x}, h_1(\tilde{\mathbf{y}}))$.

To construct $h_1$, we concatenate the outputs of several invariant layers of the form given in Eq. (\ref{eq:invar}) that each individually act on the spaces $(\mathcal{D}_w \times \mathcal{Y})^{n_w}$. That is, for each treatment, $w \in \mathcal{W}$ we define a map $h^w_{inv} : (\mathcal{D}_w \times \mathcal{Y})^{n_w} \to \mathcal{H}^w_H$ by substituting $\tilde{\mathbf{y}}_w$ for $\mathbf{u}$ in Eq. (\ref{eq:invar}). We then define $\mathcal{H}_H = \prod_{w \in \mathcal{W}} \mathcal{H}_H^w$ and  $h_1(\tilde{\mathbf{y}}) = (h^{w_1}_{inv}(\tilde{\mathbf{y}}_{w_1}), ..., h^{w_k}_{inv}(\tilde{\mathbf{y}}_{w_k}))$.

We want each dosage discriminator, $\mathbf{D}_w$, to be permutation equivariant with respect to $\tilde{\mathbf{y}}_w$. To achieve this each $\mathbf{D}_w$ will consist of two layers of the form given in Eq. (\ref{eq:equivar}) with the equivariance input, $\mathbf{u}$, to the first layer being $\tilde{\mathbf{y}}_w$ and to the second layer being the output of the first layer and the auxiliary input, $\mathbf{v}$, to the first layer being the features, $\mathbf{x}$, and then no auxiliary input to the second layer. 

Diagrams depicting the architectures of the treatment discriminator and dosage discriminators can be found in Fig. \ref{fig:arch}(b) and Fig. \ref{fig:arch}(c) respectively.

\section{Evaluation}
\vspace{-0.1cm}
The nature of the treatment-effects estimation problem in even the binary treatments setting does not allow for meaningful evaluation on real-world datasets due to the inability to observe the counterfactuals. While there are well-established benchmark synthetic models for use in the binary (or multiple) case, no such models exist for the dosage setting. We propose our own semi-synthetic data simulation to evaluate our model against several benchmarks.

\subsection{Experimental setup}
\textbf{Semi-synthetic data generation:} We simulate data as follows. We obtain features, $\mathbf{x}$, from a real dataset (in this paper we use TCGA \cite{weinstein2013cancer}, News \cite{johansson2016learning, schwab2019learning}) and MIMIC III \cite{mimiciii})\footnote{Details of each dataset can be found in Appendix \ref{app:data}}. We consider 3 treatments each accompanied by a dosage. Each treatment, $w$, is associated with a set of parameters, $\mathbf{v}^w_1, \mathbf{v}^w_2$, $\mathbf{v}^w_3$. For each run of the experiment, these parameters are sampled randomly by sampling a vector, $\mathbf{u}^w_i$, from $\mathcal{N}(\mathbf{0}, \mathbf{1})$ and then setting $\mathbf{v}^w_i = \mathbf{u}^w_i / {||\mathbf{u}^w_i||}$ where $||\cdot||$ is Euclidean norm. The shape of the response curve for each treatment, $f_w(\mathbf{x}, d)$, is given in Table \ref{tab:data_simultation}, along with a closed-form expression for the optimal dosage. We add $\epsilon \sim \mathcal{N}(0, 0.2)$ noise to the outcomes.

We assign interventions by sampling a dosage, $d_w$, for each treatment from a beta distribution, $d_w | \mathbf{x} \sim$ Beta$(\alpha, \beta_w)$. $\alpha \geq 1$ controls the dosage selection bias ($\alpha = 1$ gives the uniform distribution - see Appendix \ref{app:dosass}). $\beta_w = \frac{\alpha - 1}{d^*_w} + 2 - \alpha$, where $d^*_w$ is the optimal dosage\footnote{For symmetry, if $d_w^* = 0$, we sample $d_w$ from $1 - \text{Beta}(\alpha, \beta_w)$ where $\beta_w$ is set as though $d_w^* = 1$.} for treatment $w$. This setting of $\beta_w$ ensures that the mode of Beta$(\alpha, \beta_w)$ is $d_w^*$. We then assign a treatment according to $w_f | \mathbf{x} \sim \text{Categorical}(\text{softmax}(\kappa f(\mathbf{x}, d_w))$ where increasing $\kappa$ increases selection bias, and $\kappa = 0$ leads to random assignments. The factual intervention is given by $(w_f, d_{w_f})$. Unless otherwise specified, we set $\kappa = 2$ and $\alpha = 2$.

\begin{table*}[h!]
    	\centering
  
    	\begin{tabular}{ ccc }
    		\toprule
    		\textbf{Treatment}& Dose-Response & Optimal dosage\\
    		\midrule
    		1 & $f_1(\mathbf{x}, d) = C((\mathbf{v}^1_1)^T \mathbf{x} + 12(\mathbf{v}^1_2)^T \mathbf{x} d - 12(\mathbf{v}^1_3)^T \mathbf{x} d^2$) & $d_1^{*} = \frac{(\mathbf{v}^1_2)^T  \mathbf{x}}{2(\mathbf{v}^1_3)^T \mathbf{x}}$ \\
    		\hline
    		2 & $f_2(\mathbf{x}, d) = C((\mathbf{v}^2_1)^T \mathbf{x} + \sin (\pi (\frac{{\mathbf{v}^2_2}^T \mathbf{x}}{{\mathbf{v}^2_3}^T \mathbf{x}}) d))$ & $d_2^* = \frac{(\mathbf{v}^2_3)^T \mathbf{x}}{2 (\mathbf{v}^2_2)^T \mathbf{x}}$ \\
    		\hline
    		3 &  $f_3(\mathbf{x}, d) = C((\mathbf{v}^3_1)^T \mathbf{x} + 12d(d - b)^2$, where $b = 0.75 \frac{(\mathbf{v}^3_2)^T \mathbf{x}}{(\mathbf{v}^3_3)^T \mathbf{x}})$  & \makecell{$\frac{b}{3}$ if $b \geq 0.75$ \\ 1 if $b < 0.75$}  \\
    		\bottomrule
    	\end{tabular}
    	\caption{Dose response curves used to generate semi-synthetic outcomes for patient features $\mathbf{x}$. In the experiments, we set $C = 10$.  $\mathbf{v}^w_1, \mathbf{v}^w_2$, $\mathbf{v}^w_3$ are the parameters associated with each treatment $w$.}    
    	\label{tab:data_simultation}

    	%\vspace{-0.2cm}
    \end{table*}

%We consider 3 shapes for $f_w$ to demonstrate learning heterogeneous response curves. {\color{red} The first curve can be broken down into two terms, a linear (in $d$) increasing term $(\mathbf{v}^1_1)^T \mathbf{x} + 12(\mathbf{v}^1_2)^T \mathbf{x} d$ and a quadratic (in $d$) decreasing term $- 12(\mathbf{v}^1_3)^T \mathbf{x} d^2$. This first term could represent the improved efficacy of higher dosages of chemotherapy in reducing tumour size, while the quadratic term could represent the increasing toxicity of chemotherapy as the dosage increases. This type of trade-off presents itself in many other settings.}

\textbf{Benchmarks:} We compare against Generalized Propensity Score (GPS) \cite{imbens2000role} and Dose Reponse Networks (DRNet) \cite{schwab2019learning} (the standard model and with Wasserstein regularization (DRN-W)). As a baseline, we compare against a standard multilayer perceptron (MLP) that takes patient features, treatment and dosage as input and estimates the patient outcome and a multitask variant (MLP-M) that has a designated head for each treatment. See Appendix \ref{app:benchmarks} for details of the benchmark models and their hyperparameter optimisation. For metrics, we use Mean Integrated Square Error (MISE), Dosage Policy Error (DPE) and Policy Error (PE) \cite{silva2016observational, schwab2019learning}. For further details, see Appendix \ref{app:metrics}.

\newpage
\subsection{Source of gain}
\begin{wraptable}{r}{0.5\columnwidth}
    \vspace{-0.5cm}
        \centering
        \begin{small}
        \setlength\tabcolsep{3pt}
        \begin{tabular}{lccc}
        \toprule
        & $\sqrt{\text{MISE}}$ & $\sqrt{\text{DPE}}$ & $\sqrt{\text{PE}}$ \\
        \midrule
         Baseline & $4.18 \pm 0.32$ & $2.06 \pm 0.16$ & $1.93\pm 0.12$ \\
        \cmidrule{1-4}
        {+ $\mathcal{L}_S$} & $3.37 \pm 0.11$ & $1.14 \pm 0.05$ & $0.84 \pm 0.05$ \\
        \cmidrule{1-4}
        {+ Multitask} & $3.15 \pm 0.12$ & $0.85 \pm 0.05$ & $0.67 \pm 0.05$ \\
        \cmidrule{1-4}
        {+ Hierchical} & $2.54 \pm 0.05$  & $0.36\pm 0.05$ & $0.45 \pm 0.05$ \\
        \cmidrule{1-4}
        {+ Inv/Eqv} & $1.89 \pm 0.05$ & $0.31 \pm 0.05$ & $0.25 \pm 0.05$ \\
        \bottomrule
        \end{tabular}
        \caption{Source of gain analysis for our model on the TCGA. Metrics are reported as Mean $\pm$ Std.
        }
        \label{tab:source_of_gain}
        \end{small}
    \vspace{-3mm}
\end{wraptable}

Before comparing against the benchmarks, we investigate how each component of our model affects performance. We start with a baseline model in which both the generator and discriminator consist of a single fully connected network. One at a time, we add in the following components (cumulatively until we reach our full model): (1) the supervised loss in Eq. \ref{eq:sup} (+ $\mathcal{L}_S$), (2) multitask heads in the generator (+ Multitask), (3) hierarchical discriminator (+ Hierarchical) and (4) invariance/equivariance layers in the treatment and dosage discriminators (+Inv/Eqv). We report the results in Table \ref{tab:source_of_gain} for TCGA for all 3 error metrics (MISE, DPE and PE), computed over 30 runs (results on News can be found in Appendix \ref{app:addresults}). 

The addition of each component results in improved performance, with the final row (our full model) demonstrating the best performance across both datasets and for all metrics. In Appendix \ref{app:add_hype} we further compare our hierarchical discriminator with a single network discriminator by investigating both models sensitivity to the hyperparameter $n_w$. Details of the single discriminator can be found in Appendix \ref{app:single}. Architectures for other components of the ablation studies can be found in Appendix \ref{app:ablation_arch}.

\vspace{-0.1cm}
\subsection{Benchmarks comparison} \label{sec:bench_results}
\vspace{-0.1cm}
We now compare SCIGAN\footnote{The implementation of SCIGAN can be found at \url{https://bitbucket.org/mvdschaar/mlforhealthlabpub/} and at \url{https://github.com/ioanabica/SCIGAN}.} against the benchmarks on our 3 semi-synthetic datasets. For MIMIC, due to the low number of samples available, we use two treatments - 2 and 3. We report $\sqrt{\text{MISE}}$ and $\sqrt{\text{PE}}$ in Table \ref{tab:benchmarks}, $\sqrt{\text{DPE}}$ is given in Appendix \ref{app:add_benchmark}. We see that SCIGAN demonstrates a statistically significant improvement over every benchmark across all 3 datasets. In Appendix \ref{app:add_num_treat} we compare SCIGAN with DRNET and GPS for an increasing number of treatments.

\begin{table*}[h]
%\vspace{-2mm}
    	\centering
    	\begin{small}
    	\setlength\tabcolsep{4.2pt}
    	\begin{tabular}{lcccccc}
    		\toprule
    		\multirow{2}{*}{\textbf{Method}} & \multicolumn{2}{c}{\textbf{TCGA}} &
    		\multicolumn{2}{c}{\textbf{News}} & \multicolumn{2}{c}{\textbf{MIMIC}} \\
    		& \multicolumn{1}{c}{$\sqrt{\text{MISE}}$} & \multicolumn{1}{c}{$\sqrt{\text{PE}}$} & \multicolumn{1}{c}{$\sqrt{\text{MISE}}$} & \multicolumn{1}{c}{$\sqrt{\text{PE}}$} & \multicolumn{1}{c}{$\sqrt{\text{MISE}}$} & \multicolumn{1}{c}{$\sqrt{\text{PE}}$} \\
    		\midrule
        	SCIGAN & $ \mathbf{1.89 \pm 0.05}$ &  $ \mathbf{0.25 \pm 0.05}$ &  $ \mathbf{3.71 \pm 0.05}$ &  $ \mathbf{3.90 \pm 0.05}$ &  $ \mathbf{2.09 \pm 0.12}$ &  $ \mathbf{0.32 \pm 0.05}$ \\
    		\midrule
    		DRNet & $3.64\pm 0.12 $ & $0.67 \pm 0.05 $ & $4.98 \pm 0.12 $ & $4.17 \pm {0.11} $ & $4.45\pm 0.12 $ & $1.44 \pm 0.05 $ \\
    		DRN-W & $3.71 \pm 0.12 $ & $0.63 \pm 0.05 $ & $5.07\pm 0.12 $ & $4.56\pm 0.12 $ & $4.47 \pm 0.12$ & $1.37 \pm 0.05$ \\
    		GPS & $4.83 \pm 0.01 $ & $1.60 \pm 0.01 $  & $6.97\pm 0.01 $ & $24.1\pm 0.05 $ & $7.39\pm 0.00 $ & $20.2\pm 0.01  $ \\
    		\midrule
    		MLP-M & $3.96 \pm 0.12 $ & $1.20 \pm 0.05 $ & $5.17 \pm 0.12 $ & $5.82 \pm 0.16 $ &$4.97 \pm 0.16$  & $1.59 \pm 0.05$  \\
    		MLP & $4.31 \pm 0.05 $ & $0.97 \pm 0.05 $ & $5.48 \pm 0.16 $ & $6.45 \pm {0.21} $  & $5.34 \pm 0.16 $ & $1.65 \pm 0.05$  \\
    		\bottomrule
    	\end{tabular}
    	\end{small}
    	\caption{Performance of individualized treatment-dose response estimation on three datasets.  Bold indicates the method with the best performance for each dataset. Metrics are reported as Mean $\pm$ Std.}
    %	\vspace{-0.3cm}
    	\label{tab:benchmarks}
    \end{table*}
    
\begin{wrapfigure}{r}{0.37\columnwidth}
    \vspace{-0.8cm}
    \centering
	 \includegraphics[width=0.33\columnwidth]{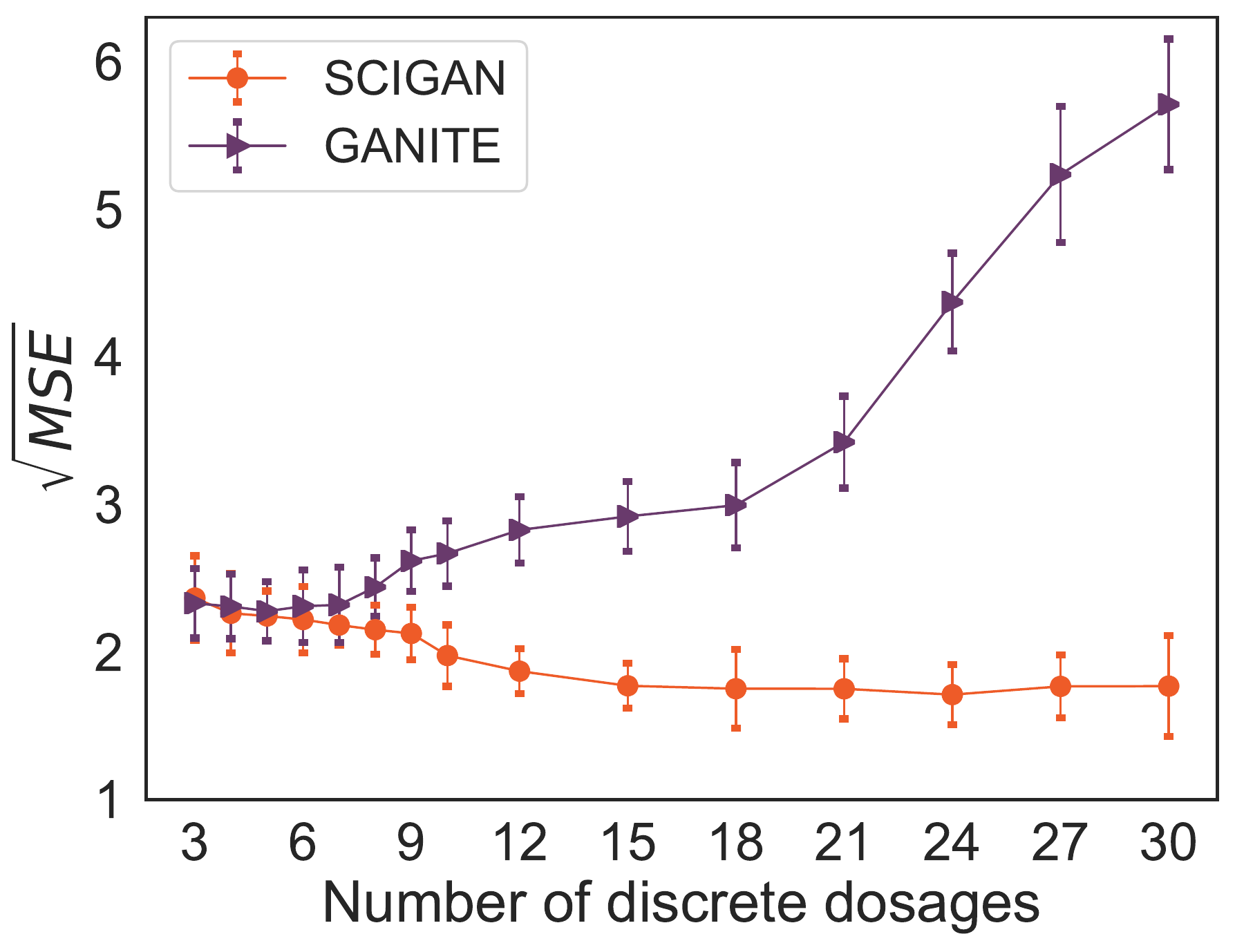}
	 \vspace{-0.3cm}
	\caption{Comparison between SCIGAN and GANITE.}
	\label{fig:discrete_dosage}
	\vspace{-0.6cm}
\end{wrapfigure}

\vspace{-0.1cm}
\subsection{Discrete dosages}
\vspace{-0.1cm}
In this experiment, we investigate the discrete dosage setting. Details of the experimental setup can be found in Appendix \ref{app:disc_dos}. We report Mean Squared Error (MSE) of SCIGAN and GANITE in Fig. \ref{fig:discrete_dosage} where we vary the number of discrete dosages from 3 to 30. We see that GANITE is incapable of handling more than 7 discrete dosages, whereas the hierarchical discriminator together with the invariant and equivariant layers allow SCIGAN to maintain performance as the number of dosages increases. Importantly, this demonstrates SCIGAN's wide-ranging applicability in both discrete and continuous settings.

\newpage
\subsection{Treatment and dosage selection bias} \label{sec:bias}
\vspace{-0.1cm}
Finally, we assess each model's robustness to treatment and dosage bias. We report $\sqrt{\text{MISE}}$ and $\sqrt{\text{PE}}$  on TCGA here. For the other metrics see Appendix \ref{app:add_bias}. Fig. \ref{fig:selection_bias_mise}(a) shows the performance of the 4 methods for $\kappa$ between $0$ (no bias) and $10$ (strong bias). Fig. \ref{fig:selection_bias_mise}(b) shows the performance for $\alpha$ between $1$ (no bias) and $8$ (strong bias). SCIGAN shows consistent performance, significantly outperforming the benchmarks for all $\kappa$ and $\alpha$.
\begin{figure}[H]
    \vspace{-5mm}
    \centering
    \subfloat[Treatment bias]{\includegraphics[width=0.23\linewidth]{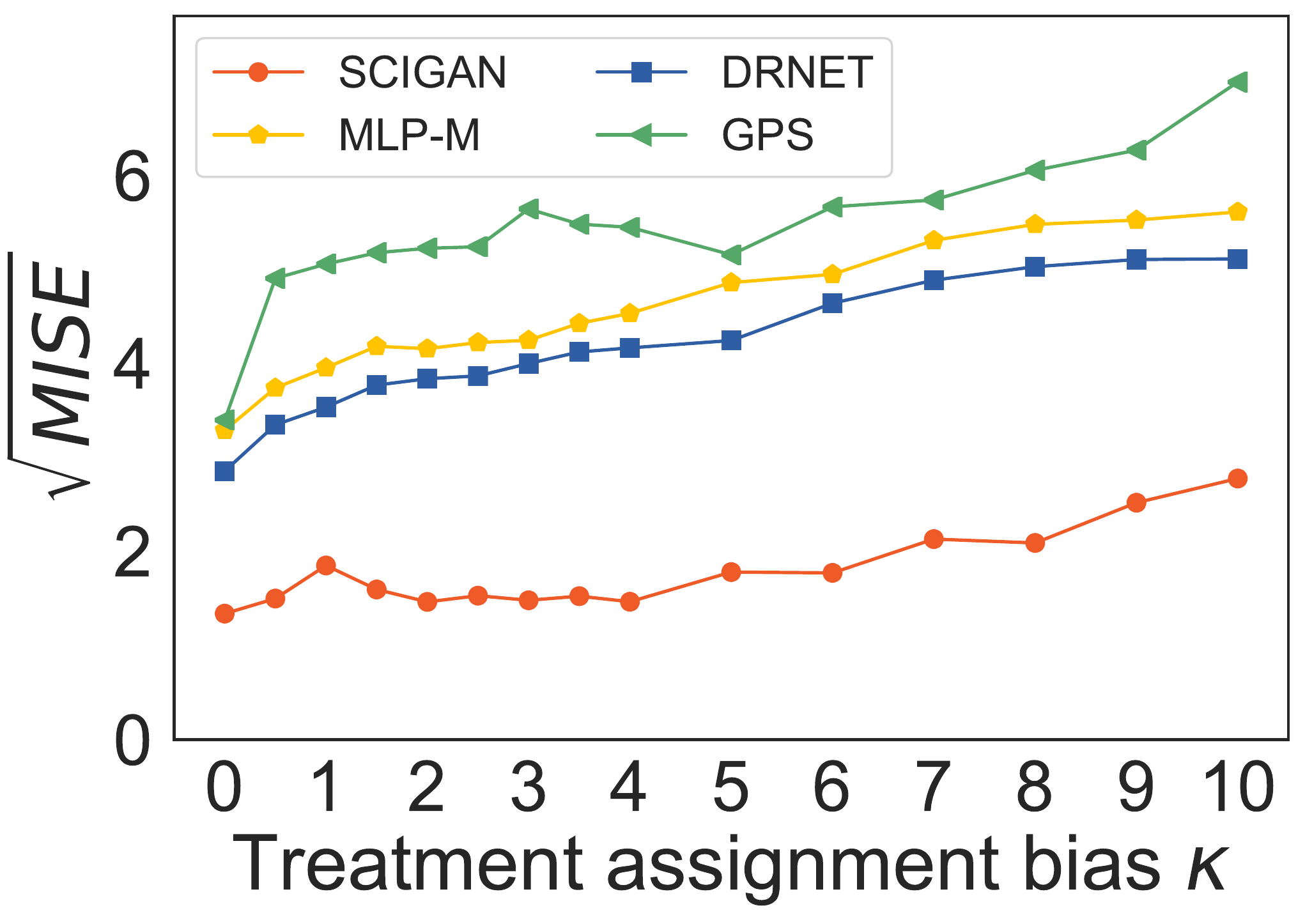}}
    \hfill
    \subfloat[Treatment bias ]{\includegraphics[width=0.23\linewidth]{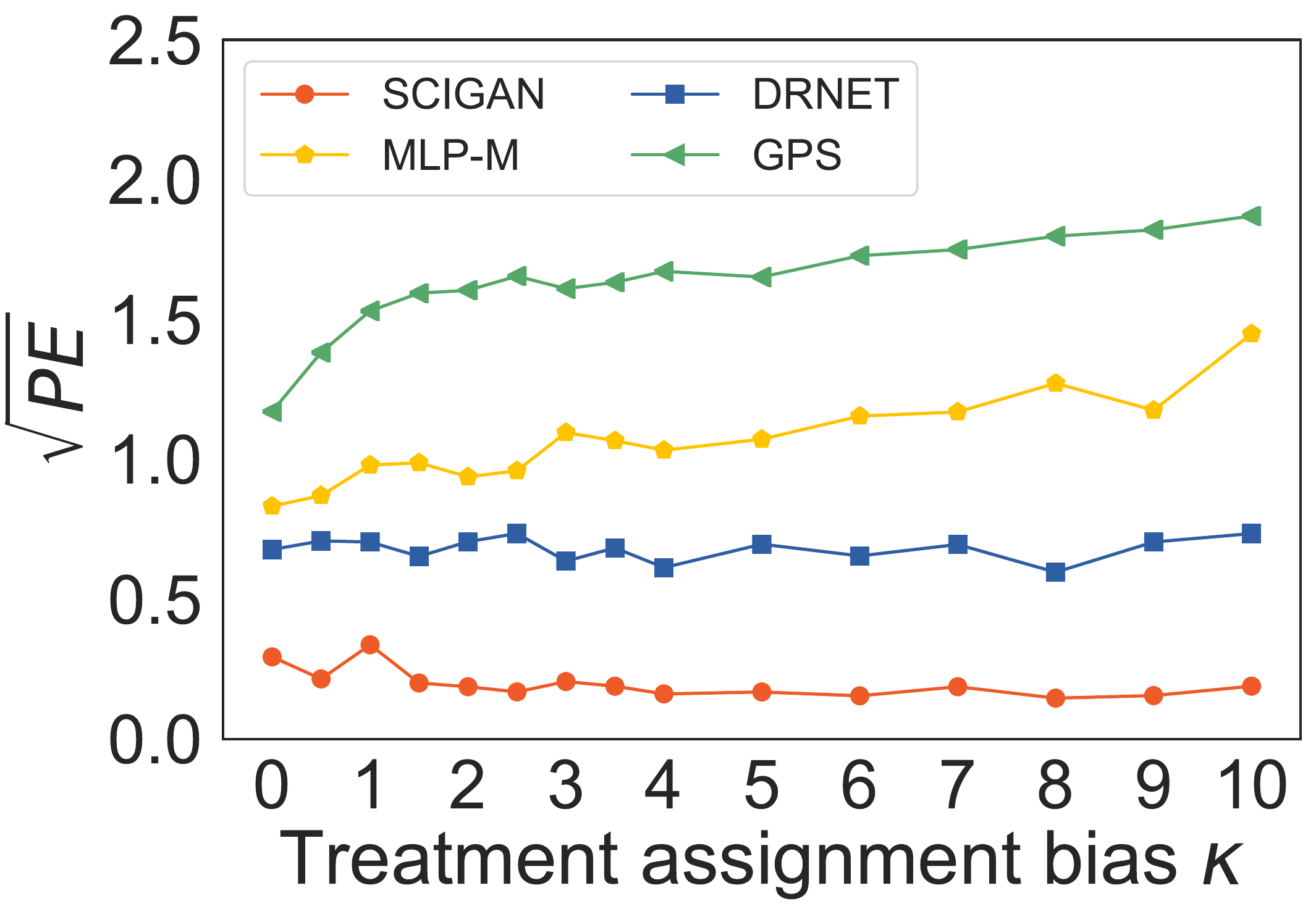}}
    \hfill
     \subfloat[Dosage bias]{\includegraphics[width=0.23\linewidth]{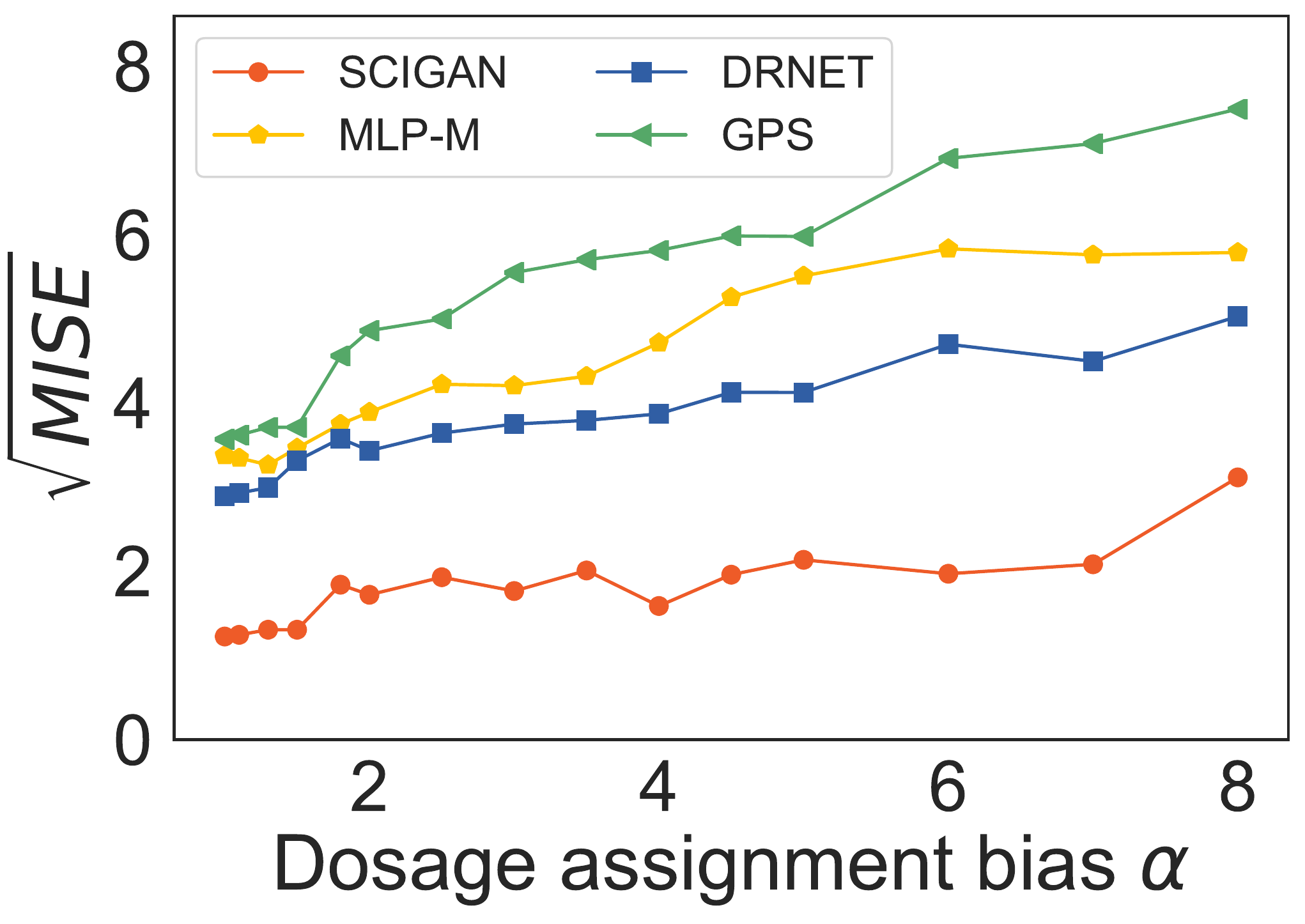} }
    \hfill
    \subfloat[Dosage bias]{\includegraphics[width=0.23\linewidth]{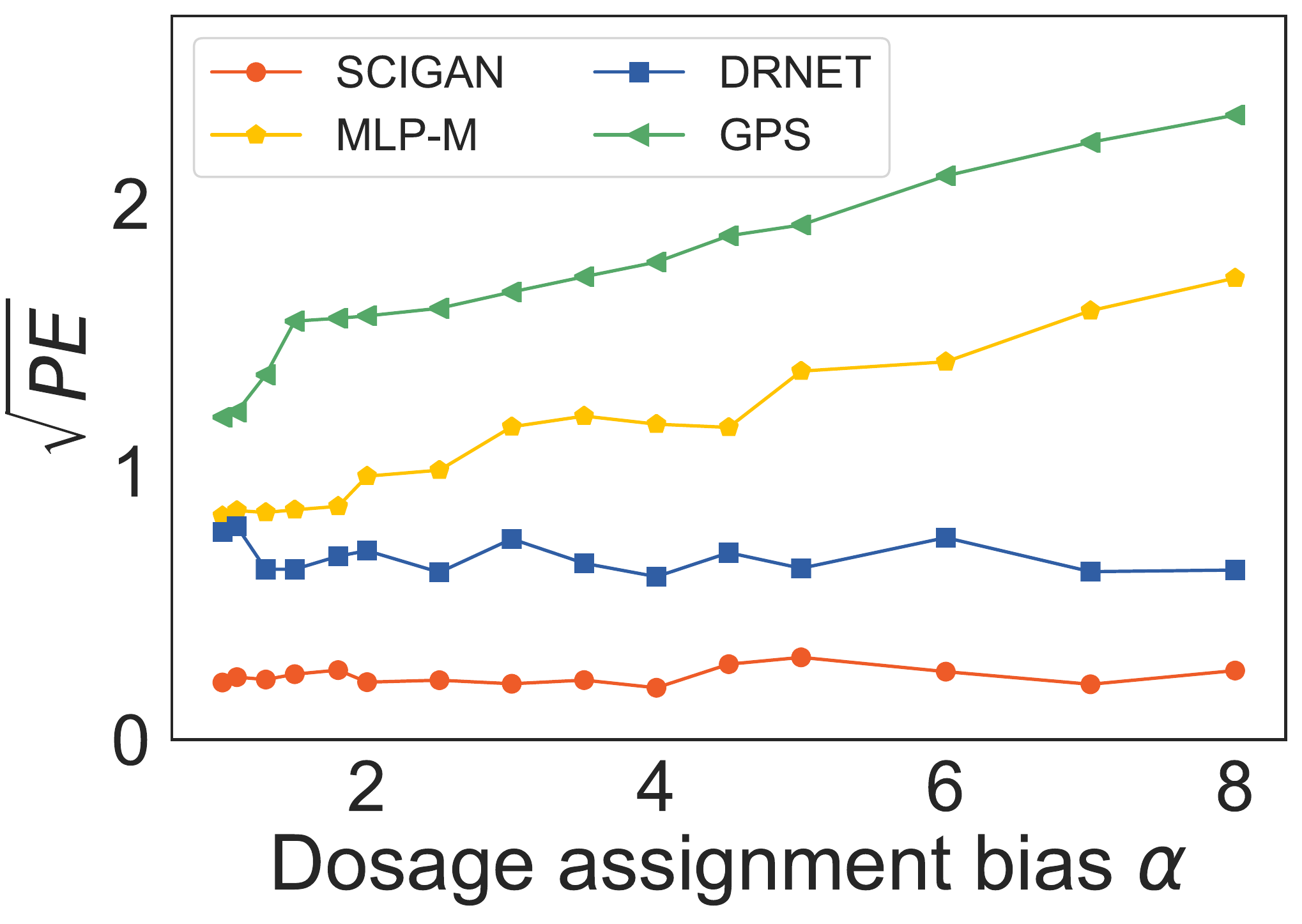}}
    \caption{Performance of the 4 methods on datasets with varying bias levels.}
    \vspace{-3mm}
    \label{fig:selection_bias_mise}
\end{figure}

\section{Conclusion}
\vspace{-0.1cm}
In this paper we proposed a novel framework for estimating response curves for continuous interventions from observational data. We provided theoretical justification for our use of a modified GAN framework, which introduced a novel hierarchical discriminator. While our approach is very flexible (it accepts multiple treatments potentially with/without dosage parameters), one limitation is that SCIGAN needs at least a few thousand training samples, as is generally the case with neural networks, and GANs in particular. As for future research, we used out-of-the-box methods for the invariant and equivariant layers but feel that more can be done to make these layers as expressive as possible for this model. Moreover, another important research direction is to extend this work to the temporal causal inference setting \cite{robins2000marginal, lim2018forecasting, bica2020estimating, bica2019time} and estimate counterfactual outcomes for sequences of continuous interventions. 

%We also proposed novel architectures and introduced a new semi-synthetic data simulation for use as a benchmark. On this data we demonstrated significant improvements over the benchmarks.

\section*{Broader Impact}
The impact of this problem in the healthcare setting is clear - being able to better estimate individualised responses to dosages will help us select treatments that result in improved patient outcomes. Moreover, clinicians and patients will often need to consider several different outcomes (such as potential side effects); better estimates of such outcomes allow the patients to make a more informed decision that is suitable for them.

Going beyond predictions by using causal inference methods to estimate the effect of interventions will result in more accurate and robust estimates and thus create more reliable components for use as part of decision support systems. Much of the recent work in causal inference has focused on binary or categorical treatments. Nevertheless, continuous interventions arise in many practical scenarios and building reliable methods for estimating their effects is paramount. We believe that our proposed model, SCIGAN, represents an important step forward in this direction. Nevertheless, we acknowledge the fact that the work presented in this paper is on the theoretical side and significant testing, potentially through clinical trials, will be needed before such methods can be used in practice, particularly due to the life-threatening implications of incorrect estimates. The risk of incorrectly assigning treatments can be significantly mitigated by ensuring that such models are used as {\em support} systems alongside clinicians, rather than instead of clinicians.

All work towards better estimating and understanding interventions can be used negatively, where someone wishing to cause harm can use the estimated outcomes to select the worst outcome.

\section*{Acknowledgments}
We would like to thank the reviewers for their valuable feedback. The research presented in this paper was supported by The Alan Turing Institute, under the EPSRC grant EP/N510129/1 and by the US Office of Naval Research (ONR), NSF 1722516.

\bibliography{refs.bib}
\bibliographystyle{unsrt}

\newpage
\appendix

\section{Expanded Related Works} \label{app:exprelwork}
Most methods for performing causal inference in the static setting focus on the scenario with two or multiple treatment options and no dosage parameter. The approaches taken by such methods to estimate the treatment effects involve either building a separate regression model for each treatment \cite{stoehlmacher2004multivariate, qian2011performance, bertsimas2017personalized} or using the treatment as a feature and adjusting for the imbalance between the different treatment populations. The former does not generalise to the dosage setting due to the now infinite number of possible treatments available. In the latter case, methods for handling the selection bias involve propensity weighting \cite{crump2008nonparametric, alaa2017deep, shi2019adapting}, building sub-populations using tree based methods \cite{chipman2010bart, athey2016recursive, wager2018estimation, kallus2017recursive} or building balancing representations between patients receiving the different treatments \cite{johansson2016learning, shalit2017estimating, li2017matching,yao2018representation}. An additional approach involves modelling the data distribution of the factual and counterfactual outcomes \cite{alaa2017bayesian,  yoon2018ganite}.  

\cite{silva2016observational} leverages observational and interventional data to estimate the effects of discrete dosages for a single treatment. In particular, \cite{silva2016observational} uses observational data to construct a non-stationary covariance function and develop a hierarchical Gaussian process prior to build a distribution over the dose response curve. Then, controlled interventions are employed to learn a non-parametric affine transform to reshape this distribution. The setting in \cite{silva2016observational} differs significantly from ours as we do not assume access to any interventional data.

\subsection{Comparison with GANITE}
Naive attempts to extend \cite{yoon2018ganite} to the continuous setting might involve: (1) discretising the continuous space of interventions; (2) somehow passing entire response curves to the discriminator and asking it to identify the point on the curve that corresponds to the factual outcome. Naturally, discretisation comes with a cost. If the discretisation is too coarse, the response curves will not be well-approximated. On the other hand, we show experimentally that GANITE is incapable of handling a high number of discrete interventions (corresponding to having a finer discretisation). In fact, although SCIGAN was designed for continuous interventions, it can be applied in the discrete setting and we show that it outperforms GANITE when the (discrete) parameter space is not small.

For (2), the problem is in defining a mechanism for generating these response curves in a form that can be passed to the discriminator and ensuring the {\em continuity} of these curves around the factual outcome so that the discontinuity itself does not make identification trivial for the discriminator. To overcome this we define a discriminator that acts on a finite set of points from each generated response curve (rather than on entire curves), as shown in Fig. \ref{fig:curves}. From {\em among the chosen points}, the discriminator attempts to identify the factual one. The set of points is sampled randomly {\em each} time an input would be passed to the discriminator. As our discriminator will be acting on a {\em set} of random intervention-outcome pairs, we explicitly condition it to behave as a function on a set. In particular, we draw on ideas from \cite{zaheer2017deepsets} to ensure that its output does not depend on the {\em order} of its input.

In addition, for the setting in which there are multiple possible interventions that {\em each} have an associated continuous parameter (which is the main setting of the paper), we propose a {\em hierarchical} discriminator which breaks down the job of the discriminator into determining the factual intervention and determining the factual parameter using separate networks. We show in the experiments section that this approach significantly improves performance and is more stable than using a single network discriminator. In this setting, we also model the generator as a multi-task deep network capable of taking a continuous parameter as an input; this gives us the flexibility to learn heterogeneous response curves for the different interventions.

\subsection{Off-policy evaluation and policy optimization with continuous treatments}

A problem related to ours involves evaluating policies and learning optimal policies from logged data \cite{swaminathan2015batch}. In this context, several methods have been proposed for performing off-policy evaluation and policy optimization with continuous treatments \cite{kallus2018policy, bertsimas2018optimization, chernozhukov2019semi}. It should be emphasized that there is a significant difference between this line of research, which aims to find optimal policies, and the causal inference setting considered in this paper, where the aim is to learn the counterfactual patient outcomes under
all possible treatment options.

For off-policy policy evaluation with continuous treatments, \cite{kallus2018policy} propose a method based on inverse-propensity weighting to estimate the value function, i.e. cumulative reward of a target policy from observational data. However, their proposed method does not perform any type of ITE estimation and does not have any intermediate per-sample value estimates. Alternatively, \cite{chernozhukov2019semi} does perform intermediate estimation of the value function for each sample. However, their contribution is to construct a doubly-robust estimator for the value function which crucially depends on knowing (or having some prior knowledge of) the parametric form of the response curve’s dependency on the treatment. Without an assumed form, it is not possible to construct their estimator. Our proposed model, SCIGAN, does not assume any prior knowledge about the response curve.

The problem of individualised treatment effect estimation (which is our primary goal) is harder than off-policy evaluation as it involves estimating the outcomes (which may differ from the value function) for every sample and every possible action that could have been taken for that sample. After having learned these counterfactuals it is possible to perform policy evaluation, but note that the use cases for treatment effect estimation go beyond policy evaluation. In particular, for a given setting it may be far more beneficial to present a patient with their estimated outcome along with potential side effects and allow them (or the clinician) to come to a decision based on these several factors. In such a setting, each patient may have their own “internal” value function that depends on potential outcomes and side effects differently.

\section{Notation} \label{app:notation}
In the table below, we summarise the notation used in our paper. Note that realisations of random variables are denoted using lowercase and subscripts/superscripts used with vector-valued functions denotes their output at the position of the given subscript/superscript.

\begin{table*}[h]
    	\centering
    	\begin{adjustbox}{max width=\textwidth}
    	\begin{tabular}{cl}
    		\toprule
    		$\mathcal{X}$ & Feature space \\
    		$\mathcal{Y}$ & Outcome space \\
    		$\mathcal{T}$ & Intervention space \\
    		$\mathcal{W} = \{w_1, ..., w_k\}$ & Set of treatments \\
    		$\mathcal{D}_w$ & Dosage space for treatment $w \in \mathcal{W}$ \\
    		$\mathbf{X} \in \mathcal{X}$ & Features (random variable) \\
    		$Y : \mathcal{T} \to \mathcal{Y}$ & Potential outcome function (function-valued random variable) \\
    		$T_f = (W_f, D_f) \in \mathcal{T}$ & Factual/observed intervention (treatment-dosage pair) (random variable) \\
    		$Y_f \in \mathcal{Y}$ & Outcome corresponding to the observed intervention ($Y_f = Y(W_f, D_f)$) \\
    		$\mathbf{G}$ & Generator \\
    		$\mathbf{Z}$ & Random noise (for input to generator) (random variable) \\
    		$\hat{Y}_{cf} : \mathcal{T} \to \mathcal{Y}$ & Counterfactual outcome function induced by $\mathbf{G}$ \\
    		$\mathbf{D}$ & Discriminator \\
    		$\tilde{\mathcal{D}}_w = \{D^w_1, ..., D^w_{n_w}\}$ & Random (finite) subset of $\mathcal{D}_w$ \\
    		$n_w$ & Size of $\tilde{\mathcal{D}}_w$ (i.e. number of dosage levels passed to discriminator for treatment $w \in \mathcal{W}$) \\
    		$\tilde{\mathbf{Y}}_w = (D^w_i, \tilde{Y}^w_i)_{i = 1}^{n_w}$ & Vector of dosage-outcome pairs generated by $\mathbf{G}$ (and $Y_f$) using $\tilde{\mathcal{D}}_w$ \\
    		$\mathbf{D}_\mathcal{W}$ & Treatment discriminator \\
    		$\mathbf{D}_w$ & Dosage discriminator for treatment $w \in \mathcal{W}$ \\
    		$\mathbf{D}_H$ & Hierarchical discriminator defined by combining $\mathbf{D}_\mathcal{W}$ and $\mathbf{D}_w$ \\
    		$\mathbf{I}$ & Inference network \\
    		$\mathcal{L}$ & GAN loss \\
    		$\mathcal{L}_S$ & Supervised loss \\
    		
    		\bottomrule
    	\end{tabular}
    	\end{adjustbox}
    	\label{tab:notation}
    \end{table*}

\clearpage

\section{Proofs of Theoretical Results} \label{app:proofs}
In this section we prove Theorem \ref{thm:main}. Note that these results also apply to GANITE (with very minor modifications - the proofs are even simpler in the case of GANITE).

In order to prove Theorem \ref{thm:main}, we analyse the simpler minimax game defined by
\begin{equation} \label{eq:minimax}
    \min_\mathbf{G} \max_\mathbf{D} \mathcal{L}(\mathbf{D}, \mathbf{G}) + \lambda \mathcal{L}_S(\mathbf{G}) \,,
\end{equation}
which corresponds to the single discriminator model (instead of the hierarchical model) and then use this to prove our full result.

\setcounter{lemma}{0}
\setcounter{theorem}{0}

\begin{lemma} \label{lem:opt_d}
Fix $\mathbf{G}$ and $\tilde{\mathcal{D}} = \bigcup_w \tilde{\mathcal{D}}_w$. Let $p_{w, d}(\mathbf{y} | \mathbf{x}) = p_r(y_{w, d} | \mathbf{x}) p_\mathbf{G}(\mathbf{y}_{\neg w,d} | \mathbf{x}, y_{w, d})$ denote the induced joint density of outcomes when restricted to dosages in $\tilde{\mathcal{D}}$, where $p_r$ denotes the true density that generated the observed outcome and $p_\mathbf{G}$ denotes the density induced by $\mathbf{G}$ over the remaining dosages in $\tilde{\mathcal{D}}$. Then the optimal discriminator is
\begin{equation}
\mathbf{D}^*_{w, j}(\mathbf{x}, \mathbf{y}) = \frac{\tilde{p}(w, d_j | \mathbf{x}) p_{w, d_j}(\mathbf{y}|\mathbf{x})}{\sum_{w' \in \mathcal{W}} \sum_{i = 1}^{n_w} \tilde{p}(w', d_i | \mathbf{x})p_{w', d_i}(\mathbf{y} | \mathbf{x})}
\end{equation}
where $\tilde{p}$ is the $\tilde{\mathcal{D}}$-restricted propensity given by $\tilde{p}(w, d_j | \mathbf{x}) = p(w | \mathbf{x}) (p(d_j | \mathbf{x}, w) / \sum_{i = 1}^{n_w} p(d_i | \mathbf{x}, w))$.
\end{lemma}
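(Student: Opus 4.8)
The plan is to mimic the classic pointwise optimisation argument of Goodfellow et al., adapted to the fact that our ``discriminator'' is solving a multi-way classification problem (identify the factual treatment--dosage pair among $\sum_w n_w$ candidates) rather than a two-sample discrimination. First I would write $\mathcal{L}(\mathbf{D}, \mathbf{G})$, for the fixed $\mathbf{G}$ and fixed $\tilde{\mathcal{D}}$, as a single expectation over the triple $(\mathbf{X}, \tilde{\mathbf{Y}}, J)$, where $J$ denotes the (random) index $(w, d_j)$ of the factual pair. The crux is to describe the joint law of this triple explicitly: (i) $\mathbf{X} \sim p(\mathbf{x})$; (ii) conditional on $\mathbf{X} = \mathbf{x}$ and on the fixed set $\tilde{\mathcal{D}}$, the factual index $J$ is distributed according to some prior $\pi(w, d_j \mid \mathbf{x})$; (iii) conditional on $J = (w, d_j)$ and $\mathbf{x}$, the coordinate of $\tilde{\mathbf{Y}}$ at $(w, d_j)$ is drawn from the true outcome density and the remaining coordinates are the generator's outputs, so that $\tilde{\mathbf{Y}} \sim p_{w, d_j}(\cdot \mid \mathbf{x})$ with $p_{w,d_j}(\mathbf{y}\mid\mathbf{x}) = p_r(y_{w,d}\mid\mathbf{x})\, p_\mathbf{G}(\mathbf{y}_{\neg w,d}\mid \mathbf{x}, y_{w,d})$ exactly as in the statement (here $p_\mathbf{G}$ is the pushforward of the noise $\mathbf{Z}$ through $\mathbf{G}$, conditioned on $\mathbf{x}$ and the factual outcome).

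The main obstacle --- and the only genuinely non-routine step --- is to show that this prior $\pi$ equals the $\tilde{\mathcal{D}}$-restricted propensity $\tilde{p}(w, d_j\mid\mathbf{x}) = p(w\mid\mathbf{x})\, p(d_j\mid\mathbf{x},w)\big/\sum_{i=1}^{n_w} p(d_i\mid\mathbf{x},w)$. Care is needed because $D_f$ is continuous, so ``conditioning on $\tilde{\mathcal{D}}$'' is conditioning on a probability-zero event and must be handled through densities. Recall the construction: given $W_f = w$, the set $\tilde{\mathcal{D}}_w$ is formed by inserting $D_f \sim p(\cdot\mid\mathbf{x}, w)$ together with $n_w - 1$ i.i.d.\ draws from the prescribed (uniform) sampling distribution, and for $w' \ne w$ the set $\tilde{\mathcal{D}}_{w'}$ consists of $n_{w'}$ i.i.d.\ uniform draws. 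I would write the density of the tuple of realised (unordered) sets under the event $W_f = w$, observe that it factorises into a term $\sum_{i} p(d_i \mid \mathbf{x}, w)$ (coming from ``which element of $\tilde{\mathcal{D}}_w$ is the factual dosage'') times a factor common to all $w$, and then apply Bayes' rule with the treatment prior $p(w\mid\mathbf{x})$. Refining the event further to ``$W_f = w$ and the factual dosage is the $j$-th element of $\tilde{\mathcal{D}}_w$'' replaces the sum $\sum_i p(d_i\mid\mathbf{x},w)$ by the single term $p(d_j\mid\mathbf{x},w)$; normalising over all $(w, j)$ then yields precisely $\tilde{p}$, and one checks directly that $\sum_{w,j}\tilde{p}(w,d_j\mid\mathbf{x}) = 1$.

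With the generative model in hand the rest is standard. Conditioning the loss on $(\mathbf{X}, \tilde{\mathbf{Y}}) = (\mathbf{x}, \mathbf{y})$, the terms involving the coordinate $\mathbf{D}_{w,j}(\mathbf{x}, \mathbf{y})$ reduce, up to a nonnegative weighting, to $-\big[\, a \log \mathbf{D}_{w,j}(\mathbf{x},\mathbf{y}) + (1-a)\log(1 - \mathbf{D}_{w,j}(\mathbf{x},\mathbf{y})) \,\big]$ with $a = \Pr(J = (w,d_j)\mid \mathbf{x}, \mathbf{y})$; since $t \mapsto a\log t + (1-a)\log(1-t)$ is maximised on $(0,1)$ at $t = a$, the pointwise optimum is $\mathbf{D}^*_{w,j}(\mathbf{x},\mathbf{y}) = \Pr(J=(w,d_j)\mid\mathbf{x},\mathbf{y})$. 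Finally, Bayes' rule with prior $\tilde{p}(\cdot\mid\mathbf{x})$ and likelihoods $p_{w',d_i}(\mathbf{y}\mid\mathbf{x})$ turns this posterior into the claimed ratio $\tilde{p}(w,d_j\mid\mathbf{x})\,p_{w,d_j}(\mathbf{y}\mid\mathbf{x}) \big/ \sum_{w'}\sum_i \tilde{p}(w',d_i\mid\mathbf{x})\,p_{w',d_i}(\mathbf{y}\mid\mathbf{x})$. I would close by noting that the same computation and the same optimum hold whether $\mathcal{L}$ is taken as the sum of one-vs-rest binary cross-entropies or as a single categorical cross-entropy (in the latter case the simplex constraint $\sum_{w,j}\mathbf{D}_{w,j} = 1$ is automatically satisfied by the optimum), so the lemma transfers verbatim to GANITE.
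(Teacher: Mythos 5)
Your proposal is correct and follows essentially the same route as the paper's proof: rewrite $\mathcal{L}(\mathbf{D},\mathbf{G})$ for fixed $\mathbf{G}$ and $\tilde{\mathcal{D}}$ as an integral weighted by the restricted propensity $\tilde{p}$ and the joint densities $p_{w,d}(\mathbf{y}\mid\mathbf{x})$, then optimise pointwise using the fact that $a\log t + b\log(1-t)$ is maximised at $t = a/(a+b)$. The only difference is that you make explicit the step the paper compresses into ``replaced indicator functions with densities as appropriate'' --- namely the derivation of the conditional law of the factual index given the realised dosage sets --- which is a worthwhile elaboration rather than a different argument.
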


\begin{proof}
Fix $\mathbf{G}$ and $\tilde{\mathcal{D}} = \bigcup_w \tilde{\mathcal{D}}_w$. The optimal discriminator is given by $\arg \min_{\mathbf{D}} \mathcal{L}(\mathbf{D}, \mathbf{G})$. We have
\begin{align}
    \mathcal{L}(\mathbf{D}, \mathbf{G}) &= \mathbb{E} \Bigg[ \sum_{w \in \mathcal{W}} \sum_{d \in \tilde{\mathcal{D}}_w} \mathbb{I}_{\{T_f = (w, d)\}} \log \mathbf{D}^{w, d}(\mathbf{X}, \tilde{\mathbf{Y}}) + \mathbb{I}_{\{T_f \neq (w, d)\}} \log (1 - \mathbf{D}^{w, d}(\mathbf{X}, \tilde{\mathbf{Y}})) \Bigg] \\
    &\begin{aligned}
    =\mathbb{E}_{\tilde{\mathcal{D}}} \Bigg[ \sum_{w \in \mathcal{W}} &\sum_{d \in \tilde{\mathcal{D}}_w} \int_{(\mathbf{x}, \mathbf{y})} \tilde{p}(w, d | \mathbf{x})p_{w, d}(\mathbf{y} | \mathbf{x}) \log \mathbf{D}^{w, d}(\mathbf{x}, \mathbf{y}) \\
    &+ \bigg(\sum_{w', d' \neq w, d} \tilde{p}(w', d' | \mathbf{x})p_{w', d'}(\mathbf{y} | \mathbf{x})\bigg) \log (1 - \mathbf{D}^{w, d}(\mathbf{x}, \mathbf{y})) p(\mathbf{x}) d\mathbf{y} d\mathbf{x} \Bigg]
    \end{aligned}
\end{align}
where we have taken the (conditional on $\tilde{\mathcal{D}}$) expectations inside the sums and replaced indicator functions with densities as appropriate. We now note that $a \log p + b \log(1 - p)$ for $p \in (0, 1)$ has a unique maximum at $p = \frac{a}{a+b}$, thus implying that the integrand is maximised when
\begin{equation} \label{eq:proof_opt_d}
    \mathbf{D}^{w, d}(\mathbf{x}, \mathbf{y}) = \frac{\tilde{p}(w, d | \mathbf{x})p_{w, d}(\mathbf{y}| \mathbf{x})}{\tilde{p}(w, d | \mathbf{x})p_{w, d}(\mathbf{y}| \mathbf{x}) + \sum_{w', d' \neq w, d} \tilde{p}(w', d' | \mathbf{x})p_{w', d'}(\mathbf{y} | \mathbf{x})} \,.
\end{equation}
This gives the required result.
\end{proof}

Using Lemma \ref{lem:opt_d} we can now show that the optimal solution to our single discriminator game is when the marginal distributions of the generated counterfactuals are equal to the true counterfactuals. Importantly, this suffices for estimating $\mu$ since the expectation is only concerned with the marginal distribution of $Y(w, d)$.

\begin{lemma} \label{lem:single_d}
The global minimum of the minimax game defined by $\min_\mathbf{G} \max_\mathbf{D} \mathcal{L}(\mathbf{D}, \mathbf{G})$ is achieved if and only if for all $\tilde{\mathcal{D}}_w$, for all $w, w' \in \mathcal{W}$ and for all $d \in \tilde{\mathcal{D}}$, $d' \in \tilde{\mathcal{D}}_{w'}$ we have that
\begin{equation}
    p_{w, d}(\mathbf{y} | \mathbf{x}) = p_{w', d'}(\mathbf{y} | \mathbf{x})
\end{equation}
which in turn implies that for any treatment-dosage pair $(w, d) \in \mathcal{T}$ we have that the generated counterfactual for outcome $(w, d)$ for any sample (that was not assigned $(w, d)$) has the same (marginal) distribution (conditional on the features) as the true marginal distribution for that outcome.
\end{lemma}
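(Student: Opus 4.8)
The plan is to adapt the classical GAN analysis of \cite{goodfellow2014generative} to our multi-way, one-vs-rest discriminator. First I substitute the optimal discriminator from Lemma~\ref{lem:opt_d} into $\mathcal{L}$ to obtain the generator's ``virtual'' objective $C(\mathbf{G}) := \mathcal{L}(\mathbf{D}^*,\mathbf{G})$, which depends on $\mathbf{G}$ only through the induced joint densities $p_{w,d}(\,\cdot\mid\mathbf{x})$. Fix $\mathbf{x}$ and a realisation of $\tilde{\mathcal{D}} = \bigcup_w \tilde{\mathcal{D}}_w$, and write $\pi_{w,d} := \tilde{p}(w,d\mid\mathbf{x})$ and $q(\mathbf{y}) := \sum_{w'}\sum_{d'\in\tilde{\mathcal{D}}_{w'}}\pi_{w',d'}\,p_{w',d'}(\mathbf{y}\mid\mathbf{x})$, so that Lemma~\ref{lem:opt_d} gives the optimal output $\rho_{w,d}(\mathbf{y}) := \pi_{w,d}\,p_{w,d}(\mathbf{y}\mid\mathbf{x})/q(\mathbf{y})$. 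The first (purely algebraic) step is to reduce the per-$(\mathbf{x},\tilde{\mathcal{D}})$ integrand of $C(\mathbf{G})$ to closed form: since $1-\rho_{w,d} = (q-\pi_{w,d}p_{w,d})/q$, the summand $\pi_{w,d}p_{w,d}\log\rho_{w,d} + (q-\pi_{w,d}p_{w,d})\log(1-\rho_{w,d})$ collapses to $-q(\mathbf{y})\,h(\rho_{w,d}(\mathbf{y}))$, where $h(t) := -t\ln t - (1-t)\ln(1-t)$ is the binary entropy, and using the identity $\sum_w\sum_{d\in\tilde{\mathcal{D}}_w}\pi_{w,d} = \sum_w p(w\mid\mathbf{x}) = 1$ and integrating one obtains $C(\mathbf{G}) = -\,\mathbb{E}_{\tilde{\mathcal{D}},\mathbf{X}}\!\big[\,\mathbb{E}_{\mathbf{Y}\sim q}\big[\textstyle\sum_{w,d} h(\rho_{w,d}(\mathbf{Y}))\big]\big]$. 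The apparent subtlety here --- our discriminator is a family of $\sum_w n_w$ independent binary classifiers, not a single soft-max --- is harmless, because at the optimum the outputs $\rho_{w,d}$ sum to $1$ pointwise, so the reduction to binary entropies goes through cleanly.

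The core step is then Jensen's inequality for the strictly concave $h$. Since $\mathbb{E}_{\mathbf{Y}\sim q}[\rho_{w,d}(\mathbf{Y})] = \pi_{w,d}\int p_{w,d}(\mathbf{y}\mid\mathbf{x})\,d\mathbf{y} = \pi_{w,d}$, we get $\mathbb{E}_q[h(\rho_{w,d})] \le h(\pi_{w,d})$ for every $(w,d)$, hence $C(\mathbf{G}) \ge -\,\mathbb{E}_{\tilde{\mathcal{D}},\mathbf{X}}\big[\sum_{w,d} h(\pi_{w,d})\big]$ --- a lower bound independent of $\mathbf{G}$. By strict concavity it is an equality (for given $\mathbf{x},\tilde{\mathcal{D}}$) iff each $\rho_{w,d}(\cdot)$ is $q$-almost-surely constant, i.e.\ $\pi_{w,d}\,p_{w,d}(\mathbf{y}\mid\mathbf{x}) = \pi_{w,d}\,q(\mathbf{y})$; since $\pi_{w,d}>0$ by the Overlap assumption, this forces $p_{w,d}(\mathbf{y}\mid\mathbf{x}) = q(\mathbf{y}\mid\mathbf{x})$ for every $(w,d)$, i.e.\ all of the densities $p_{w,d}(\,\cdot\mid\mathbf{x})$ coincide --- the claimed condition. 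For the converse, substituting ``$p_{w,d} \equiv p^*$'' shows the bound is attained, so $-\mathbb{E}[\sum h(\pi_{w,d})]$ really is the global minimum; it is moreover attained by the generator that independently samples each counterfactual from the true conditional $p_r(\,\cdot\mid\mathbf{x})$, which in turn legitimises exchanging $\min_{\mathbf{G}}$ with $\mathbb{E}_{\tilde{\mathcal{D}}}$ --- that single generator meets the pointwise optimum simultaneously for all realisations of $\tilde{\mathcal{D}}$, so the ``for all $\tilde{\mathcal{D}}_w$'' in the statement is genuine.

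Finally, to extract the consequence about counterfactual marginals: given $(w,d)\in\mathcal{T}$ and a sample with features $\mathbf{x}$ whose factual pair is $(w_f,d_f)\neq(w,d)$, choose a realisation of $\tilde{\mathcal{D}}$ with $d\in\tilde{\mathcal{D}}_w$ (possible since each $\tilde{\mathcal{D}}_w$ is sampled with full support, while $d_f\in\tilde{\mathcal{D}}_{w_f}$ always holds by construction). Then $\hat{Y}_{cf}(w,d)$ is precisely the $(w,d)$-coordinate of $\tilde{\mathbf{Y}}$, so conditionally on $\mathbf{x}$ its law is the $(w,d)$-marginal of $p_{w_f,d_f}(\,\cdot\mid\mathbf{x})$; by the equality just established this equals the $(w,d)$-marginal of $p_{w,d}(\,\cdot\mid\mathbf{x})$, which by the defining decomposition $p_{w,d}(\mathbf{y}\mid\mathbf{x}) = p_r(y_{w,d}\mid\mathbf{x})\,p_{\mathbf{G}}(\mathbf{y}_{\neg w,d}\mid\mathbf{x},y_{w,d})$ is just the true density $p_r(y_{w,d}\mid\mathbf{x})$, which by Unconfoundedness is the law of $Y(w,d)$ given $\mathbf{X}=\mathbf{x}$ --- so $\mu(w,d,\mathbf{x})$ is recovered in expectation. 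I expect the main obstacle to be the closed-form reduction of $C(\mathbf{G})$ together with the equality analysis: one has to use the ``sums to $1$'' property of $\mathbf{D}^*$ to reach the binary-entropy form, keep the direction of Jensen's inequality straight (the generator \emph{minimises} $C$, and the stated value is a lower bound that is actually reachable), and invoke Overlap at exactly the step that upgrades ``$p_{w,d} = q$ wherever $\pi_{w,d}>0$'' to ``$p_{w,d}\equiv q$''; plugging in $\mathbf{D}^*$, marginalising, and the $\tilde{\mathcal{D}}$ quantifier are then routine.
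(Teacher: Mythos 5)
Your proof is correct and reaches the same characterisation as the paper, following the same overall strategy (substitute the optimal discriminator from Lemma~\ref{lem:opt_d}, then analyse the induced generator objective), but the key step is carried out with a different device. The paper subtracts the $\mathbf{G}$-independent constant $\sum_{w,d}\log\tilde{p}(w,d\mid\mathbf{x})$ and rewrites what remains as a sum of two KL divergences against the mixture $\hat{p}(\mathbf{y}\mid\mathbf{x})=\sum_{t}\tilde{p}(t\mid\mathbf{x})p_{t}(\mathbf{y}\mid\mathbf{x})$ (one for $p_{w,d}$, one for the normalised mixture of the remaining treatment--dosage pairs), and concludes from non-negativity of KL that the minimum is attained iff every $p_{w,d}$ coincides with $\hat{p}$. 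You instead collapse each summand to $-q(\mathbf{y})\,h(\rho_{w,d}(\mathbf{y}))$ and apply Jensen's inequality to the strictly concave binary entropy. The two computations are algebraically equivalent --- both yield the same $\mathbf{G}$-independent lower bound $-\sum_{w,d}h\bigl(\tilde{p}(w,d\mid\mathbf{x})\bigr)$ and the same equality condition $\rho_{w,d}\equiv\pi_{w,d}$ --- so neither buys extra generality. What your write-up adds is rigour at two points the paper leaves implicit: you invoke Overlap exactly where it is needed, to cancel $\pi_{w,d}$ and upgrade $\pi_{w,d}\,p_{w,d}=\pi_{w,d}\,q$ to $p_{w,d}=q$; and you justify interchanging $\min_{\mathbf{G}}$ with $\mathbb{E}_{\tilde{\mathcal{D}}}$ by exhibiting a single generator (each counterfactual drawn independently from its true conditional) that attains the pointwise optimum for every realisation of $\tilde{\mathcal{D}}$, which is what makes the ``for all $\tilde{\mathcal{D}}_w$'' quantifier legitimate. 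Your closing derivation of the marginal-distribution consequence via the factorisation $p_{w,d}(\mathbf{y}\mid\mathbf{x})=p_r(y_{w,d}\mid\mathbf{x})\,p_{\mathbf{G}}(\mathbf{y}_{\neg w,d}\mid\mathbf{x},y_{w,d})$ matches the paper's.
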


\begin{proof}
For fixed $\tilde{\mathcal{D}}$ and $\mathbf{x}$ we note that by substituting the optimal discriminator into $\mathcal{L}(\mathbf{D}, \mathbf{G})$ and subtracting $\sum_{w \in \mathcal{W}} \sum_{i = 1}^{n_w} \log \tilde{p}(w, d_i | \mathbf{x})$ (which is independent of $\mathbf{G}$) we obtain
\begin{align}
    \mathcal{L}(&\mathbf{D}^*, \mathbf{G}) - \int_\mathbf{x} \Bigg(\sum_{w \in \mathcal{W}} \sum_{i = 1}^{n_w} \log \tilde{p}(w, d_i | \mathbf{x}) \Bigg) p(\mathbf{x}) d\mathbf{x} \\
    &= \mathbb{E}_{\tilde{\mathcal{D}}} \int_\mathbf{x} \text{KL}\Big(p_{w, d}(\mathbf{y} | \mathbf{x}) || \hat{p}(\mathbf{y}| \mathbf{x})\Big) + \text{KL}\Big(\frac{1}{1 - \tilde{p}(w, d | \mathbf{x})} \sum_{t' \neq (w, d)} \tilde{p}(t' | \mathbf{x}) p_{t'}(\mathbf{y} | \mathbf{x}) || \hat{p}(\mathbf{y} | \mathbf{x})\Big) d\mathbf{x}
\end{align}
where KL is the KL divergence and $\hat{p}(\mathbf{y} | \mathbf{x}) = \sum_{t \in \tilde{\mathcal{T}}} \tilde{p}(t | \mathbf{x}) p_t(\mathbf{y} | \mathbf{x})$ where $\tilde{\mathcal{T}}$ is the restriction of $\mathcal{T}$ to the dosages in $\tilde{\mathcal{D}}$. We then note that the KL divergence is minimised if and only if the two densities are equal, and we note by definition of $\hat{p}$ this occurs if and only if $p_{w, d}(\mathbf{y}| \mathbf{x}) = p_{w', d'}(\mathbf{y} | \mathbf{x})$ for all $w, d, w', d'$. This also directly implies that the marginal distributions for any fixed treatment-dosage pair agree for all factually observed treatments. In particular, if a sample received treatment $t' \neq t$, we have that the counterfactual generated for $t$ for this sample has the same distribution as the true data generating distribution.
\end{proof}

Finally, we prove the following result, from which Theorem \ref{thm:main} follows immediately.

\begin{theorem}
An optimal solution to the game defined by Equations \ref{eq:hierD_H} - \ref{eq:hierD_w} is also an optimal solution to the game defined by Equation \ref{eq:minimax} if the response curves generated by the generator for different treatments are conditionally independent given the features.
\end{theorem}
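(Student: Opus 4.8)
The plan is to reduce the hierarchical game to the single‑discriminator game of Eq.~\ref{eq:minimax} by showing that, under the stated hypothesis, the optimal hierarchical discriminator $\mathbf{D}_H^*$ coincides, as a function of $(\mathbf{x},\tilde{\mathbf{y}})$, with the optimal single discriminator $\mathbf{D}^*$ characterised in Lemma~\ref{lem:opt_d}. Once this is in hand, the two maps that are minimised over $\mathbf{G}$, namely $\mathbf{G}\mapsto \mathcal{L}(\mathbf{D}_H^*;\mathbf{G})+\lambda\mathcal{L}_S(\mathbf{G})$ (from Eqs.~\ref{eq:hierD_H}--\ref{eq:hierD_w}) and $\mathbf{G}\mapsto \max_{\mathbf{D}}\mathcal{L}(\mathbf{D},\mathbf{G})+\lambda\mathcal{L}_S(\mathbf{G})$ (from Eq.~\ref{eq:minimax}), are literally the same function of $\mathbf{G}$, hence have the same minimisers and the same optimal discriminator attached; so an optimal solution of the hierarchical game is an optimal solution of the single‑discriminator game.

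First I would compute the optimal component discriminators for fixed $\mathbf{G}$ and fixed $\tilde{\mathcal{D}}=\bigcup_w\tilde{\mathcal{D}}_w$, by the same argument as in the proof of Lemma~\ref{lem:opt_d}. Both $\mathcal{L}_\mathcal{W}$ and $\mathcal{L}_d$ are sums of binary cross‑entropy terms, so the pointwise identity $\arg\max_{p\in(0,1)}\bigl(a\log p+b\log(1-p)\bigr)=a/(a+b)$ gives that $\mathbf{D}_\mathcal{W}^{*,w}(\mathbf{x},\tilde{\mathbf{y}})$ is the posterior $\mathbb{P}(W_f=w\mid \mathbf{x},\tilde{\mathbf{y}})$, which in the notation of Lemma~\ref{lem:opt_d} is $\bigl(\sum_i \tilde{p}(w,d^w_i\mid\mathbf{x})\,p_{w,d^w_i}(\tilde{\mathbf{y}}\mid\mathbf{x})\bigr)$ divided by its sum over all treatments; and, because the factor $\mathbb{I}_{\{W_f=w\}}$ restricts $\mathcal{L}_d(\mathbf{D}_w;\mathbf{G})$ to samples whose factual treatment is $w$, that $\mathbf{D}_w^{*,j}(\mathbf{x},\tilde{\mathbf{y}}_w)$ is the posterior $\mathbb{P}(D_f=D^w_j\mid \mathbf{x},\tilde{\mathbf{y}}_w,W_f=w)$, i.e. $\tilde{p}(w,d^w_j\mid\mathbf{x})\,p^{(w)}_{w,d^w_j}(\tilde{\mathbf{y}}_w\mid\mathbf{x})$ divided by its sum over $j$, where $p^{(w)}_{w,d}$ denotes the marginal of $p_{w,d}$ on the block $\tilde{\mathbf{Y}}_w$.

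Next I would assemble the product. By the chain rule,
\begin{equation}
\mathbb{P}(W_f=w,\ D_f=D^w_j\mid \mathbf{x},\tilde{\mathbf{y}})=\mathbb{P}(W_f=w\mid \mathbf{x},\tilde{\mathbf{y}})\cdot\mathbb{P}(D_f=D^w_j\mid \mathbf{x},\tilde{\mathbf{y}},W_f=w)\,,
\end{equation}
and the left‑hand side is exactly $\mathbf{D}^*_{w,j}(\mathbf{x},\tilde{\mathbf{y}})$ of Lemma~\ref{lem:opt_d} (the posterior over which treatment–dosage pair is factual). Since $\mathbf{D}_H^{*,w,j}=\mathbf{D}_\mathcal{W}^{*,w}\times\mathbf{D}_w^{*,j}$ and the first factor already matches, $\mathbf{D}_H^*=\mathbf{D}^*$ holds precisely when the second factor can be computed from $\tilde{\mathbf{y}}_w$ alone, i.e. when $D_f\perp\!\!\!\perp\tilde{\mathbf{Y}}_{\neg w}\mid(\mathbf{X},\tilde{\mathbf{Y}}_w,W_f=w)$; equivalently, when $p_{w,d^w_j}(\tilde{\mathbf{y}}\mid\mathbf{x})$ factorises as $p^{(w)}_{w,d^w_j}(\tilde{\mathbf{y}}_w\mid\mathbf{x})$ times a density of $\tilde{\mathbf{y}}_{\neg w}$ that does not depend on $d^w_j$, in which case the offending factor cancels from numerator and denominator. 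This is where the hypothesis enters: conditional on $W_f=w$, the blocks $\tilde{\mathbf{Y}}_{w'}$ for $w'\neq w$ are entirely generator output, and if the generated curves for distinct treatments are conditionally independent given the features (and given $Y_f$, which is already determined by $\tilde{\mathbf{Y}}_w$ and $W_f$), then $\tilde{\mathbf{Y}}_{\neg w}$ carries no further information about $D_f$ once $\tilde{\mathbf{Y}}_w$ is known.

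The main obstacle is making this last step airtight, because the generator also takes the factual intervention $(W_f,D_f)$ as input, so the other‑treatment curves $\tilde{\mathbf{Y}}_{\neg w}$ could a priori depend on $D_f$ through the generator input even when the curves factorise across treatments. Closing the gap requires either reading ``conditionally independent given the features'' as ``the joint law of the generated curves factorises across treatments given $(\mathbf{X},W_f,Y_f)$ and each block's marginal depends on the generator input only through these'' — which is architecturally natural when each head has its own noise — or invoking that at a converged generator all $\tilde{\mathbf{Y}}_{w'}$ marginals are already equal (Lemma~\ref{lem:single_d}) so their dependence on $D_f$ is vacuous. Once $D_f\perp\!\!\!\perp\tilde{\mathbf{Y}}_{\neg w}\mid(\mathbf{X},\tilde{\mathbf{Y}}_w,W_f=w)$ is established, the argument closes mechanically: $\mathbf{D}_H^*=\mathbf{D}^*=\arg\max_{\mathbf{D}}\mathcal{L}(\mathbf{D},\mathbf{G})$ for every $\mathbf{G}$, so $\mathcal{L}(\mathbf{D}_H^*;\mathbf{G})=\max_{\mathbf{D}}\mathcal{L}(\mathbf{D},\mathbf{G})$, the two $\mathbf{G}$‑objectives coincide, their minimisers coincide, and the attached optimal discriminators coincide — i.e. an optimal solution of the hierarchical game is an optimal solution of the single‑discriminator game, as claimed.
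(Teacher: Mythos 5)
Your proposal is correct and follows essentially the same route as the paper: fix $\mathbf{G}$, derive the optimal treatment and dosage discriminators by the same cross-entropy argument as Lemma~\ref{lem:opt_d}, and show their product equals the single optimal discriminator, with the conditional-independence hypothesis used exactly where you place it — to ensure the other-treatment blocks' conditional density does not depend on the factual dosage so that it cancels between numerator and denominator. Your closing worry about the generator receiving $D_f$ as input is precisely the content of the paper's assumption that $p_{w,d_j}(\mathbf{y}_{\neq w}\mid\mathbf{y},\mathbf{x})=p_{w,d_i}(\mathbf{y}_{\neq w}\mid\mathbf{y},\mathbf{x})$ for all $i,j$, so no additional gap remains.
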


\begin{proof}
To prove this result, it suffices to show that for fixed $\mathbf{G}$, $\mathbf{D}_H^* = \mathbf{D}^*$. To show this, we observe that by the same arguments as given for Lemma \ref{lem:opt_d}, we have the following:
\begin{align} \label{eq:opt_tr_d}
    {\mathbf{D}^w_\mathcal{W}}^*(\mathbf{x}, \mathbf{y}) &= \frac{p(w | \mathbf{x}) \Big(\sum_{i = 1}^{n_w} \tilde{p}(d_i | \mathbf{x}, w) p_{w, d_i}(\mathbf{y} | \mathbf{x})\Big)}{\sum_{w' \in \mathcal{W}} \Big(p(w' | \mathbf{x}) \sum_{i = 1}^{n_w} \tilde{p}(d_i | \mathbf{x}, w)\Big)} \\ \label{eq:opt_dos_d}
    {\mathbf{D}^j_w}^*(\mathbf{x}, \mathbf{y}_w) &= \frac{\tilde{p}(d_j | \mathbf{x}, w)p_{w, d_j}(\mathbf{y}_w | \mathbf{x})}{\sum_{i=1}^{n_w} \tilde{p}(d_i | \mathbf{x}, w)p_{w, d_i}(\mathbf{y}_w | \mathbf{x})}
\end{align}
where $\mathbf{y}_w$ is the restriction of $\mathbf{y}$ to the outcomes corresponding to treatment $w$. By multiplying (\ref{eq:opt_dos_d}) by $\frac{p_{w, d_j}(\mathbf{y}_{\neq w} | \mathbf{y}, \mathbf{x})}{p_{w, d_j}(\mathbf{y}_{\neq w} | \mathbf{y}, \mathbf{x})}$ we obtain
\begin{equation} \label{eq:opt_dos_d_adj}
    {\mathbf{D}^j_w}^*(\mathbf{x}, \mathbf{y}_w) = \frac{\tilde{p}(d_j | \mathbf{x}, w)p_{w, d_j}(\mathbf{y} | \mathbf{x})}{\sum_{i=1}^{n_w} \tilde{p}(d_i | \mathbf{x}, w)p_{w, d_i}(\mathbf{y} | \mathbf{x})}
\end{equation}
since the conditional independence assumption implies that $p_{w, d_j}(\mathbf{y}_{\neq w} | \mathbf{y}, \mathbf{x}) = p_{w, d_i}(\mathbf{y}_{\neq w} | \mathbf{y}, \mathbf{x})$ for all $i, j = 1, ..., n_w$.
Multiplying (\ref{eq:opt_tr_d}) and (\ref{eq:opt_dos_d_adj}) together to get $\mathbf{D}_H^{w, j}$, we notice that the denominator in (\ref{eq:opt_dos_d_adj}) cancels with the bracketed term of the numerator in (\ref{eq:opt_tr_d}) to give
\begin{align}
    {\mathbf{D}^{*}_H}^{w, j} &= \frac{p(w | \mathbf{x}) \tilde{p}(d_j | \mathbf{x}, w) p_{w, d_j}(\mathbf{y} | \mathbf{x})}{\sum_{w' \in \mathcal{W}} \Big(p(w' | \mathbf{x}) \sum_{i = 1}^{n_w} \tilde{p}(d_i | \mathbf{x}, w)\Big)} \\
    &= \frac{\tilde{p}(w, d_j | \mathbf{x}) p_{w, d_j}(\mathbf{y}|\mathbf{x})}{\sum_{w' \in \mathcal{W}} \sum_{i = 1}^{n_w} \tilde{p}(w', d_i | \mathbf{x})p_{w', d_i}(\mathbf{y} | \mathbf{x})}
\end{align}
which is equal to the optimal discriminator for the single loss given in Lemma \ref{lem:opt_d}.
\end{proof}

\clearpage

\section{Counterfactual Generator Pseudo-code} \label{app:pseudo}
\begin{algorithm}[H]
	\begin{algorithmic}[1] 
		\State \textbf{Input:} dataset $\mathcal{C} = \{(\mathbf{x}^i, t_f^i, y_f^i) : i = 1, ..., N\}$, batch size $n_{mb}$, number of dosages per treatment $n_d$, number of discriminator updates per iteration $n_D$, number of generator updates per iteration $n_G$,  dimensionality of noise $n_z$, learning rate $\alpha$
		\State \textbf{Initialize:} $\theta_G$, $\theta_\mathcal{W}$, $\{\theta_w\}_{w \in \mathcal{W}}$
		\While{$\mathbf{G}$ has not converged}
		\Statex Discriminator updates
		\For{$i = 1, ..., n_D$}
		\State Sample $(\mathbf{x}_1, (w_1, d_1), y_1), ..., (\mathbf{x}_{n_{mb}}, (w_{n_{mb}}, d_{n_{mb}}), y_{n_{mb}})$ from $\mathcal{C}$
		\State Sample generator noise $\mathbf{z}_j = (z_1^j, ..., z_{n_z}^j)$ from $\text{Unif}([0, 1]^{n_z})$ for $j = 1, ..., n_{mb}$
		\For{$w \in \mathcal{W}$}
		\For{$j = 1, ..., n_{mb}$}
		\State Sample $\tilde{D}_w^j = (d^{w, j}_1, ..., d^{w, j}_{n_d})$ independently and uniformly from $(\mathcal{D}_w)^{n_d}$
		\State Set $\tilde{\mathbf{y}}_w^j$ according to Eq. \ref{eq:tildey} 
		\EndFor
		\State Calculate gradient of dosage discriminator loss
		\begin{small}
		\begin{equation*}
		    g_w \gets \nabla_{\theta_w} - \left[\sum_{\{j : w_j = w\}} \sum_{k = 1}^{n_d} \mathbb{I}_{\{d_j = d^{w, j}_k\}} \log \mathbf{D}_w(\mathbf{x}_j, \tilde{\mathbf{y}}_w^j) + \mathbb{I}_{\{d_j \neq d^{w, j}_k\}} \log(1 - \mathbf{D}_w(\mathbf{x}_j, \tilde{\mathbf{y}}_w^j)) \right]
		\end{equation*}
		\end{small}
		\State Update dosage discriminator parameters $\theta_w \gets \theta_w + \alpha g_w$
		\EndFor
		\State Set $\tilde{\mathbf{y}}_j = (\tilde{\mathbf{y}}_w^j)_{w \in \mathcal{W}}$
		\State Calculate gradient of treatment discriminator loss
		\begin{small}
		\begin{equation*}
		    g_\mathcal{W} \gets \nabla_{\theta_\mathcal{W}} - \left[\sum_{j = 1}^{n_{mb}} \sum_{w \in \mathcal{W}} \mathbb{I}_{\{w_j = w\}} \log \mathbf{D}_\mathcal{W}(\mathbf{x}_j, \tilde{\mathbf{y}}_j) + \mathbb{I}_{\{w_j \neq w\}} \log(1 - \mathbf{D}_\mathcal{W}(\mathbf{x}_j, \tilde{\mathbf{y}}_j)) \right]
		\end{equation*}
		\end{small}
		\State Update treatment discriminator parameters $\theta_\mathcal{W} \gets \theta_\mathcal{W} + \alpha g_\mathcal{W}$
		\EndFor
		\State Generator updates
		\For{$i = 1, ..., n_G$}
		\State Sample $(\mathbf{x}_1, (w_1, d_1), y_1), ..., (\mathbf{x}_{n_{mb}}, (w_{n_{mb}}, d_{n_{mb}}), y_{n_{mb}})$ from $\mathcal{C}$
		\State Sample generator noise $\mathbf{z}_j = (z_1^j, ..., z_{n_z}^j)$ from $\text{Unif}([0, 1]^{n_z})$ for $j = 1, ..., n_{mb}$
		\State Sample $(\tilde{D}_w^j)_{w \in \mathcal{W}}$ from $\Pi_{w \in \mathcal{W}} (\mathcal{D}_w)^{n_d}$ for $j = 1, ..., n_{mb}$
		\State Set $\tilde{\mathbf{y}}$ according to Eq. \ref{eq:tildey}
		\State Calculate gradient of generator loss
		\begin{small}
		\begin{eqnarray*}
		    g_G \gets \nabla_{\theta_G} \Bigg[\sum_{j = 1}^{n_{mb}} \sum_{w \in \mathcal{W}} \sum_{l = 1}^{n_d} \mathbb{I}_{\{w_j = w, d_j = d^{w, j}_l\}} \log (\mathbf{D}_\mathcal{W}^w(\mathbf{x}_j, \tilde{\mathbf{y}}_j)_w \times \mathbf{D}^l_w(\mathbf{x}_j, \tilde{\mathbf{y}}_w^j)_l) \\
		    + \mathbb{I}_{\{w_j \neq w, d_j \neq d^{w, j}_l\}} \log(1 - (\mathbf{D}^w_\mathcal{W}(\mathbf{x}_j, \tilde{\mathbf{y}}_j) \times D^l_w(\mathbf{x}_j, \tilde{\mathbf{y}}_w^j))) \Bigg]
		\end{eqnarray*}
		\end{small}
		\State Update generator parameters $\theta_G \gets \theta_G + \alpha g_G$
		\EndFor
		\EndWhile
		\State \textbf{Output:} $\mathbf{G}$
	\end{algorithmic}
	\caption{Training of the generator in SCIGAN}\label{alg:generator}
\end{algorithm}

\clearpage

\section{Inference Network} \label{app:infnet}
To generate dose-response curves for new samples, we learn an inference network, $\mathbf{I} : \mathbf{X} \times \mathcal{T} \to \mathcal{Y}$. This inference network is trained using the original dataset and the learned counterfactual generator. As with the training of the generator and discriminator, we train using a random set of dosages, $\tilde{\mathcal{D}}_w$. The loss is given by
\begin{equation}
    \mathcal{L}_I (\mathbf{I}) = \mathbb{E}\Bigg[\sum_{w \in \mathcal{W}} \sum_{d \in \tilde{\mathcal{D}}_w} (\tilde{Y}(w, d) - \mathbf{I}(\mathbf{X}, (w,d)))^2\Bigg]\,,
\end{equation}
where $\tilde{Y}(w, d)$ is $Y_f$ if $T_f = (w, d)$ or given by the generator if $T_f \neq (w, d)$. The expectation is taken over $\mathbf{X}, T_f, Y_f, \mathbf{Z}$ and $\tilde{\mathcal{D}}_w$.

\subsection{Pseudo-code for training the Inference Network}
\begin{algorithm}
	\begin{algorithmic}[1] 
		\State \textbf{Input:} dataset $\mathcal{C} = \{(\mathbf{x}^i, t_f^i, y_f^i) : i = 1, ..., N\}$, trained generator $\mathbf{G}$, batch size $n_{mb}$, number of dosages per treatment $n_d$,  dimensionality of noise $n_z$, learning rate $\alpha$
		\State \textbf{Initialize:} $\theta_I$
		\While {$\mathbf{I}$ has not converged}
		    \State Sample $(\mathbf{x}_1, (w_1, d_1), y_1), ..., (\mathbf{x}_{n_{mb}}, (w_{n_{mb}}, d_{n_{mb}}), y_{n_{mb}})$ from $\mathcal{C}$
		    \State Sample generator noise $\mathbf{z}_j = (z_1^j, ..., z_{n_z}^j)$ from $\text{Unif}([0, 1]^{n_z})$ for $j = 1, ..., n_{mb}$
		    
		    \For{$j = 1, ..., n_{mb}$}
		        \For{$w \in \mathcal{W}$}
    		        \State Sample $\tilde{D}_w^j = (d^{w, j}_1, ..., d^{w, j}_{n_d})$ independently and uniformly from $(\mathcal{D}_w)^{n_d}$
		            \State Set $\tilde{\mathbf{y}}_w^j$ according to Eq. \ref{eq:tildey}  \ref{eq:tildey}
    		    \EndFor
            \EndFor
            \State Calculate gradient of inference network loss
        	\begin{eqnarray*}
        	    g_I \gets - \nabla_{\theta_I} \Bigg[\sum_{j = 1}^{n_{mb}} \sum_{w \in \mathcal{W}} \sum_{l = 1}^{n_d} (\tilde{\mathbf{y}}^j_w)_l - \mathbf{I}(\mathbf{x}_j, (w, d_l^{w, j}))^2 \Bigg] \\ 
        	\end{eqnarray*}
        	\State Update inference network parameters $\theta_I \gets \theta_I + \alpha g_I$
        \EndWhile
		\State \textbf{Output:} $\mathbf{I}$
	\end{algorithmic}
	\caption{Training of the inference network in SCIGAN}\label{alg:inference}
\end{algorithm}

\clearpage

\section{Architecture} \label{app:arch}

\subsection{Definitions of Permutation Invariance and Permutation Equivariance} \label{app:inveqvdef}
The notions of what it means for a function to be {\em permutation invariant} and {\em permutation equivariant} with respect to (a subset of) its inputs are given below in definitions \ref{def:invar} and \ref{def:equi}, respectively. Let $\mathcal{U}$, $\mathcal{V}$, $\mathcal{C}$ be some spaces. Let $m \in \mathbb{Z}^+$.
\begin{definition} \label{def:invar}
A function $f : \mathcal{U}^m \times \mathcal{V} \to \mathcal{C}$ is permutation {\em invariant} with respect to the space $\mathcal{U}^m$ if for every $\mathbf{u} = (u_1, ..., u_m) \in \mathcal{U}^m$, every $v \in \mathcal{V}$ and every permutation, $\sigma$, of $\{1, ..., m\}$ we have \begin{equation}
    f(u_1, ..., u_m, v) = f(u_{\sigma(1)}, ..., u_{\sigma(m)}, v)\,.
\end{equation}
\end{definition}

\begin{definition} \label{def:equi}
A function $f : \mathcal{U}^m \times \mathcal{V} \to \mathcal{C}^m$ is permutation {\em equivariant} with respect to the space $\mathcal{U}^m$ if for every $\mathbf{u} \in \mathcal{U}^m$, every $v \in \mathcal{V}$ and every permutation, $\sigma$, of $\{1, ..., m\}$ we have $f(u_{\sigma(1)}, ..., u_{\sigma(m)}, v) = (f_{\sigma(1)}(\mathbf{u}, v), ..., f_{\sigma(m)}(\mathbf{u}, v))$, where $f_j(\mathbf{u}, v)$ is the $j$th element of $f(\mathbf{u}, v)$.
\end{definition}

To build up functions that are permutation invariant and permutation equivariant we make the following observations: (1) the composition of any function with a permutation invariant function is permutation invariant, (2) the composition of two permutation equivariant functions is permutation equivariant.

As noted in Section \ref{sec:hier_arch}, the basic building block we use for equivariant functions is defined in terms of equivariance input, $\mathbf{u}$, and auxiliary input, $\mathbf{v}$ by
\begin{equation} \label{eq:equifeat}
f_{equi}(\mathbf{u}, \mathbf{v}) = \sigma(\lambda \mathbf{I}_m \mathbf{u} + \gamma (\mathbf{1}_m \mathbf{1}_m^T)\mathbf{u} + (\mathbf{1}_m\Theta^T)\mathbf{v})\,.
\end{equation}

\clearpage

\section{Single Discriminator Model} \label{app:single}
In the paper we developed a hierarchical discriminator and demonstrated that it performs significantly better than the single discriminator setup that we now describe in this section.

\subsection{Single Discriminator}
In the single model, we will aim to learn a single discriminator, $\mathbf{D}$, that outputs $\mathbb{P}((W_f, D_f) = (w, d) | \mathbf{X},  \tilde{\mathcal{D}}_w, \tilde{\mathbf{Y}})$ for each $w \in \mathcal{W}$ and $d \in \tilde{\mathcal{D}}_w$. We will write $\mathbf{D}^{w, d}(\cdot)$ to denote the output of $\mathbf{D}$ that corresponds to the treatment-dosage pair $(w, d)$. We define the loss, $\mathcal{L}_D$, to be
\begin{equation}
\mathcal{L}_D(\mathbf{D}; \mathbf{G}) = - \mathbb{E} \Bigg[ \sum_{w \in \mathcal{W}} \sum_{d \in \tilde{\mathcal{D}}_w} \mathbb{I}_{\{T_f = (w, d)\}} \log \mathbf{D}^{w, d}(\mathbf{X}, \tilde{\mathbf{Y}}) + \mathbb{I}_{\{T_f \neq (w, d)\}} \log (1 - \mathbf{D}^{w, d}(\mathbf{X}, \tilde{\mathbf{Y}})) \Bigg]
\end{equation}
where the expectation is taken over $\mathbf{X}, \{\tilde{\mathcal{D}}_w\}_{w \in \mathcal{W}}, \tilde{\mathbf{Y}}, W_f$ and $D_f$ and we note that the dependence on $\mathbf{G}$ is through $\tilde{\mathbf{Y}}$. Our single discriminator will be trained to minimise this loss directly. The generator GAN-loss, $\mathcal{L}_G$, is then defined by
\begin{equation}
    \mathcal{L}_G(\mathbf{G}) = - \mathcal{L}_D(\mathbf{D}^*; \mathbf{G})
\end{equation}
where $\mathbf{D}^*$ is the optimal discriminator given by minimising $\mathcal{L}_D$. The generator will be trained to minimise $\mathcal{L}_G + \lambda \mathcal{L}_S$.

\subsection{Single Discriminator Architecture}

\begin{wrapfigure}{r}{0.45\columnwidth}
    \vspace{-0.5cm}
    \centering
    \includegraphics[height=7cm]{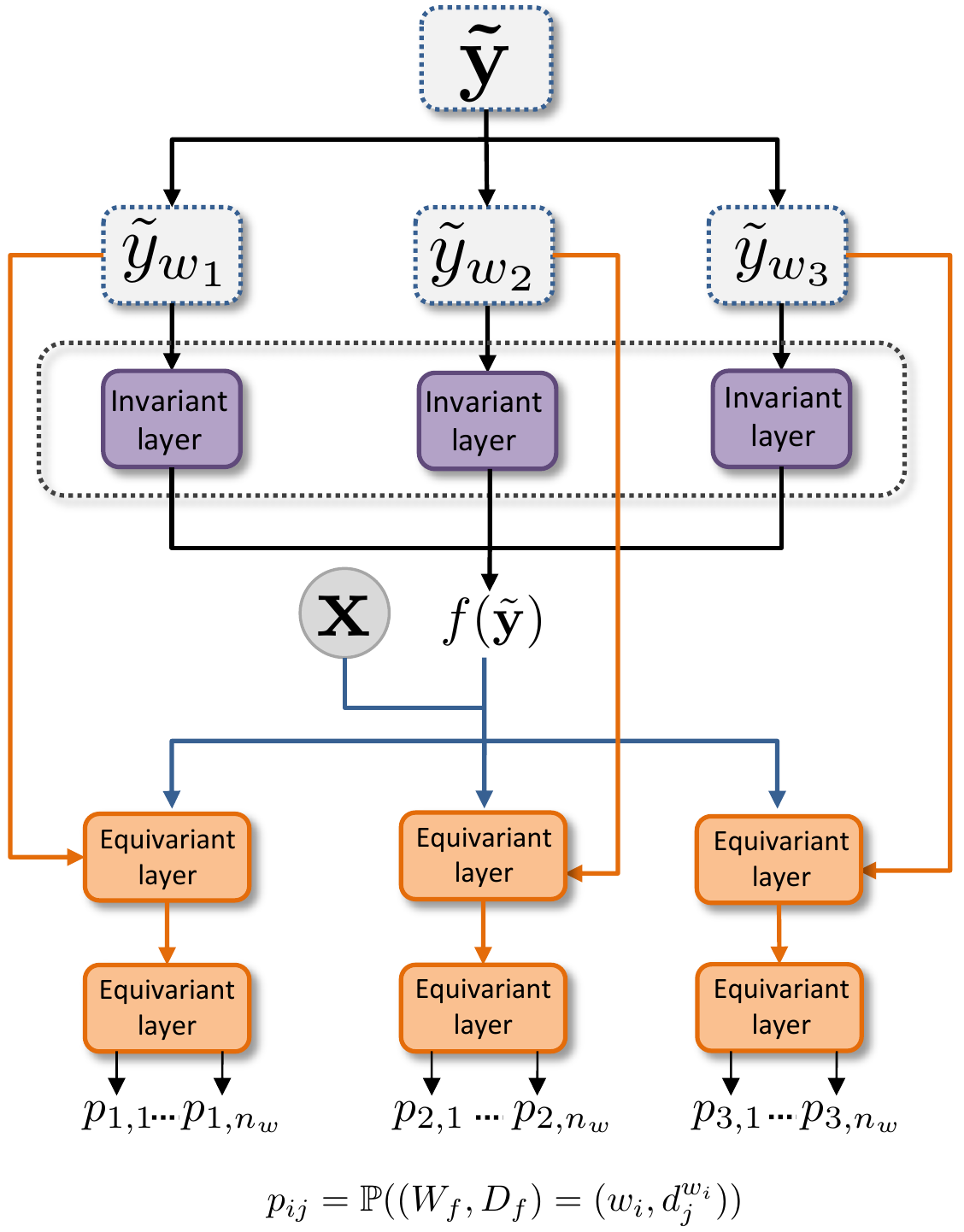}
    \caption{Overview of the single discriminator architecture.}
    \label{fig:sd_arch}
    \vspace{-0.1cm}
\end{wrapfigure}

In the case of the single discriminator, we want the output of $\mathbf{D}$ corresponding to each treatment $w \in \mathcal{W}$, i.e. $(\mathbf{D}^{w, 1}, ..., \mathbf{D}^{w, n_w})$, to be permutation equivariant with respect to $\tilde{\mathbf{y}}_w$ and permutation invariant with respect to each $\tilde{\mathbf{y}}_v$ for $v \in \mathcal{W} \setminus \{w\}$. To achieve this, we first define a function $f : \prod_{w \in \mathcal{W}} (\mathcal{D}_w \times \mathcal{Y})^{n_w} \to \mathcal{H}_S$ and require that this function be permutation invariant with respect to each of the spaces $(\mathcal{D}_w \times \mathcal{Y})^{n_w}$. For each treatment, $w \in \mathcal{W}$, we introduce a multitask head, $f_w : \mathcal{X} \times \mathcal{H}_S \times (\mathcal{D}_w \times \mathcal{Y})^{n_w} \to [0, 1]^{n_w}$, and require that each of these functions be permutation equivariant with respect to their corresponding input space $(\mathcal{D}_w \times \mathcal{Y})^{n_w}$ but they can depend on the features, $\mathbf{x} \in \mathcal{X}$, and invariant latent representation coming from $f$ arbitrarily. Writing $f_w^j$ to denote the $j$th output of $f_w$, the output of the discriminator given input features, $\mathbf{x}$, and generated outcomes, $\tilde{\mathbf{y}}$, is defined by
\begin{equation}
    \mathbf{D}^{w, j}(\mathbf{x}, \tilde{\mathbf{y}}) = f_w^i(\mathbf{x}, f(\tilde{\mathbf{y}}), \tilde{\mathbf{y}}_w).
\end{equation}

To construct the function $f$, we concatenate the outputs of several invariant layers of the form given in Eq. \ref{eq:equifeat} that each individually act on the spaces $(\mathcal{D}_w \times \mathcal{Y})^{n_w}$. That is, for each treatment, $w \in \mathcal{W}$ we define a map $f^w_{inv} : (\mathcal{D}_w \times \mathcal{Y})^{n_w} \to \mathcal{H}^w_S$ by substituting $\tilde{\mathbf{y}}_w$ for $\mathbf{u}$ in Eq. \ref{eq:equifeat}. We then define $\mathcal{H}_S = \prod_{w \in \mathcal{W}} \mathcal{H}_S^w$ and  $f(\tilde{\mathbf{y}}) = (f^{w_1}_{inv}(\tilde{\mathbf{y}}_{w_1}), ..., f^{w_k}_{inv}(\tilde{\mathbf{y}}_{w_k}))$.

Each $f_w$ will consist of two layers of the form given in Eq. (\ref{eq:equifeat}) with the equivariance input, $\mathbf{u}$, to first layer being $\tilde{\mathbf{y}}_w$ and to the second layer being the output of the first layer and the auxiliary input, $\mathbf{v}$, to the first layer being the concatenation of the features and invariant representation, i.e. $(\mathbf{x}, f(\tilde{\mathbf{y}}))$ and then no auxiliary input to the second layer.

A diagram depicting the architecture of the single discriminator model can be found in Fig. \ref{fig:sd_arch}.

\clearpage

\section{Ablation Studies architectures}
\label{app:ablation_arch}

\begin{figure}[h]
    \subfloat[Generator without multitask heads]{\includegraphics[height=5.5cm]{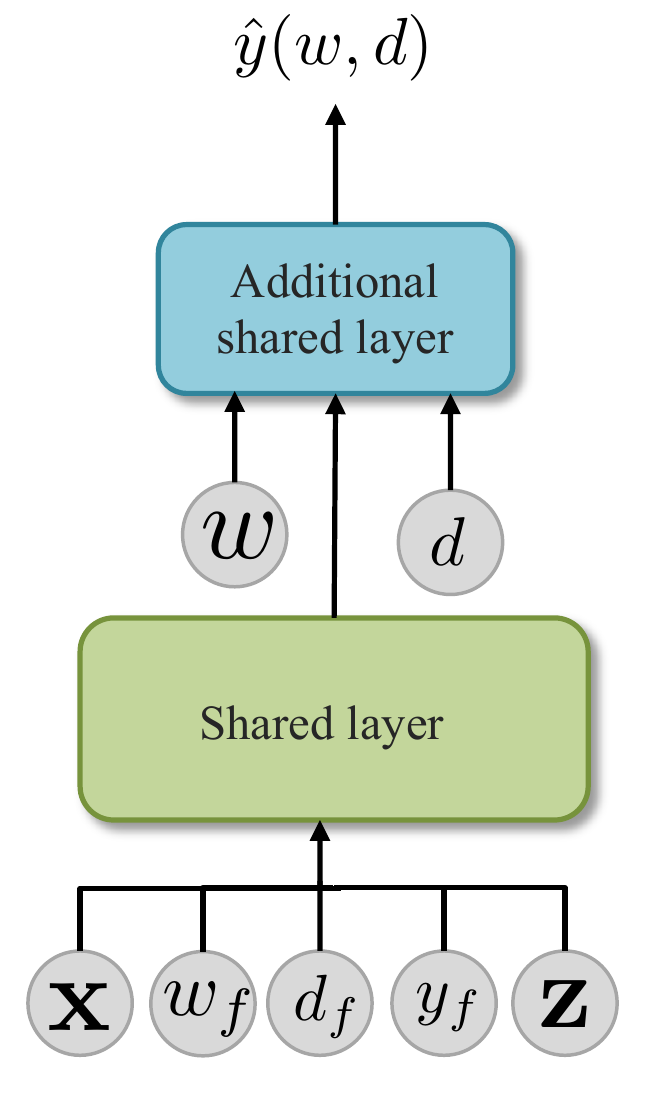} }
    \quad
	\subfloat[Treatment discriminator with fully connected (FC) layers instead of invariant layers]{\includegraphics[height=5.5cm]{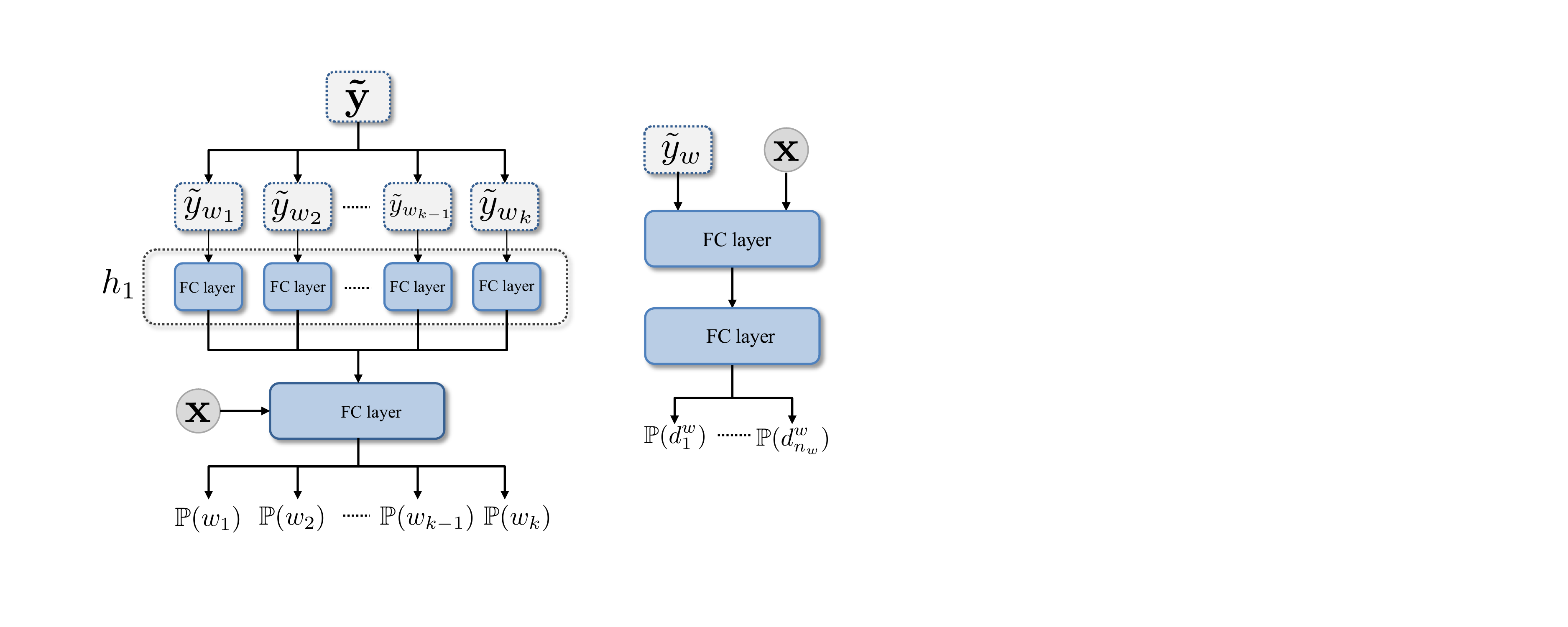} }
    \quad
    \subfloat[Dosage discriminator with FC layers instead of equivariant layers]{\includegraphics[height=5.5cm]{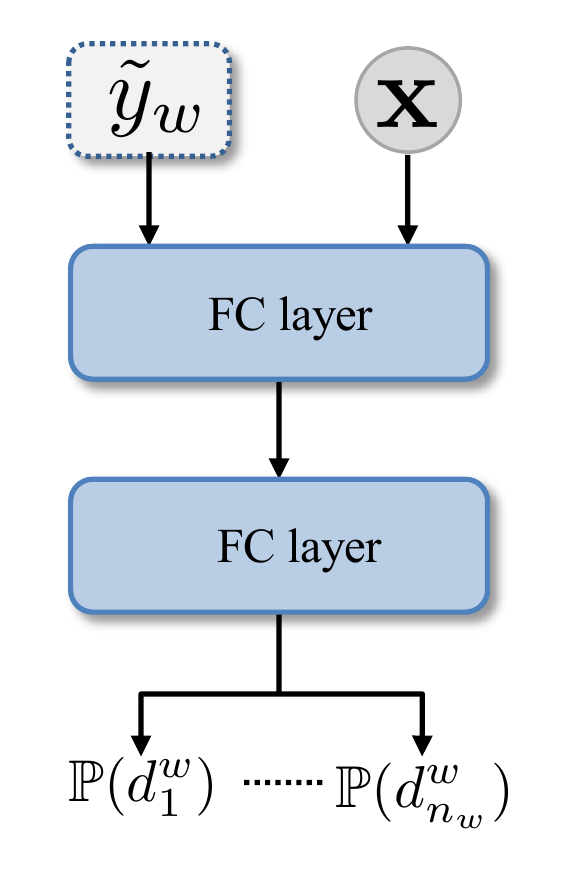}}%
    \caption{Architecture of the generator without multitask and discriminator without the invariant/equivariant layers used in the ablation studies.}
\vspace{-5mm}
\end{figure}

\section{Dataset descriptions} \label{app:data}
\textbf{TCGA:} The TCGA dataset consists of gene expression measurements for cancer patients \cite{weinstein2013cancer}. There are 9659 samples for which we used the measurements from the 4000 most variable genes. The gene expression data was log-normalized and each feature was scaled in the $[0, 1]$ interval. For each patient, the features were scaled to have norm $1$. We give meaning to our treatments and dosages by considering the treatment as being chemotherapy/radiotherapy/immunotherapy and their corresponding dosages. The outcome can be thought of as the risk of cancer recurrence \cite{schwab2019learning}. We used the same version of the TCGA dataset as used by DRNet \url{https://github.com/d909b/drnet}.

\textbf{News:} The News dataset consists of word counts for news items. We extracted 10000 news items (randomly sampled) each with 2858 features. As in \cite{johansson2016learning, schwab2019learning}, we give meaning to our treatments and dosages by considering the treatment as being the viewing device (e.g. phone, tablet etc.) used to read the article and the dosage as being the amount of time spent reading it. The outcome can be thought of as user satisfaction. We used the same version of the News dataset as used by DRNet \url{https://github.com/d909b/drnet}. 

\textbf{MIMIC III:} The Medical Information Mart for Intensive Care (MIMIC III) \cite{mimiciii} database consists of observational data from patients in the ICU. We extracted 3000 patients (randomly sampled) that receive antibiotics treatment and we used as features 9 clinical covariates, namely age, temperature, heart rate, systolic and diastolic blood pressure, SpO2, FiO2, glucose, and white blood cell count, measured at start of ICU stay. Again, the features were scaled in the $[0, 1]$ interval. In this setting, we can considered as treatments the different antibiotics and their corresponding dosages. 

For a summary description of the datasets, see table \ref{tab:datasets}. The datasets are split into 64/16/20\% for training, validation and testing respectively. The validation dataset is used for hyperparameter optimization.

\begin{table*}[h!]
    	\centering
    	\begin{small}
    	\begin{tabular}{lccc}
    		\toprule
    		\multirow{2}{*}{\textbf{}} & \multicolumn{1}{c}{\textbf{TCGA}} &
    		\multicolumn{1}{c}{\textbf{News}} & \multicolumn{1}{c}{\textbf{MIMIC}} \\
    		\midrule
    		Number of samples & 9659& 10000 & 3000 \\
    		Number of features & 4000  & 2858 & 9 \\
    		Number of treatments &  3* & 3 & 2   \\
    		\bottomrule
    	\end{tabular}
    	\end{small}
    	\caption{Summary description of datasets. *: for our final experiment in Appendix \ref{app:add_num_treat} we increase the number of treatments in TCGA to 6 and 9.}    
    	\label{tab:datasets}
    \end{table*}

\newpage
\section{Dosage bias} \label{app:dosass}
In order to create dosage-assignment bias in our datasets, we assign dosages according to $d_w | \mathbf{x} \sim$ Beta$(\alpha, \beta_w)$. The selection bias is controlled by the parameter $\alpha \geq 1$. When we set $\beta_w = \frac{\alpha - 1}{d^*_w} + 2 - \alpha$ (which ensures that the mode of our distribution is $d_w^*$), we can write the variance of $d_w$ in terms of $\alpha$ and $d^*_w$ as follows
\begin{equation}
    \text{Var}(d_w) = \frac{\frac{\alpha^2 - \alpha}{d^*_w} + 2\alpha - \alpha^2}{(\frac{\alpha - 1}{d^*_w} + 2)^2(\frac{\alpha - 1}{d^*_w} + 3)} \approx \frac{c\alpha^2}{d\alpha^3}\,.
\end{equation}
We see that the variance of our Beta distribution therefore decreases with $\alpha$, resulting in the sampled dosages being closer to the optimal dosage, thus resulting in higher dosage-selection bias. In addition we note that the Beta$(1, 1)$ distribution is in fact the uniform distribution, corresponding to the dosages being sampled independently of the patient features, resulting in no selection bias when $\alpha = 1$.

\clearpage

\section{Benchmarks} \label{app:benchmarks}
We use the publicly available GitHub implementation of DRNet provided by \cite{schwab2019learning}: \url{https://github.com/d909b/drnet}. Moreover, we also used a GPS implementation similar to the one from  \url{https://github.com/d909b/drnet} which uses the \texttt{causaldrf} R package \cite{galagate2016causal}. More spcifically, the GPS implementation uses a normal treatment model, a linear treatment formula and a $2$-nd degree polynomial for the outcome. Moreover, for the TCGA and News datasets, we performed PCA and only used the 50 principal components as input to the GPS model to reduce computational complexity.

\textbf{Hyperparameter optimization:} The validation split of the dataset is used for hyperparameter optimization. For the DRNet benchmarks we use the same hyperparameter optimization proposed by \cite{schwab2019learning} with the hyperparameter search ranges described in Table \ref{tab:hyperparameters_drnet}. For SCIGAN, we use the hyperparameter optimization method proposed in GANITE \cite{yoon2018ganite}, where we use the complete dataset from the counterfactual generator to evaluate the MISE on the inference network. We perform a random search \cite{bergstra2012random} for hyperparameter optimization over the search ranges in Table \ref{tab:hyperparameters_ganite}. For all experiments with SCIGAN, we used
5000 training iterations for the GAN network and 10000 training iterations for the inference network. This number of training iterations was chosen to ensure convergence
of the generator loss, discriminator loss, as well as of the supervised loss. For a fair comparison, for the MLP-M model we used the same architecture used in the inference network of SCIGAN. Similarly, for the MLP model we use the same architecture as for the MLP-M, but without the multitask heads. 

\begin{table*}[h!]
    	\centering
    	\begin{small}
    	\begin{tabular}{lc}
    		\toprule
    		{\textbf{Hyperparameter}} & {Search range} \\
    		\midrule
    		Batch size &  32, 64, 128 \\
    		Number of units per hidden layer & 24, 48, 96, 192 \\
    		Number of hidden layers & 2, 3  \\
    		Dropout percentage  & 0.0, 0.2   \\
    		Imbalance penalty weight$^{*}$ & 0.1, 1.0, 10.0 \\ 
    		\midrule
    		& Fixed \\
    		\midrule
    		Number of dosage strata $E$ & 5 \\
    		\bottomrule
    	\end{tabular}
    	\end{small}
    	\caption{Hyperparameters search range for DRNet. *: For the DRNet model using Wasserstein regularization only.}    
    	\label{tab:hyperparameters_drnet}
\end{table*}

\begin{table*}[h!]
    	\centering
    	\begin{small}
    	\begin{tabular}{lc}
    		\toprule
    		{\textbf{Hyperparameter}} & {Search range}  \\
    		\midrule
    		Batch size &  64, 128, 256 \\
    		Number of units per hidden layer & 32, 64, 128 \\
    		Size of invariant and equivariant representations & 16, 32, 64, 128 \\
    		\midrule
    		& Fixed \\
    		\midrule
    		Number of hidden layers per multitask head & 2 \\
    		Number of dosage samples & 5 \\
    		$\lambda$ & 1 \\
    		Optimization & Adam Moment Optimization \\
    		\bottomrule
    	\end{tabular}
    	\end{small}
    	\caption{Hyperparameters search range for SCIGAN.}    
    	\label{tab:hyperparameters_ganite}
\end{table*}

The hyperparameters used to generate the results for SCIGAN are given in Table \ref{tab:hyperparameters_results_scigan}. 

\begin{table*}[h!]
    	\centering
    	\begin{small}
    	\begin{tabular}{lccc}
    		\toprule
    		{\textbf{Hyperparameter}} & {TCGA} & {News} & {MIMIC} \\
    		\midrule
    		Batch size & 128 & 256 & 128  \\
    		Number of units per hidden layer & 64 &  128 & 32 \\
    		Size of invariant and equivariant representations & 16 & 32 & 16  \\
    		Number of hidden layers per multitask head & 2 & 2 & 2 \\
    		Number of dosage samples & 5 & 5 & 5\\
    		$\lambda$ & 1 & 1 & 1 \\
    		\bottomrule
    	\end{tabular}
    	\end{small}
    	\caption{Hyperparameters used for obtaining results.}    
    	\label{tab:hyperparameters_results_scigan}
\end{table*}

The experiments were run on a system with 6CPUs, an Nvidia K80 Tesla GPU and 56GB of RAM.

\clearpage

\section{Metrics} \label{app:metrics}
The Mean Integrated Square Error (MISE) measures how well the models estimates the patient outcome across the entire dosage space:
\begin{align}
\text{MISE} = \frac{1}{N} \frac{1}{k} \sum_{w\in \mathcal{W}}\sum_{i=1}^{N} \int_{\mathcal{D}_w} \Big( y^{i}(w, u) - \hat{y}^{i}(w, u) \Big)^2 \text{d}u \,.
\end{align}

In addition to this, we also compute the mean dosage policy error (DPE) \cite{schwab2019learning} to assess the ability of the model to estimate the optimal dosage point for every treatment for each individual:
\begin{align}
\text{DPE} = \frac{1}{N} \frac{1}{k} \sum_{w\in \mathcal{W}}\sum_{i=1}^{N} \Big( y^{i}(w, d_w^{*}) - {y}^{i}(w, \hat{d}_w^{*}) \Big)^2\,, 
\end{align}
where $d_w^{*}$ is the true optimal dosage and $\hat{d}_w^{*}$ is the optimal dosage identified by the model. The optimal dosage points for a model are computed using SciPy's implementation of Sequential Least SQuares Programming.

Finally, we compute the mean policy error (PE) \cite{schwab2019learning} which compares the outcome of the true optimal treatment-dosage pair to the outcome of the optimal treatment-dosage pair as selected by the model:
\begin{align}
\text{PE} = \frac{1}{N} \sum_{i=1}^{N} \Big( y^{i}(w^*, d_w^{*}) - {y}^{i}(\hat{w}^*, \hat{d}_w^{*}) \Big)^2\,, 
\end{align}
where $w^*$ is the true optimal treatment and $\hat{w}^*$ is the optimal treatment identified by the model. The optimal treatment-dosage pair for a model is selected by first computing the optimal dosage for each treatment and then selecting the treatment with the best outcome for its optimal dosage.

Each of these metrics are computed on a held out test-set.

\clearpage

\section{Additional results} \label{app:addresults}
\subsection{News results for source of gain}
\begin{table*}[h]
        \centering
        \begin{small}
        \setlength\tabcolsep{4.4pt}
        \begin{tabular}{lccc}
        \toprule
        \multirow{2}{*}{}& \multicolumn{3}{c}{\textbf{News}} \\
        & $\sqrt{\text{MISE}}$ & $\sqrt{\text{DPE}}$ & $\sqrt{\text{PE}}$ \\
        \midrule
         Baseline & $6.17 \pm 0.27$ & $6.97 \pm 0.27$ & $6.20 \pm 0.21$\\
        \cmidrule{1-4}
        {+ $\mathcal{L}_S$} & $4.51 \pm 0.16$ & $4.46 \pm 0.12$ & $4.40 \pm 0.11$\\
        \cmidrule{1-4}
        {+ Multitask} &  $4.11 \pm 0.11$ & $4.33 \pm 0.11$ & $4.31 \pm 0.11$\\
        \cmidrule{1-4}
        {+ Hierarchical} & $4.07 \pm 0.05$  & $4.24 \pm 0.11$ & $4.17 \pm 0.12$ \\
        \cmidrule{1-4}
        {+ Inv/Eqv} & $3.71 \pm 0.05$ & $4.14 \pm 0.11$ & $3.90 \pm 0.05$ \\
        \bottomrule
        \end{tabular}
        \end{small}
        \caption{Source of gain analysis for our model on the News dataset. Metrics are reported as Mean $\pm$ Std.
        \label{tab:source_of_gain_news}}
\end{table*}

\subsection{Investigating hyperparameter sensitivity ($n_w$)} \label{app:add_hype}
The performance of the single discriminator causes significant performance drops around $n_w = 9$ across all metrics. As previously noted, this is due to the dimension of the output space (which for $n_w = 9$ is $27$) being too large. Conversely, we see that our hierarchical discriminator shows much more stable performance even when $n_w = 19$. 

Here we present additional results for our investigation of the hyperparameters $n_w$. Fig. \ref{fig:num_dosage_samples_news} reports each of the 3 performance metrics as we increase the number of dosage samples, $n_w$, used to train the discriminators on the News dataset. As with the TCGA results in the main paper we see that the single discriminator suffers a significant performance decrease when $n_w$ is set too high.

\begin{figure}[H]
    \centering
    \subfloat[$\sqrt{\text{MISE}}$]{\includegraphics[height=3.2cm]{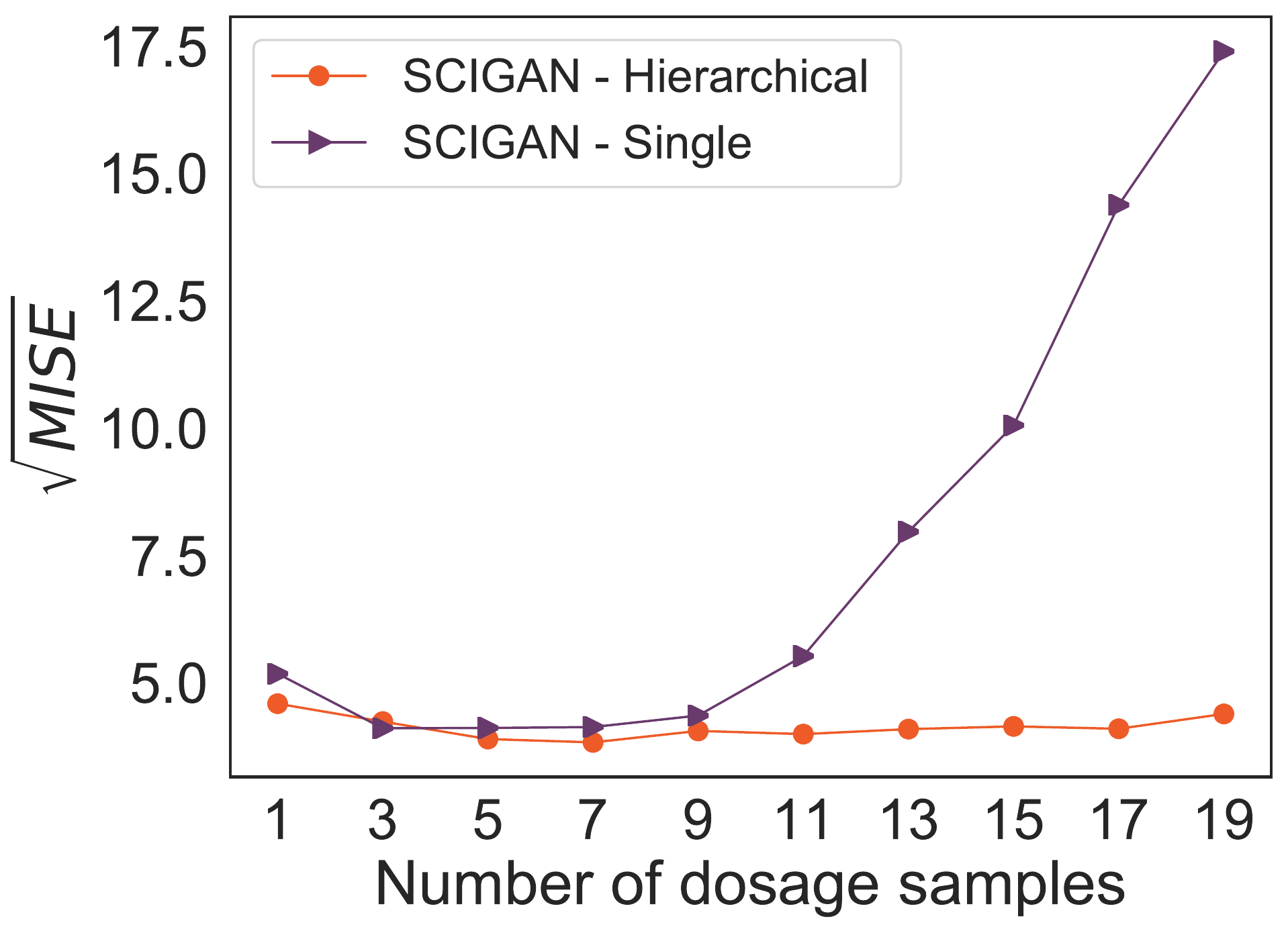} }
    \quad
    \subfloat[$\sqrt{\text{DPE}}$]{\includegraphics[height=3.2cm]{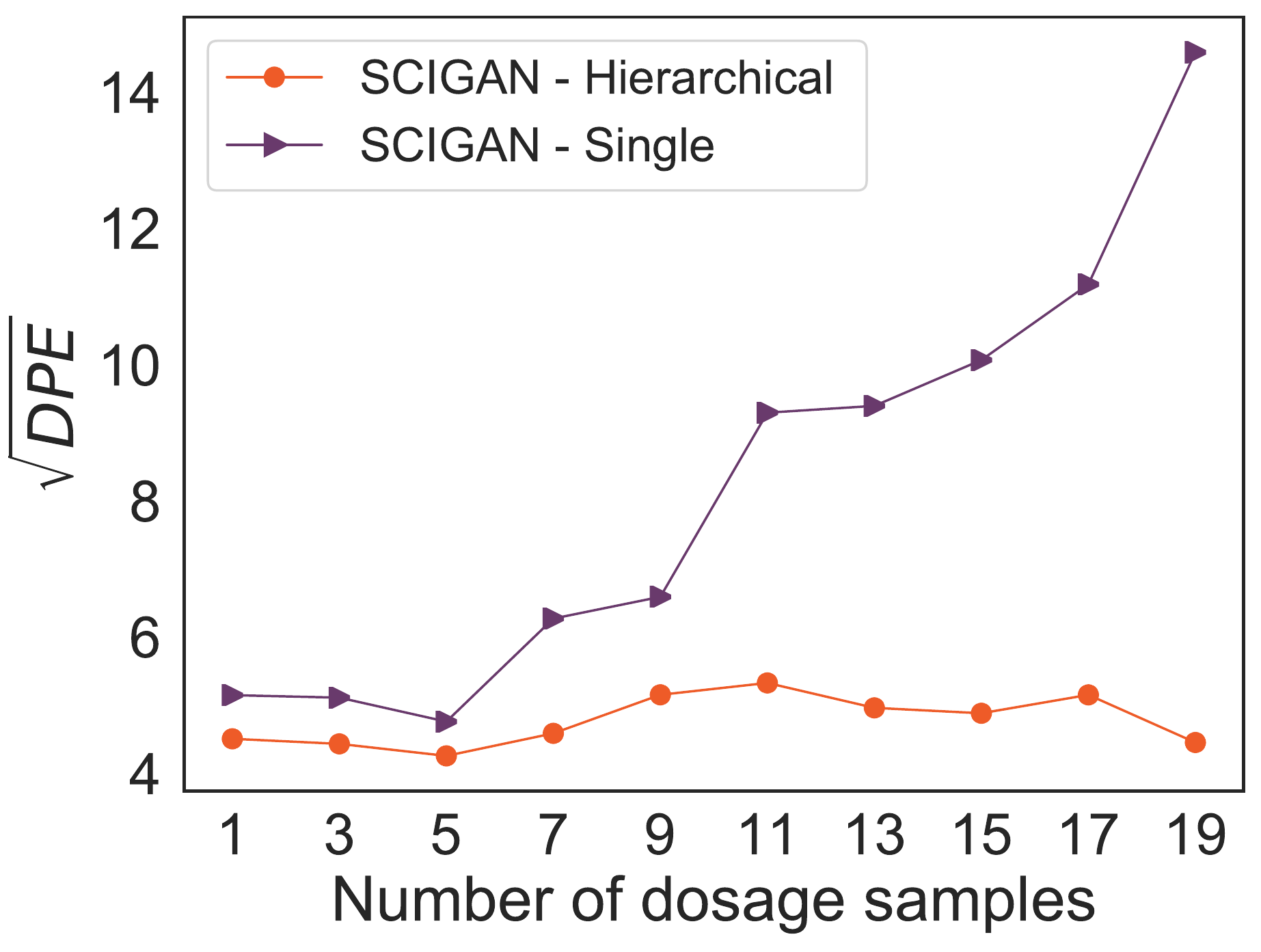}}
    \quad
    \subfloat[$\sqrt{\text{PE}}$]{\includegraphics[height=3.2cm]{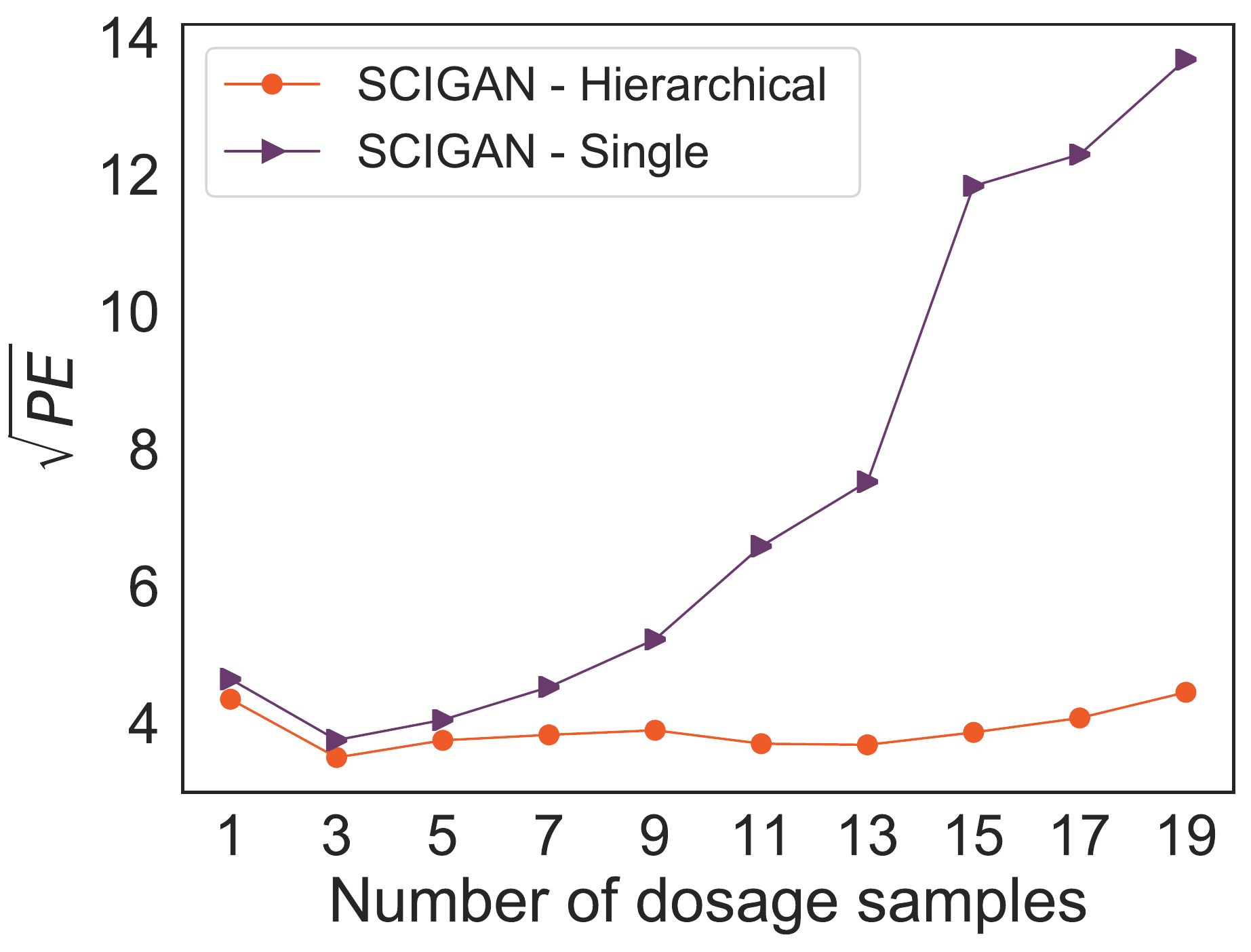}}
    \caption{Performance of single vs. hierarchical discriminator when increasing the number of dosage samples ($n_w$) on News dataset.}
    \label{fig:num_dosage_samples_news}
\end{figure}

\begin{figure*}[h]
    \centering
    \subfloat[$\sqrt{\text{MISE}}$]{\includegraphics[height=3.2cm]{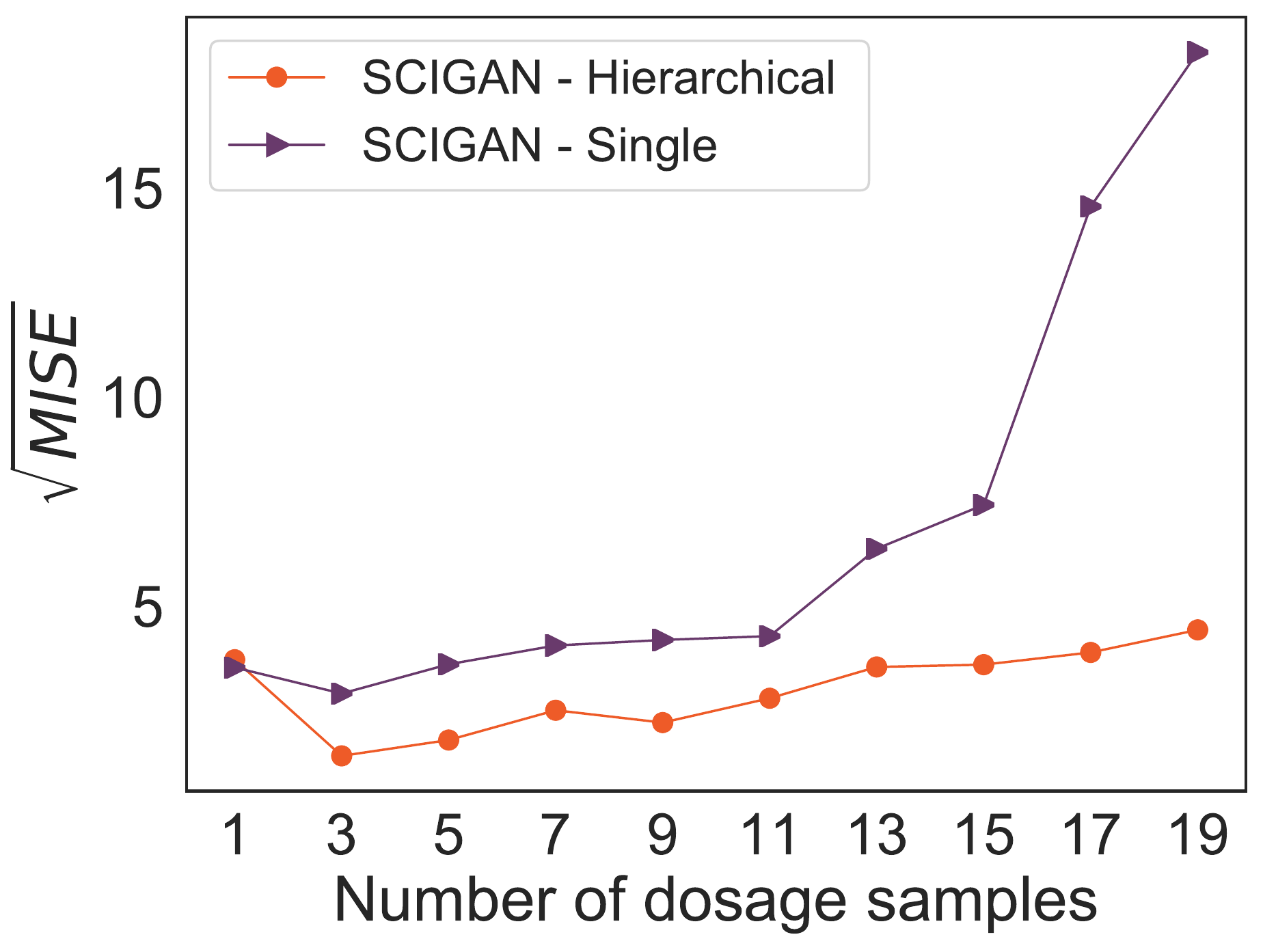} }
    \quad
    \subfloat[$\sqrt{\text{DPE}}$]{\includegraphics[height=3.2cm]{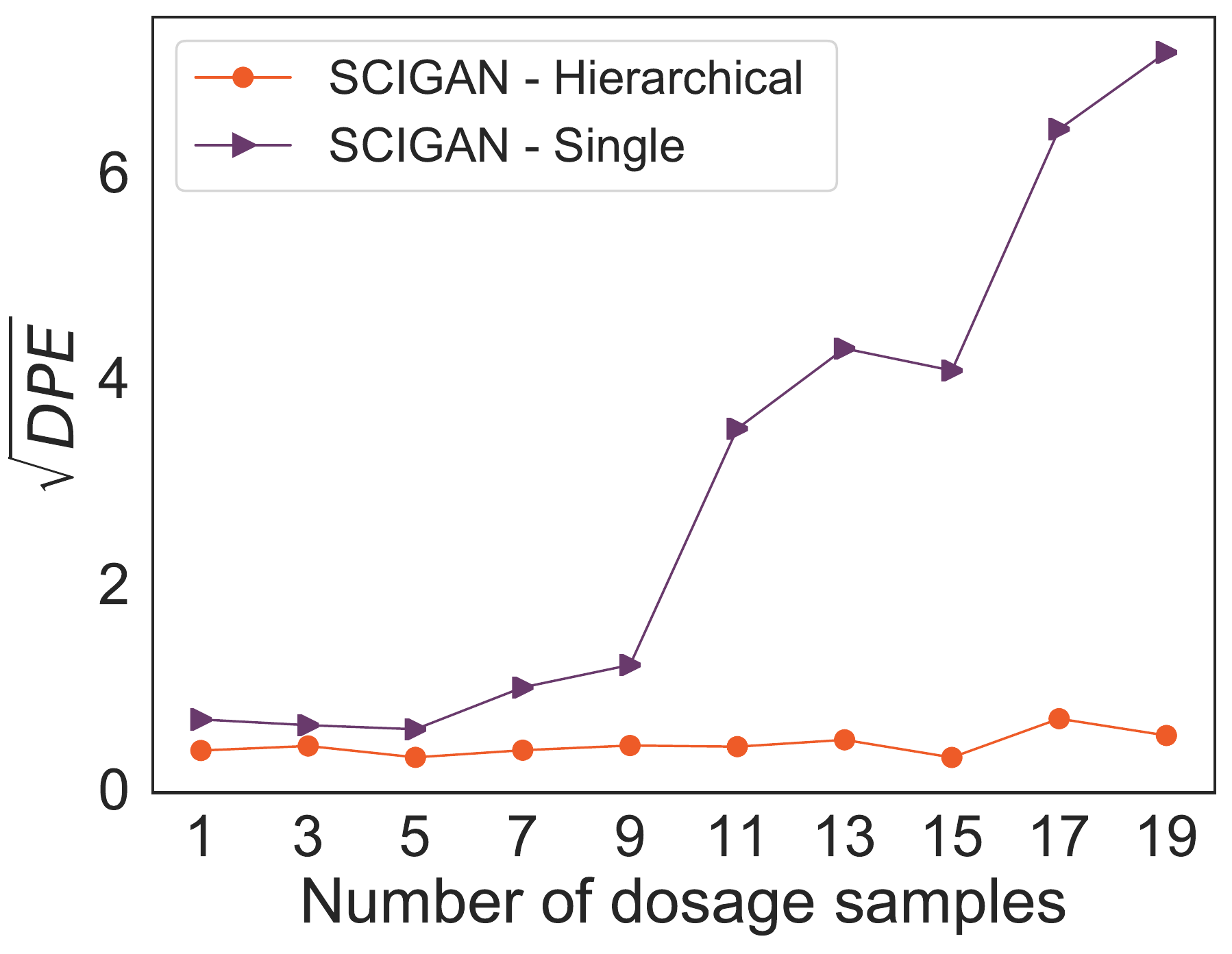}}
    \quad
    \subfloat[$\sqrt{\text{PE}}$]{\includegraphics[height=3.2cm]{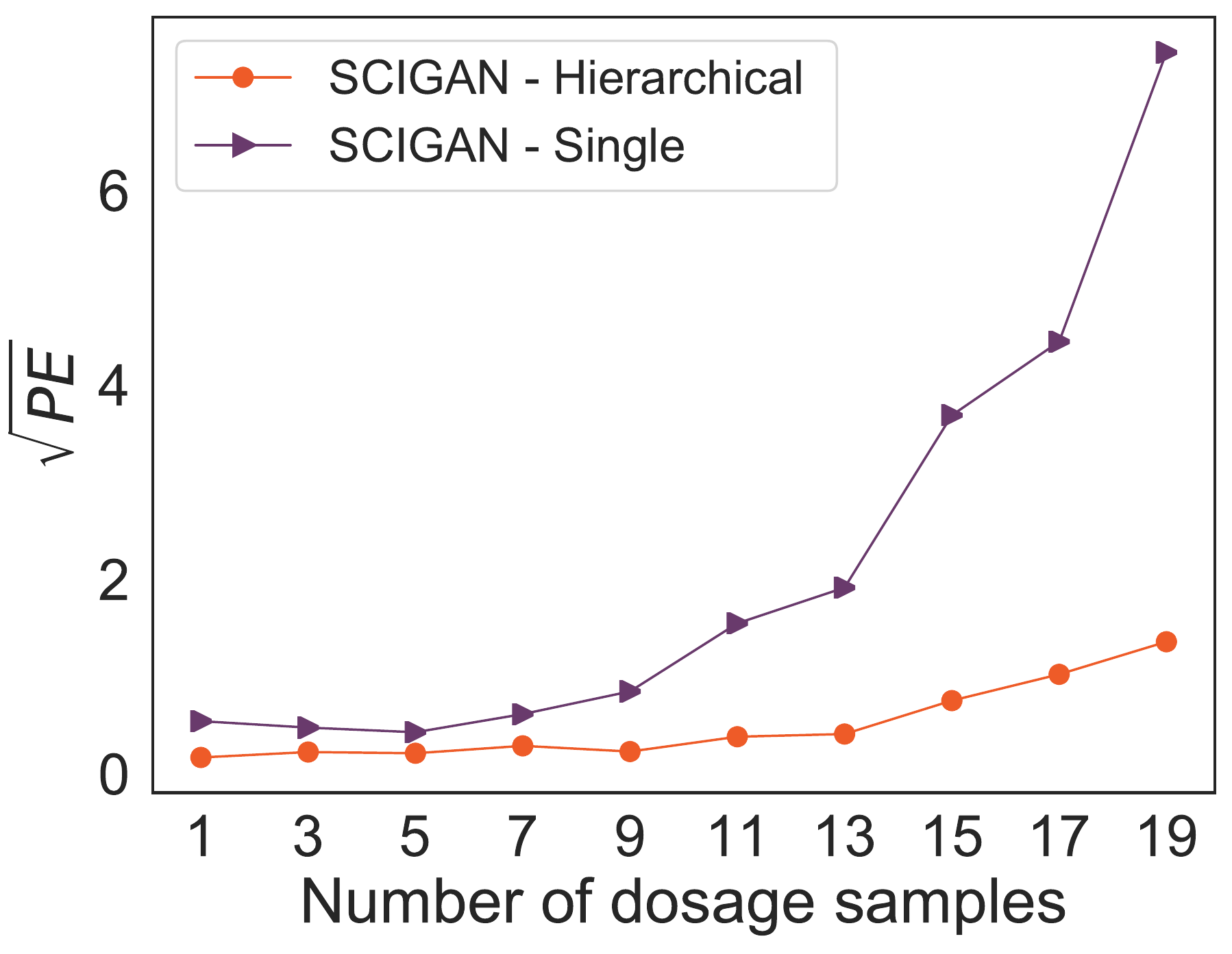}}
    \caption{Performance of single vs. hierarchical discriminator when increasing the number of dosage samples ($n_w$) on TCGA dataset.}
    \label{fig:num_dosage_samples_tcga}
\end{figure*}

\clearpage

\subsection{Dosage Policy Error for Benchmark Comparison} \label{app:add_benchmark}
In Table \ref{tab:benchmarks_dpe} we report the Dosage Policy Error (DPE) corresponding to Section \ref{sec:bench_results} in the main paper.

\begin{table*}[h!]
    	\centering
    	\begin{small}
    	\begin{tabular}{lccc}
    		\toprule
    		\multirow{2}{*}{\textbf{Methods}} & \multicolumn{1}{c}{\textbf{TCGA}} &
    		\multicolumn{1}{c}{\textbf{News}} & \multicolumn{1}{c}{\textbf{MIMIC}} \\
    		& \multicolumn{1}{c}{$\sqrt{\text{DPE}}$} & \multicolumn{1}{c}{$\sqrt{\text{DPE}}$} &  \multicolumn{1}{c}{$\sqrt{\text{DPE}}$} \\
    		\midrule
    		\textbf{SCIGAN} & $\mathbf{0.31} \pm 0.05$ & $\mathbf{4.14} \pm 0.11$ & $\mathbf{0.51} \pm 0.05$ \\
    		\midrule
    		DRNet & $0.51 \pm 0.05^{*} $ & $4.39 \pm {0.11}^{*} $ & $0.52 \pm 0.05$ \\
    		DRN-W &  $0.50 \pm 0.05^{*} $ & $4.21\pm 0.11 $ & $0.53 \pm 0.05$    \\
    		GPS & $1.38 \pm 0.01^{*} $ & $6.40 \pm 0.01^{*} $ & $1.41 \pm 0.12^{*}$    \\
    		\midrule
    		MLP-M & $0.92 \pm 0.05^{*}  $ & $4.94 \pm 0.16^{*} $ & $0.77\pm 0.05^{*}$  \\
    		MLP & $1.04 \pm 0.05^{*}  $ & $5.18 \pm 0.12^{*} $ & $0.80 \pm 0.05^{*}$ \\

    		\bottomrule
    	\end{tabular}
    	\end{small}
    	\caption{Performance of individualized treatment-dose response estimation on three datasets.  Bold indicates the method with the best performance for each dataset. *: performance improvement is statistically significant.}    
    	\label{tab:benchmarks_dpe}
    \end{table*}

\subsection{Varying the number of treatments} \label{app:add_num_treat}
In this experiment, we increase the number of treatments by defining 3 or 6 additional treatments. The parameters $\mathbf{v}^w_1, \mathbf{v}^w_2, \mathbf{v}^w_3$ are defined in exactly the same way as for 3 treatments. The outcome shapes for treatments 4 and 7 are the same as for treatment 1, similarly for 5, 8 and 2 and for 6, 9 and 3. In Table \ref{tab:num_treat} we report MISE, DPE and PE on the TCGA dataset with 6 treatments (TCGA-6) and with 9 treatments (TCGA-9). Note that we use 3 dosage samples for training SCIGAN in this experiment. 

\begin{table*}[h]
        \centering
        \begin{small}
        \begin{tabular}{lcccccc}
        \toprule
        \multirow{2}{*}{Method}&\multicolumn{3}{c}{\textbf{TCGA - 6}} & \multicolumn{3}{c}{\textbf{TCGA - 9}} \\
        & $\sqrt{\text{MISE}}$ & $\sqrt{\text{DPE}}$ & $\sqrt{\text{PE}}$ & $\sqrt{\text{MISE}}$ & $\sqrt{\text{DPE}}$ & $\sqrt{\text{PE}}$ \\
        \midrule
         SCIGAN & $2.37 \pm 0.12$ & $0.43 \pm 0.05$ & $0.32 \pm 0.05$ & $2.79 \pm 0.05$ & $0.51 \pm 0.05$ & $0.54 \pm 0.05$\\
        \midrule
        {DRNET} & $4.09 \pm 0.16$ & $0.52 \pm 0.05$ &  $0.71 \pm 0.05$ & $4.31 \pm 0.12$ & $0.59 \pm 0.05$ & $0.74 \pm 0.05$\\
        {GPS} & $6.62\pm 0.01$  & $2.04 \pm 0.01$ & $2.61 \pm 0.00$ & $7.58 \pm 0.01$ & $3.14 \pm 0.01$ & $2.91 \pm 0.01$\\
        \bottomrule
        \end{tabular}
        \end{small}
        \caption{Performance of SCIGAN and the benchmarks when we increase the number of treatments in the dataset to 6 and 9. Bold indicates the method with the best performance for each dataset.}
        \label{tab:num_treat}
    \end{table*}
    
\subsection{Sample efficiency}
We have also performed a further experiment to evaluate model performance in terms of sample efficency. For the MIMIC dataset, in Table 1 we report evaluation metrics for training SCIGAN with different number of training samples $N$ and evaluating on the same test set.

\begin{table*}[h]
        \centering
        \setlength\tabcolsep{1.4pt}
        \begin{tabular}{lccc}
        \toprule
        & $\sqrt{\text{MISE}}$ & $\sqrt{\text{DPE}}$ & $\sqrt{\text{PE}}$ \\
        \midrule
        $N=100$ & $31.12 \pm 63.39$ & $7.72 
        \pm 2.57$ & $18.94 \pm 29.07$ \\
        %\cmidrule{1-4}
         $N=500$ & $13.36 \pm 10.46$ & $4.07
         \pm 1.92$ & $2.63 \pm 0.94$ \\
        %\cmidrule{1-4}
        $N=1000$ & $3.80 \pm 1.04$ & $2.46 \pm 1.75$ & $1.03 \pm 1.13$ \\
        %\cmidrule{1-4}
        $N=1500$ & $2.95 
        \pm 0.37$ & $0.70 \pm 0.17$ & $0.63 \pm 0.12$ \\
        %\cmidrule{1-4}
        $N=1920$ & $2.09 \pm 0.12$ & $0.51 \pm 0.05$ &  $0.32 \pm 0.05$  \\
        \bottomrule
        \end{tabular}
        \caption{Sample efficency analysis for MIMIC. Metrics are reported as Mean $\pm$ Std.
        }
        \label{tab:sample_efficency}
\end{table*}
    
\clearpage

\subsection{Discrete dosage set-up and comparison with GANITE} \label{app:disc_dos}

In this set-up, we use the TCGA dataset and treatments 2 and 3 from Table \ref{tab:data_simultation} with dose-response curves $f_2(\mathbf{x}, d)$ and $f_3(\mathbf{x}, d)$ respectively. Let $\beta$ be the number of discrete dosages for which we want to generate data. We chose $\beta$ equally spaced points in the interval $[0, 1]$ as our set of discrete dosages: $\Delta = \{\frac{k}{\beta-1}\}_{k=0}^{\beta-1}$. To create factual dosages for our dataset, we sample the dosages as before $d_w \mid x \sim \text{Beta}(\alpha, \beta_w)$ (as described in Appendix 
\ref{app:dosass}), and choose the closest discrete dosage from the set $\Delta$. 

To evaluate SCIGAN in this setting, we maintain the same architecture for the multi-task generator and hierarchical discriminator. The only difference is that we now randomly sample dosages for the SCIGAN discriminator from $\Delta$. 

We adopt the GANITE implementation proposed by \cite{yoon2018ganite}. To be able to have a fair comparison with SCIGAN we also use a multi-task architecture for the GANITE generator and we give as input to each multitask head the dosage parameter. The GANITE generator will generate outcomes for all possible discrete dosages in $\Delta$ and these will be passed to the GANITE discriminator to distinguish the factual one. For the GANITE generator we use a similar architecture to the SCIGAN generator with 2 hidden layers for each multitask head and 64 neurons in each layer. The GANITE discriminator consists of 2 fully connected layers with 64 neurons in each. We also set $\lambda = 1$. In addition, to maintain a similar set-up to SCIGAN, we train an inference network to learn the counterfactual outcomes with data from the GANITE generator. The inference network has the same architecture as the GANITE generator.

\begin{figure*}[h]
    \centering
    \subfloat[$\sqrt{\text{DPE}}$]{\includegraphics[height=4.5cm]{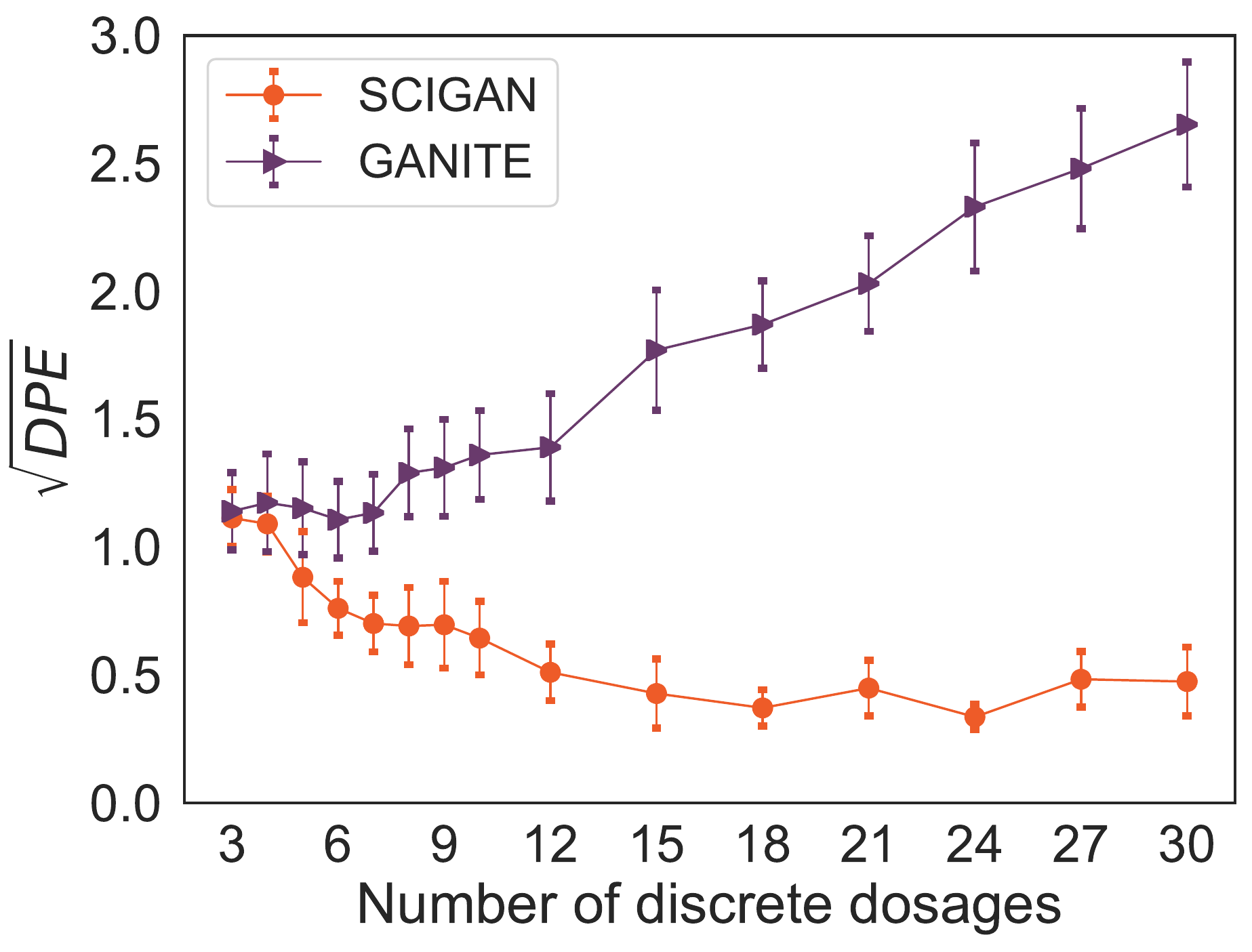} }
    \quad
    \subfloat[$\sqrt{\text{PE}}$]{\includegraphics[height=4.5cm]{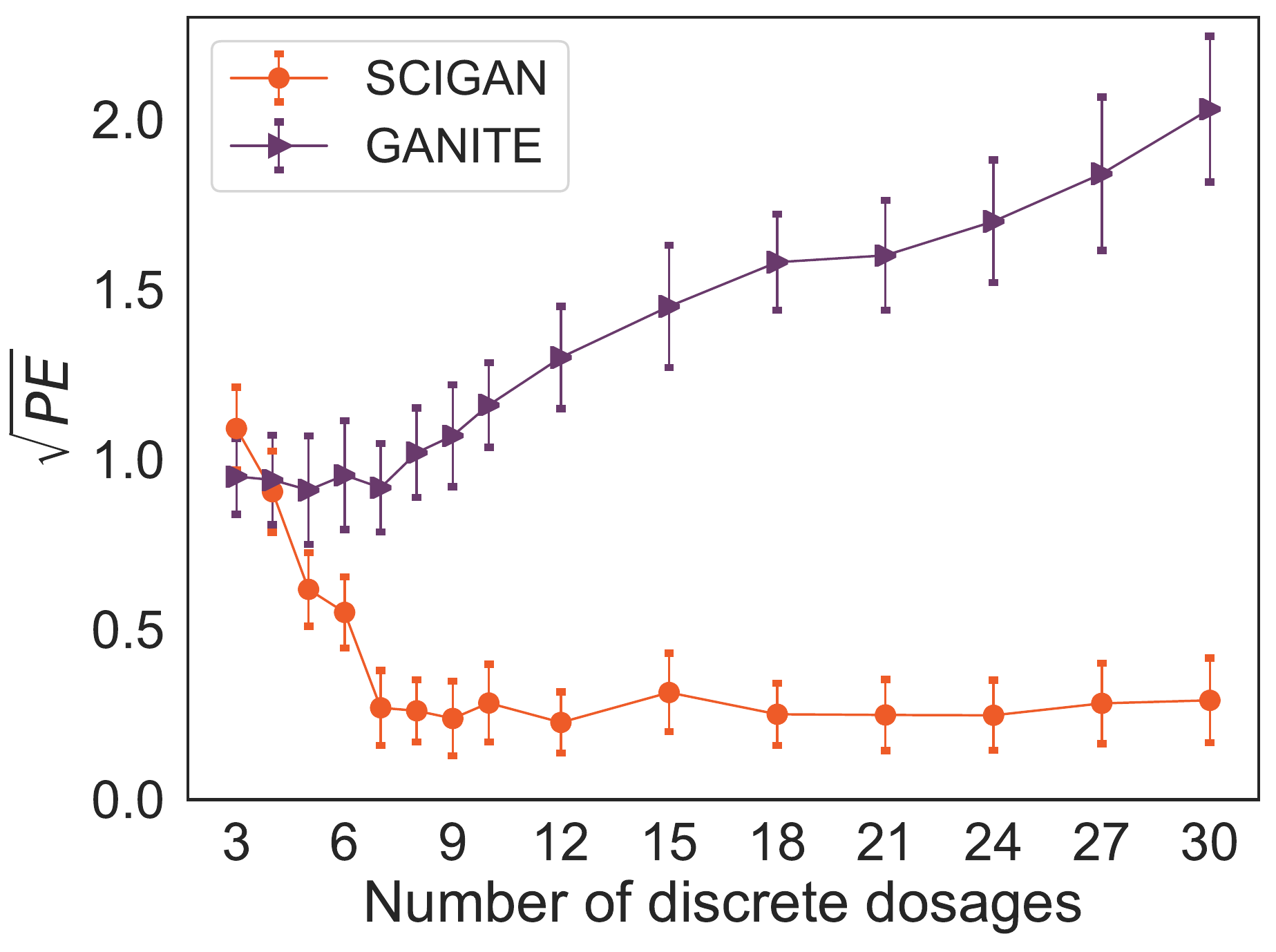}}
    \caption{Comparison between SCIGAN and GANITE in the discrete dosage set-up.}
    \label{fig:dpe_pe_discrete_tcga}
\end{figure*}

We clearly see from Fig. \ref{fig:dpe_pe_discrete_tcga} that SCIGAN achieves a similar performance to GANITE for a small number of dosages ($<6$) but then significantly outperforms GANITE for more dosages than $6$. In fact, we see that while GANITE's performance degrades with an increasing number of dosages, SCIGAN's improves and then stabilises at around 12 dosages. This is due to the fact that the single discriminator in GANITE simply cannot handle a large number of dosages. Our hierarchical model, however, can. The worse performance of SCIGAN for the lower dosages can be attributed to the fact that for such few dosages (e.g. 3 dosages corresponds to only 6 total different interventions), the SCIGAN architecture is overly complex, and the sub-sampling of dosages for the discriminator is not actually necessary.

\clearpage

\subsection{Mixing dosage and no-dosage treatment options}

We also evaluate the case when one of the treatments does not have a dosage parameter. For this experiment we generate data for the treatment that has a dosage parameter $d$ using $f_3(\mathbf{x}, d)$ and for the treatment without an associated dosage using $2C(\mathbf{v}_0^{T}\mathbf{x})$, where $\mathbf{v}_0$ are parameters, $\mathbf{x}$ are patient features and $C=$ is the scaling parameter. This set-up also corresponds to the scenario where we want to compare giving a treatment with a dosage and not giving any treatment.

SCIGAN can be easily extended to incorporate an additional treatment that does not come with a dosage parameter. Such treatments will not need a dosage discriminator but will be passed to the treatment discriminator. A head can be added to the generator for each such non-dosage treatment but will not need to take dosage as an input.

As the DRNet public implementation does not allow for this set-up, we compared SCIGAN with the multilayer perceptron model with multitask heads (MLP-M). This model is trained using supervised learning to minimize error on the factual outcomes and consists of two multitask heads: one head for the treatment option which receives as input the dosage and estimates the dose-response curve and one head for the no-treatment option.

As can be seen in Table \ref{tab:mixing}, SCIGAN is capable of handling this setting and lends itself naturally to potentially mixed dosage and no-dosage treatment options.

\begin{table*}[h]
    	\centering
    	\begin{small}
    	\setlength\tabcolsep{2.2pt}
    	\begin{adjustbox}{max width=\textwidth}
    	\begin{tabular}{lccccccccc}
    		\toprule
    		\multirow{2}{*}{\textbf{Method}} & \multicolumn{3}{c}{\textbf{TCGA}} &
    		\multicolumn{3}{c}{\textbf{News}} & \multicolumn{3}{c}{\textbf{MIMIC}} \\
    		& \multicolumn{1}{c}{$\sqrt{\text{MISE}}$} &
    		\multicolumn{1}{c}{$\sqrt{\text{DPE}}$} &
    		\multicolumn{1}{c}{$\sqrt{\text{PE}}$} & \multicolumn{1}{c}{$\sqrt{\text{MISE}}$} &
    		\multicolumn{1}{c}{$\sqrt{\text{DPE}}$} &\multicolumn{1}{c}{$\sqrt{\text{PE}}$} & \multicolumn{1}{c}{$\sqrt{\text{MISE}}$} &
    		\multicolumn{1}{c}{$\sqrt{\text{DPE}}$} &\multicolumn{1}{c}{$\sqrt{\text{PE}}$} \\
    		\midrule
        	SCIGAN & $\textbf{1.28}\pm 0.09$ &  $\textbf{1.37}\pm 0.07$ & $\textbf{1.56}\pm 0.06$ & $\textbf{3.18} \pm 0.15$ & $\textbf{2.04}\pm 0.09$ & $\textbf{2.49} \pm 0.12$ &  $\textbf{0.61}\pm 0.08$ &  $\textbf{1.82} \pm 0.02$ &  $\textbf{1.89} \pm 0.03$ \\
    		\midrule
    		MLP-M & $2.08 \pm 0.12$ & $1.85\pm 0.16$ & $2.02 \pm 0.07$ & $4.68\pm 0.11$ & $2.45 \pm 0.08$  & $2.64 \pm 0.08$ & $1.56 \pm 0.08$ & $2.04 \pm 0.03$ & $2.14\pm 0.5$ \\
    		\bottomrule
    	\end{tabular}
    	\end{adjustbox}
    	\end{small}
    	\caption{Performance of individualized treatment-dose response when mixing treatment with no-treatment options.  Bold indicates the method with the best performance for each dataset.}
    	\vspace{-0.5cm}
    	\label{tab:mixing}
    \end{table*}

\subsection{Additional results on selection bias} \label{app:add_bias}
In Fig. \ref{fig:selection_bias_dpepe} we report the DPE for our treatment and dosage bias experiment from Section \ref{sec:bias} of the main paper.

\begin{figure}[H]
    \centering
    \subfloat[Treatment selection bias]{\includegraphics[height=4.5cm]{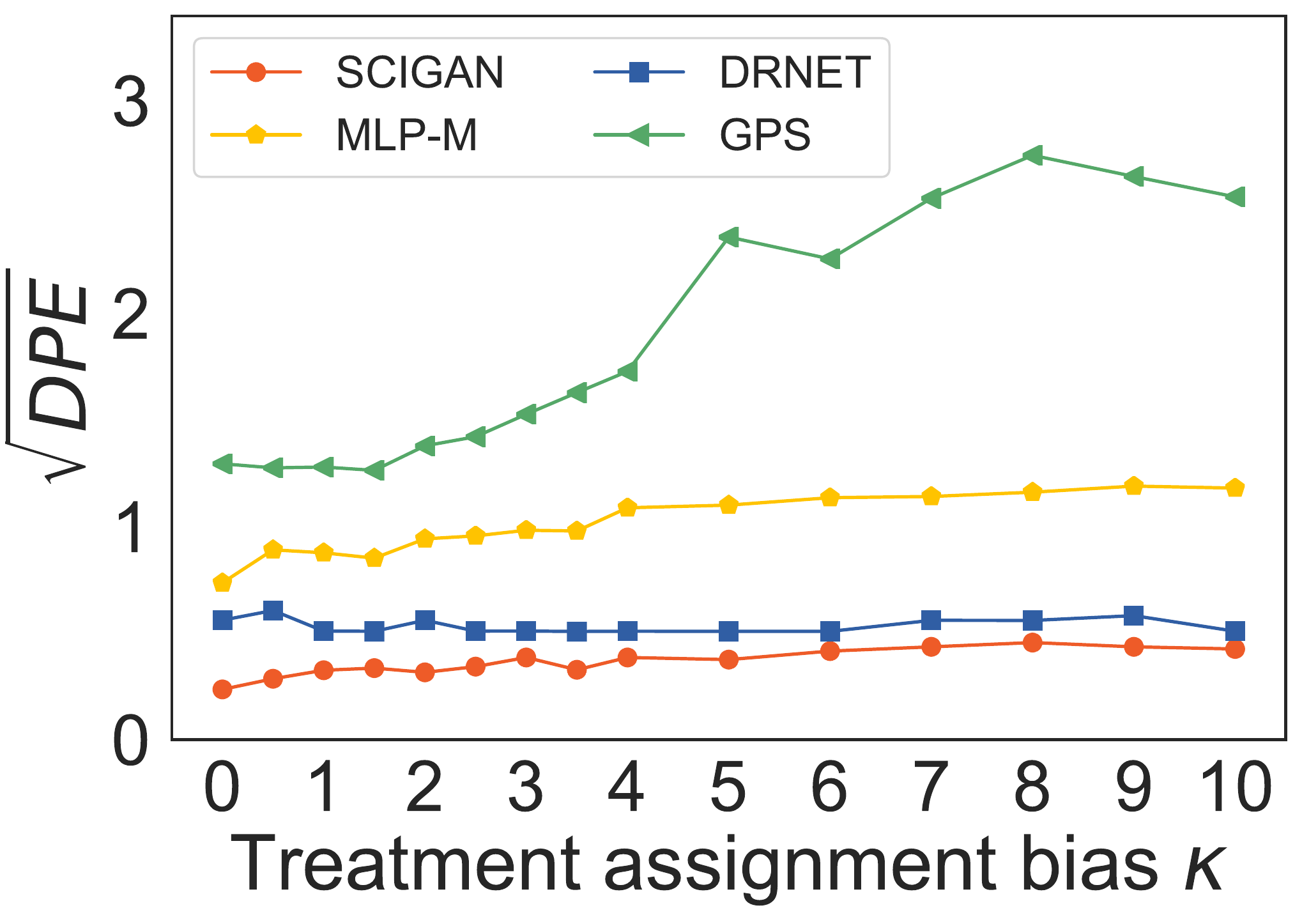} }
    \quad
    \subfloat[Dosage selection bias]{\includegraphics[height=4.5cm]{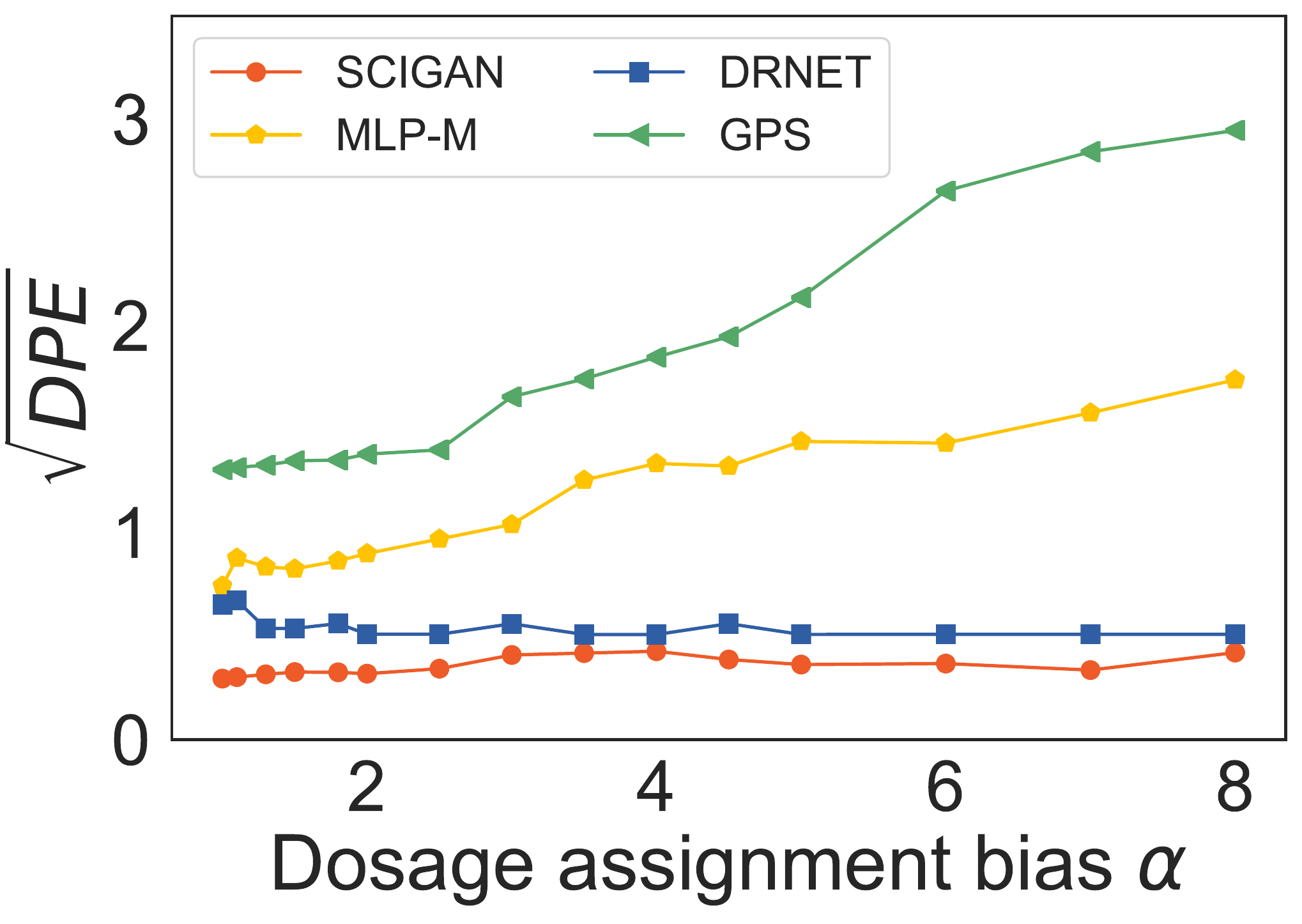}}
    \caption{Additional performance metrics of the 4 methods on datasets with varying bias levels on TCGA dataset}
    \label{fig:selection_bias_dpepe}
\end{figure}

\end{document}